\documentclass{article}
\usepackage[margin=0.7in]{geometry}
\usepackage{amsmath, amssymb, amsthm}
\usepackage{natbib}
\usepackage{algorithm}
 \usepackage{hyperref}
 \usepackage{xurl}
\usepackage{algpseudocode}
\usepackage{enumitem}
\usepackage{graphicx}
\usepackage{tikz}
\usetikzlibrary{decorations.pathreplacing, arrows.meta, calc}

\usepackage{amsfonts}
\usepackage{times}
\usepackage{latexsym}
\usepackage{amssymb}
\usepackage{amsmath}
\usepackage{verbatim}
\newcommand{\R}{\mathbb{R}}

\def\bb0{{\mathbb{0}}}

\def\bb{{\mathbf{b}}}

\def\b0{{\mathbf{0}}}
\def\opt{\mathsf{OPT}}

\def\bX{{\mathbf{X}}}

\def\b1{{\mathbf{1}}}

\def\bbR{{\mathbb{R}}}

\def\cA{\mathcal{A}}
\def\cB{\mathcal{B}}
\def\cC{\mathcal{C}}

\def\cF{\mathcal{F}}
\def\cG{\mathcal{G}}

\def\cR{\mathcal{R}}

\def\cX{\mathcal{X}}

\def\sfT{\mathsf{T}}

\def\sfp{{\mathsf{p}}}

\def\sf0{{\mathsf{0}}}

\def\nn{\nonumber}

\newtheorem{theorem}{Theorem}
\newtheorem{definition}[theorem]{Definition}
\newtheorem{lemma}{Lemma}
\newtheorem{rem}{Remark}

\begin{document}
\title{Convex Optimization with Nested Evolving Feasible Sets}
\author{
Karthick Krishna M., Haricharan Balasundaram, and Rahul Vaze
}
\maketitle              
\begin{abstract}
\emph{Convex Optimization with Nested Evolving Feasible Sets (CONES)} is considered where the objective function \(f\) remains fixed but the feasible region evolves over time as a nested sequence \(S_1 \supseteq S_2 \supseteq \cdots \supseteq S_T\). The goal of an online algorithm is to simultaneously minimize the regret with respect to hindsight static optimal benchmark and the total movement cost $M_\cA(T)$ while ensuring feasibility at all times.
CONES is an optimization-oriented generalization of the well-known \emph{nested convex body chasing} (NCBC). 
When the loss function is convex, we propose a lazy-algorithm and show that it achieves $O(T^{1-\beta}), O(T^\beta)$ simultaneous regret and movement cost for any $\beta \in (0,1]$,  over a time horizon of $T$. 
When the loss function is strongly convex,  we propose a \textsc{Frugal} algorithm that simultaneously achieves zero regret and a movement cost of $O(\log T)$. To complement this, we show that any online algorithm with $o(T)$ regret has a movement cost of $\Omega\left(\sqrt{\frac{\log{T}}{\log \log T}}\right)$.
\end{abstract}
\section{Introduction}

Constrained convex optimization of the form
$\min_{x \in S} f(x),$
is a fundamental paradigm in optimization and learning, capturing a wide range of problems across operations research, control, machine learning, and economics. The formulation is both expressive and tractable: convexity of the objective ensures global optimality, while the feasible set \(S\) encodes system limitations, resource constraints, or structural requirements. As a result, this framework serves as a universal abstraction for modeling decision-making under constraints.

In this paper, we introduce and study \emph{Convex Optimization with Nested Evolving Feasible Sets (CONES)}, where the loss function \(f\) remains fixed but  the feasible region evolves over time as a nested sequence \(S_1 \supseteq S_2 \supseteq \cdots \supseteq S_T\). At each time step \(t\), the decision maker must select a point \(x_t \in S_t\), trading off loss function value $f(x_t)$ and movement across iterates $||x_t-x_{t-1}||$. This model generalizes classical constrained convex optimization by allowing the feasible region to change over time, reflecting progressively revealed or tightening constraints. Beyond its conceptual generality, CONES captures a variety of important applications, which we discuss in detail in Section~\ref{sec:applications}.

To be precise, with CONES,  a convex (loss) function $f: \cX \rightarrow \bbR$,  is known beforehand, and 
at each round $t$,  a  convex set $S_t \subseteq \cX $ is revealed such that 
$S_t\subseteq S_{t-1}$, and  $S_0=\cX \subset \bbR^d$ is a convex, compact, and bounded set. Once $S_t$ is 
revealed, the objective for an online algorithm $\cA$ is to choose action $x_t \in S_t$ (feasible action) so as to {\bf simultaneously} minimize the 
loss function cost 
\begin{equation}\label{defn:optcost}
C_\cA(T) = \sum_{t=1}^T f(x_t),
\end{equation}
and total movement cost 
\begin{equation}\label{defn:mcost}
M_\cA(T) =   \sum_{t=1}^T  ||x_t - x_{t-1}||,
\end{equation}
where 
 $x_0 \in \cX$ is some fixed action. All norms considered in the paper are $2$-norms. Discussion on extension for general norms can be found in Section \ref{sec:norm}.

The performance of $\cA$ is compared against a {\it static} optimal benchmark $\opt$ that chooses its action $x^\opt \in S_T$ that  minimizes the loss function cost, i.e., $x^\opt = \arg  \min_{x\in S_T} f(x)$ and  
$C_\opt(T) = T f(x^\opt)$. Since $S_t$'s are nested, $x^\opt$ is feasible with respect to 
all $S_t's$.\footnote{$T$ is the largest $t$ such that $S_t\ne \emptyset$.} 
The movement cost of $\opt$ is simply 
$M_\opt(T) = ||x^\opt-x_0||.$
The {\bf regret} of $\cA$ is then defined as 
\begin{equation}\label{defn:statregret}
\mathrm{Regret}_\cA(T)=\sup_{f, S_t} \{C_\cA(T) - C_\opt(T)\}.
\end{equation} 
For $\cA$ its two objectives are: minimize $\mathrm{Regret}_\cA(T)$ and movement cost $M_\cA(T)$ simultaneously.\footnote{We are not subtracting movement cost of $\opt$ since it is simply $||x^\opt-x_0||$.} 
The quest is to design an algorithm that achieves zero regret with optimal sub-linear movement cost; failing that, characterize the optimal Pareto-efficient tradeoff between regret and movement cost.

\begin{rem} While the static optimal benchmark is weaker than a dynamic benchmark that allows time-varying actions $x_t^\opt$, we focus on the static setting in this work as a first step in understanding the CONES framework, which itself poses significant challenges. 
\end{rem}

\subsection{Related Work} 
{\bf Nested convex body chasing (NCBC)} CONES with $f\equiv 0$ is known as NCBC, where the only objective is to minimize the movement cost. Starting with \cite{linial}, algorithm design for NCBC has been considered for  minimizing the competitive ratio, the ratio of $M_\cA(T)$ \eqref{defn:mcost} and $M_\opt(T)$ (as defined above),  maximized over the input $\{S_t\}_{t=1}^T$. 

With NCBC, the competitive ratio is known to be $\Omega(\sqrt{d})$ for any online algorithm \cite{linial}. Early progress on deriving upper bounds on competitive ratio for NCBC started with \cite{bansal2017nestedconvexbodieschaseable} that proposed an algorithm with competitive ratio that is exponential in $d$. A greedy algorithm that simply chooses the nearest point on the newly revealed set $S_t$ has a competitive ratio of $O(d^{d/2})$ \cite{argue2018nearlylinearboundchasingnested}. 
Thereafter in major breakthroughs, recursive {\it centroid} algorithm \cite{argue2018nearlylinearboundchasingnested} was shown to have a  competitive ratio of $O(d\log d)$, which was improved in \cite{bubeck2021chasingnestedconvexbodies} to $O(\min\{d, \sqrt{d\log T}\})$ using an algorithm that moves to the {\it Steiner} points of $S_t$'s.
 The intuition behind these algorithms is to move to a point well inside the revealed body so that the future movement cost is large  only if the sets revealed in the future  shrink sufficiently. 
The non-nested version of {\it convex body chasing} (CBC) has also been studied with the best known bounds on competitive ratio being $O(\min\{d, \sqrt{d\log T}\})$ \cite{guptatang, Sellke2023}.

{\bf Online Convex Optimization with Movement Cost} The joint convex loss function and movement cost minimization has been studied in prior work
by taking a linear combination of them. In particular, let at time $t$, a convex function $f_t$ ($f_t$ is allowed to change with time) is revealed and an online algorithm $\cA$ can choose its action  $x_t$ at time $t$ knowing $f_1, \dots, f_t$ so as to minimize 
\begin{equation}\label{eq:lincost}
\sum_{t=1}^T f_t(x_t) + ||x_t-x_{t-1}||^\sfp
\end{equation}
where both $\sfp=1,2$ have been well-studied. The performance metric for \eqref{eq:lincost} is the competitive ratio.
Most of the prior work \cite{bansal20152, Sellke2023, chen2018smoothed, goel2019beyond, zhang2021revisiting} on deriving competitive ratio guarantees for problem \eqref{eq:lincost} allowed $x_t\in \bbR^d$, except \cite{argue2020dimension}, where $x_t\in S\subset \bbR^d$, but $S$ remains fixed for all $t=1,\dots, T$. 

The analogue of \eqref{eq:lincost} for our CONES  setting is to solve \eqref{eq:lincost} under the constraint that $x_t\in S_t$, where $S_t$'s are  (nested) convex sets that are changing over time, which we refer to as {\it dynamic constraints}. 

{\bf Constrained Online Convex Optimization (COCO)}
With COCO, an algorithm $\cA$ has to choose actions $x_t$ at time $t$ {\bf before} a convex loss function $f_t$ and a convex set $S_t, S_t\subseteq S_{t-1}$ are revealed. With COCO, once $x_t$ has been chosen, an adversary reveals $f_t$ and $S_t$, and the objective for an online algorithm $\cA$ is to simultaneously minimize the  regret 
$\mathrm{Regret}^{COCO}_\cA(T)=\sup_{f_t, S_t} \left\{\sum_t f_t(x_t) - \min_{x\in S_T}\sum_t f_t(x)\right\},$
and total constraint violation 
$\mathrm{Violation}_\cA(T)=\sup_{f_t, S_t} \left\{\sum_t \text{dist}(x_t, S_t)\right\},$
 where $\text{dist}(x_t, S_t)$ is the distance between $x_t$ and $S_t$, and $\opt$ chooses a single action $x^\star\in S_T$ that minimizes $\sum_t f_t(x)$ with no violation, i.e. $x^\star\in S_T$.
 
COCO has tradeoffs similar  to CONES since minimizing $\mathrm{Regret}^{COCO}_\cA(T)$ and $\mathrm{Violation}_\cA(T)$ are at odds with each other. COCO is a very well-studied object \cite{yu2017online, pmlr-v70-sun17a, yi2023distributed, neely2017online, georgios-cautious, guo2022online, Sinha2024,Vaze2025b}.
The current best known results \cite{Sinha2024} on COCO  achieve simultaneous $\mathrm{Regret}^{COCO}_\cA(T)= O(\sqrt{T})$  and $\mathrm{Violation}_\cA(T) =O(\sqrt{T})$ using an algorithm that follows online gradient descent on a weighted combination of the most recent loss function  and constraint violation. For special cases of $S_t$'s, e.g. when $S_t$'s are spheres or axis-aligned polygons, $\mathrm{Regret}^{COCO}_\cA(T)=O(\sqrt{T})$  and $\mathrm{Violation}_\cA(T)=O(1)$ have been shown to be achievable in \cite{Vaze2025a} where the algorithm takes a gradient descent (GD) step with respect to the most recently revealed function $f_{t-1}$ followed by a projection on to the most recent constraint set $S_{t-1}$.   The only known lower bound \cite{Sinha2024} on the simultaneous  regret and total constraint violation for COCO is 
$\Omega(\sqrt{d}), \Omega(\sqrt{d})$, respectively,

 COCO and  CONES, however, are principally different theoretical objects because of 
 i) information structure, i.e. when $f$ and $S_t$'s are revealed, ii) feasibility constraint, and iii) since movement cost and total constraint violation are fundamentally different. 
 
 CONES is  also similar to some of the classical optimization paradigms, e.g. perturbation analysis of optimization problems  \cite{bonnans2000perturbation}, that we discuss in detail in Section \ref{sec:classicalpriorwork}.
\subsection{Our Contributions}
\begin{enumerate}[leftmargin=*,nosep]
\item When $f$ is {\bf strongly convex}, we show that the greedy algorithm $\cG$ that chooses $x_t=x_t^\star$ for all $t$ has minimum regret among all online algorithms but incurs a movement cost of $\Theta(\sqrt{T})$. Next, we 
propose a more `lazy' algorithm compared to $\cG$ called \textsc{Frugal}  that plays action  $x_t= x_t^\star$ only when it is forced to do so in order to keep its regret non-positive and otherwise plays the action obtained by projecting its current action $x_{t-1}$
on to the most recently revealed set $S_t$. We show that the regret of \textsc{Frugal} is non-positive and its movement cost is $O(\log T)$, a fundamental improvement over $\cG$ when $f$ is also smooth. To complement this upper bound, we also show that that any online algorithm with non-positive or sub-linear regret has movement cost $\Omega\left(\sqrt{\frac{\log{T}}{\log \log T}}\right)$.


\item When $f$ is {\bf convex}, we show a negative result that $\cG$ has a movement cost of $\Omega(T)$. Next, we propose a more conservative algorithm than $\cG$ that achieves a (regret, movement cost) tuple of $(T^{1-\beta}, T^{\beta})$ for any $\beta \in [0,1]$. Finally, for a minimally structured input, we show that a modified \textsc{Frugal} algorithm achieves $O(1)$ regret and $O(\log^2 T)$ movement cost.


\item For solving problem \eqref{eq:lincost} with dynamic constraints when $f_t$'s are convex, we derive a simple algorithm with competitive ratio of $5$ for $d=1$ in  Section \ref{sec:5comp}. When $f$ is $\alpha$-sharp, we show that the greedy algorithm $\cG$ achieves  competitive ratio of $\frac{12}{\alpha}$ for all $d$.
\end{enumerate}

\vspace{-0.2in}
\section{Assumptions and Notations}
\vspace{-0.1in}
We use $\Pi_{S}(x)$ to denote the projection of point $x$ on to convex set $S$. $\cB(x, r)$ denotes a ball of radius $r$ centered at $x$. Without loss of generality, we let $f(x)\ge 0, \ \forall \ x\in \cX$.
$\mathbf{X}_t^* := \arg\min_{x\in S_t}f(x)$, $v_t = \min_{x\in S_t}f(x)$. When the optimizer is unique, $x_t^\star= \arg \min_{x\in S_t} f(x)$ and $v_t=f(x_t^\star)$.
\textbf{Assumption 1 (Convexity)} 
$\mathcal{X} \subset \mathbb{R}^d$ is the admissible set that is closed, convex and has a finite Euclidean diameter $D$, and $S_1 \subseteq \cX$. The cost function $f : \mathcal{X} \mapsto \mathbb{R}$ is convex.
\textbf{Assumption 2 (Lipschitzness)}
The cost function $f$ is assumed to be $G$-Lipschitz on $\cX$.
\section{Basic Tradeoff with CONES}
With CONES, minimizing regret \eqref{defn:statregret} and movement cost \eqref{defn:mcost} are inherently at odds with each other. To see this clearly, 
as soon as set $S_t$ is revealed at time $t$, the action that minimizes regret is to choose $x_t^\star = \arg \min_{x\in S_t} f(x)$ but that incurs a cost of $||x_t^\star - x_{t-1}||$, which could be large. Alternatively, an algorithm can move `lazily' to the nearest point in $S_t$, i.e. $x_t = \Pi_{S_t}(x_{t-1})$ but that does not allow any control over minimizing regret. Thus, an optimal algorithm for CONES has to find a right balance between playing sub-optimal actions and incurring large movement cost. 
In the rest of the paper, we study CONES \eqref{defn:mcost} and \eqref{defn:statregret} when $f$ is strongly convex, $\alpha$-sharp and just convex.

\section{$f$ is Strongly  Convex}\label{sec:strongconvex}
We begin by characterizing the exact movement cost of the greedy algorithm $\cG$ that chooses $x_t=x_t^\star$ for all $t$. 
By its very definition, the greedy algorithm $\cG$ has the least  regret (also non-positive) formally written as follows.
\begin{theorem}\label{lemma:neg_reg}
    $\mathrm{Regret}_\cG(T)\le \mathrm{Regret}_\cA(T) $ for any $\cA$. Moreover, $\mathrm{Regret}_\cG(T)\le 0$.
\end{theorem}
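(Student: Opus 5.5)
The argument is a pointwise comparison, made per instance and then carried through the supremum in \eqref{defn:statregret}. Fix any $f$ and any nested sequence $S_1\supseteq\cdots\supseteq S_T$. For every $t$, any feasible algorithm $\cA$ plays $x_t\in S_t$, so $f(x_t)\ge \min_{x\in S_t} f(x) = v_t = f(x_t^\star)$. Here $x_t^\star$ is the (unique, by strong convexity) minimizer played by $\cG$. Existence of the minimum follows because $S_t$ is closed in the compact $\cX$ and $f$ is continuous, being Lipschitz by Assumption 2; I will take the revealed sets to be closed, as the paper implicitly does.

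Summing over $t$ gives $C_\cG(T)=\sum_t v_t \le C_\cA(T)$. Subtracting the common benchmark $C_\opt(T)=Tf(x^\opt)$, which depends only on the instance and not on the algorithm, yields
\[
C_\cG(T)-C_\opt(T)\le C_\cA(T)-C_\opt(T)
\]
for every instance. Taking the supremum over $(f,\{S_t\})$ on both sides preserves the inequality, so $\mathrm{Regret}_\cG(T)\le \mathrm{Regret}_\cA(T)$.

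For non-positivity, I use nestedness. Since $x^\opt\in S_T\subseteq S_t$ for every $t\le T$, the point $x^\opt$ is feasible at time $t$, hence $v_t\le f(x^\opt)$. Summing over $t$ gives $C_\cG(T)\le Tf(x^\opt)=C_\opt(T)$ on every instance, and taking the supremum gives $\mathrm{Regret}_\cG(T)\le 0$.

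There is no genuine obstacle here. The only points needing care are that the minima $v_t$ are attained, which follows from compactness and continuity as above, and that the supremum is taken instance-wise with the same benchmark on both sides.
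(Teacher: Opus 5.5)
Your proof is correct and is essentially the paper's own reasoning. The paper gives no separate proof and asserts the result holds ``by its very definition'' of $\cG$; your instance-wise comparison ($v_t \le f(x_t)$ for any feasible $x_t$, and $v_t \le f(x^\opt)$ because $x^\opt \in S_T \subseteq S_t$), followed by taking the supremum with the same benchmark $C_\opt(T)$ on both sides, is exactly the precise form of that argument.
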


\begin{theorem}\label{thm:lbGreedy} {\bf Lower Bound on the Movement Cost of $\cG$:}
   With strongly convex $f$,  there exists an input instance for which $M_\cG(T) = \Omega(\sqrt{T})$.
\end{theorem}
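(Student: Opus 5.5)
We will construct an explicit instance in $d=2$ with the strongly convex function $f(x)=\|x-p\|^2$, where $p$ is a point outside $S_1$. With this choice the greedy iterate $x_t^\star$ is the Euclidean projection $\Pi_{S_t}(p)$. The goal is nested sets whose projections of $p$ zig-zag, so that the total path length grows like $\sqrt{T}$ even though everything lives in a bounded set $\cX$ of diameter $D$.

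\textbf{Construction.} Take $p=(0,1)$ and let $\cX$ be the unit disk centred at the origin, or any compact convex set containing the construction. We build the sets by cutting with half-planes, $S_t=S_{t-1}\cap H_t$. The half-plane $H_t$ is chosen to cut off the current projection $x_{t-1}^\star$ together with a small cap around it. The new projection of $p$ then lands on the new boundary line, and the cut is tilted so that this landing point lies on the opposite side of the vertical axis. Concretely, the sets will be polygons $S_t$ approximating a shrinking region near the bottom of the disk. Each cut lowers the optimal value $v_t$ by only a small amount, but moves the minimizer horizontally by a distance $\delta_t$.

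\textbf{Accounting.} We will use the potential $v_t=\operatorname{dist}(p,S_t)^2$, which is nondecreasing and bounded by $D^2$. The key estimate is that a cut which moves the projection laterally by $\delta$, while keeping it on the boundary of a set with the same distance scale $r\approx 1$, can be arranged so that the distance to $p$ increases only by $O(\delta^2)$. This follows from the second-order geometry of the sphere around $p$: points at lateral offset $\delta$ from the closest point of a supporting line are only $O(\delta^2)$ farther from $p$. With $T$ cuts of lateral size $\delta=T^{-1/2}$, the total increase in $\operatorname{dist}(p,S_t)$ is $O(T\delta^2)=O(1)$, so everything fits inside $\cX$. Meanwhile the movement is $\sum_t\|x_t^\star-x_{t-1}^\star\|\ge T\delta=\sqrt{T}$.

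\textbf{Main obstacle.} The delicate step is making the alternating cuts compatible with nestedness. Each new half-plane must exclude the old minimizer but must not exclude any portion that a later projection needs. I would first try to realize the sets as intersections of $\cX$ with half-planes tangent to circles $\partial\cB(p,r_t)$, where $r_t=1+t/T$ increases slowly. Each such half-plane is tangent at a point of angle $\pm\theta$ with $\theta\approx T^{-1/2}$, alternating in sign. The argument then requires checking three things:
\begin{enumerate}
\item the tangent point at step $t$ lies inside all previous half-planes, which needs $r_t-r_{t-1}\gtrsim r\theta^2$, and this holds for the choice $1/T\approx\theta^2$;
\item the tangent point is the projection of $p$ onto $S_t$, which is immediate because $\cB(p,r_t)$ lies in the removed side;
\item consecutive tangent points are about $2r\theta\approx T^{-1/2}$ apart.
\end{enumerate}
Summing item 3 over $t$ gives $M_\cG(T)=\Omega(\sqrt{T})$.
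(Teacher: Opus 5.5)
Your construction is correct and is essentially the paper's: a squared-distance objective with tangent half-plane cuts that make the minimizer zig-zag laterally by $\Theta(T^{-1/2})$ per step, while the distance to the center grows only by $O(1/T)$ per step. The paper instead fixes two vertical lines at separation $k$, takes each new minimizer where the current tangent line meets the other line, and gets $r_i=r_{i-1}+k^2/r_{i-1}$; you fix angles $\pm\theta$ with the schedule $r_t=1+t/T$. Two details to tighten: choosing $\theta\le 1/(2\sqrt{T})$ makes $r_t\cos 2\theta\ge r_{t-1}$ rigorous, and you should take $\cX$ slightly larger than the unit disk (e.g.\ the disk of radius $2$), since for $r_t$ near $2$ the tangent points fall just outside it.
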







\begin{theorem}\label{lem:ubG}
{\bf Upper Bound on the Movement Cost of $\cG$:}
    $M_\cG(T) = O(\sqrt{T})$ for all input instances when $f$ is strongly convex.
\end{theorem}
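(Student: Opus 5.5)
The idea is to tie the movement of $\cG$ to the increase of the optimal values $v_t$. Those values are monotone and bounded, so their total increase is bounded, and Cauchy--Schwarz then turns this into an $O(\sqrt{T})$ bound on total movement.

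Let $f$ be $\mu$-strongly convex. At each $t$ the minimizer $x_t^\star$ over the convex set $S_t$ is unique. Since $S_t\subseteq S_{t-1}$, we have $x_t^\star\in S_{t-1}$. The first-order optimality condition for $x_{t-1}^\star$ over $S_{t-1}$ says $\langle \nabla f(x_{t-1}^\star), y-x_{t-1}^\star\rangle\ge 0$ for all $y\in S_{t-1}$; if $f$ is nonsmooth, use the subgradient appearing in the optimality condition instead. Combining this with strong convexity at $y=x_t^\star$ gives
\[
v_t - v_{t-1} = f(x_t^\star)-f(x_{t-1}^\star) \ge \frac{\mu}{2}\|x_t^\star-x_{t-1}^\star\|^2 .
\]
This is the key per-step inequality, and I expect it to be the only nontrivial step. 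The point needing care is that it uses $x_t^\star\in S_{t-1}$, i.e. nestedness, and not merely the variational inequality over $S_t$.

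Summing the inequality over $t=2,\dots,T$ telescopes:
\[
\sum_{t=2}^T \|x_t^\star-x_{t-1}^\star\|^2 \le \frac{2}{\mu}(v_T-v_1).
\]
To bound the right-hand side, note $v_1\ge \min_{\cX} f$ and $v_T\le \max_{\cX}f$. By $G$-Lipschitzness on a set of diameter $D$ (Assumption 2), this gives $v_T-v_1\le GD$.

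Cauchy--Schwarz then yields
\[
\sum_{t=2}^T\|x_t^\star-x_{t-1}^\star\| \le \sqrt{T}\Big(\sum_{t=2}^T \|x_t^\star-x_{t-1}^\star\|^2\Big)^{1/2}\le \sqrt{2GDT/\mu}.
\]
The first move $\|x_1^\star-x_0\|$ is at most $D$. Hence $M_\cG(T)\le D+\sqrt{2GDT/\mu}=O(\sqrt{T})$, which matches the lower bound of Theorem~\ref{thm:lbGreedy}.

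If Lipschitzness is not to be assumed, the same argument goes through with $G D$ replaced by $\max_{\cX} f-\min_{\cX} f$. This quantity is finite because $f$ is convex and continuous on the compact set $\cX$.
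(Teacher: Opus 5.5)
Your proof is correct and follows the paper's argument: per-step $v_t - v_{t-1} \ge \frac{\mu}{2}\|x_t^\star - x_{t-1}^\star\|^2$ from first-order optimality, strong convexity and nestedness, then telescoping against $GD$, Cauchy--Schwarz, and $D$ for the first move. In your closing aside, continuity has to be assumed rather than derived, since a convex function on a compact convex set need not be continuous, or even bounded, at the boundary; the main argument is unaffected because Assumption 2 supplies Lipschitzness.
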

The proof of Theorem \ref{thm:lbGreedy} is provided in Section \ref{app-lbstrconvxGreedy}, while that of Theorem \ref{lem:ubG} is provided in Section \ref{app:ubstrconvxG}. 
Combining Theorem \ref{thm:lbGreedy} and Lemma \ref{lem:ubG}, the exact movement cost of the greedy algorithm $\cG$ for CONES is $\Theta(\sqrt{T})$, when $f$ is strongly convex. The greedy algorithm $\cG$ by definition has the smallest regret among all possible algorithms, and its regret is  typically significantly negative when its movement cost is $\Theta(\sqrt{T})$. Thus, a natural question at this point is: does there exist an algorithm $\cA$ with regret at most $0$ but with  movement cost of $o(\sqrt{T})$. Next, we answer this in the affirmative by defining algorithm \textsc{Frugal}  that has non-positive regret and movement cost of $O(\log T)$.
\begin{algorithm}[H]
\caption{\textsc{Frugal}}
\begin{algorithmic}[1]
\Require Nested sets $S_1 \supseteq S_2 \supseteq \dots \supseteq S_T$, convex $f$.
\Ensure Action sequence $x_1, \dots, x_T$.
\State $F_0 \gets 0$
\For{$t = 1$ \textbf{to} $T$}
    \State Observe $S_t$
    \State $\mathbf{X}_t^* := \arg\min_{x\in S_t}f(x)$, $v_t \gets \min_{x\in S_t}f(x)$
    \State $\hat{x}_t \gets \Pi_{S_t}(x_{t-1})$ \Comment{Euclidean projection}
     \State $x_{\text{near}, t}^\star \gets \Pi_{\mathbf{X}_t^*}(x_{t-1})$  \Comment{Nearest minimizer}
    \If{$F_{t-1} + f(\hat{x}_t) \le t \cdot v_t$}
        \State $x_t \gets \hat{x}_t$ \Comment{Lazy play}
    \Else
        \State $x_t \gets x_{\text{near}, t}^\star$ \Comment{Forced jump}
    \EndIf
    \State $F_t \gets F_{t-1} + f(x_t)$
\EndFor
\end{algorithmic}
\end{algorithm}
The idea behind \textsc{Frugal} is as follows.
For each time $t\le T$, $C_\opt(T) \ge t \cdot v_t$.
If the current action $x_{t-1}$ is not feasible with respect to the newly revealed set $S_t$, \textsc{Frugal} moves {\it lazily} to the closest point ${\hat x}_t$ (projection) on $S_t$ to ensure feasibility as long as moving there does not make its cumulative regret until time $t$ with respect to $x_t^\star$, positive, i.e. its cumulative cost till time $t$ is at most $t \cdot f(x_t^\star)$, the current lower bound on the cost of the $\opt$. 
If that is not possible, i.e. moving to ${\hat x}_t$ will make the regret positive at time $t$,  it moves to the (closest) local minimizer $x_t^\star$ similar to $\cG$, and accumulates negative regret which can be used in future to save on the movement cost. Thus  \textsc{Frugal} tries to avoid moving to the local minimizers compulsively since that 
can force it to have a large movement cost similar to $\cG$, and does so only when forced so as to keep its regret at most $0$. The computation complexity of \textsc{Frugal} is discussed in Section \ref{sec:comp}.

\begin{rem}\label{rem:budget} Recall that the regret of any online algorithm $\cA$  satisfies
\begin{align*}
\mathrm{Regret}_\cA(1) & = (f(x_1) - v_1), \\
\mathrm{Regret}_\cA(2) & = (f(x_1) + f(x_2) - 2 v_2) = \mathrm{Regret}_\cA(1) + f(x_2) -  v_2 -  (v_2 - v_1),\\
& \vdots \\
\mathrm{Regret}_\cA(t) & = \sum_{\ell=1}^{t-1}\mathrm{Regret}_\cA(\ell) + (f(x_t) -  v_t) - (t-1)(v_t -  v_{t-1}).
\end{align*}
We define the final term $(t-1)(v_t -  v_{t-1})$ as the {\bf budget} of $\cA$ at time $t$.
If an algorithm $\cA$ has ensured $\mathrm{Regret}_\cA(\tau)\le 0$ for $\tau \le t-1$, then the {\it budget} at time $t$ captures the 
total allowance $\cA$ has to choose its action $x_t$ such that it can maintain $\mathrm{Regret}_\cA(t)\le 0$. 
At each time $t$, algorithm \textsc{Frugal} uses all its {\it budget} to play only `lazy' actions.
%
\end{rem}

The following is immediate, whose proof we provide in Section \ref{app:lem:regretBalance} for completeness.
\begin{lemma}\label{lem:regretBalance}  $\mathrm{Regret}_{\textsc{Frugal}}(t)\le 0$ for all $t\le T$.
\end{lemma}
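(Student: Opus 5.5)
We read $\mathrm{Regret}_{\textsc{Frugal}}(t)$ as the running regret $F_t - t\,v_t$, i.e. the cumulative cost of \textsc{Frugal} up to time $t$ minus $t$ times the current optimal value $v_t=\min_{x\in S_t} f(x)$. We prove the stronger invariant $F_t \le t\, v_t$ for every $t\le T$ by induction on $t$. At $t=T$ this gives exactly $C_{\textsc{Frugal}}(T)-C_\opt(T)\le 0$, since $C_\opt(T)=T v_T$. The base case is $F_0=0=0\cdot v_0$, or equivalently one can start the induction at $t=1$ by the same argument.

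For the inductive step, assume $F_{t-1}\le (t-1)v_{t-1}$ and split according to the branch taken by \textsc{Frugal} at time $t$.

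If the lazy branch is taken, the test $F_{t-1}+f(\hat{x}_t)\le t\, v_t$ is precisely $F_t\le t\,v_t$, so there is nothing to prove.

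If the forced branch is taken, then $x_t=x^\star_{\text{near},t}\in \mathbf{X}_t^*$, so $f(x_t)=v_t$. This minimizer set is nonempty because $S_t$ is compact and $f$ is continuous. Hence
\begin{equation*}
F_t = F_{t-1}+v_t \le (t-1)v_{t-1} + v_t \le (t-1)v_t + v_t = t\,v_t .
\end{equation*}
The second inequality uses monotonicity of the optimal values. Because $S_t\subseteq S_{t-1}$, the minimum over the smaller set is at least the minimum over the larger one, so $v_{t-1}\le v_t$. It also uses $t-1\ge 0$.

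This closes the induction. The step that needs care is the forced branch, and the key point there is the monotonicity $v_{t-1}\le v_t$ inherited from nestedness; this is exactly the nonnegative ``budget'' $(t-1)(v_t-v_{t-1})$ of Remark~\ref{rem:budget}. For completeness we also check feasibility: $\hat{x}_t=\Pi_{S_t}(x_{t-1})\in S_t$ and $x^\star_{\text{near},t}\in S_t$, so every action is feasible and the projections are well defined because $S_t$ and $\mathbf{X}_t^*$ are closed and convex.
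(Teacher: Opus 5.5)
Your proposal is correct and follows essentially the same argument as the paper. Both prove the invariant $F_t \le t\,v_t$ by induction: the lazy branch is immediate from the test, and the forced branch uses $f(x_t)=v_t$ together with the monotonicity $v_{t-1}\le v_t$ that comes from nestedness. The only differences are cosmetic: you start the induction at $t=0$ where the paper treats $t=1$ explicitly, and you apply monotonicity inside the forced branch rather than before splitting into cases.
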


Next, we work towards upper bounding the movement cost of algorithm \textsc{Frugal}.

\begin{definition}\label{defn:balancejump} Algorithm \textsc{Frugal} is defined to play a {\bf lazy} action at time $t$ if it selects $x_t= {\hat x}_t$. If it does not play a {\it lazy} action at time $t$, it is defined to {\bf jump} and $x_t= x_{\text{near}, t}^\star$.
\end{definition}

\begin{definition}\label{defn:balancejumptimes}
Let $1 \le t_1 < t_2 < \cdots < t_N \le T$ be the time slots at which \textsc{Frugal}
{\it jumps}, i.e., at time $t_k$, $x_{t_k}=x_{\text{near}, t_k}^\star$. We define {\bf phase} $k$ as the contiguous set of time slot $t_k, t_k+1, \dots, t_{k+1}-1$, where $t_k$ is the {\bf head} of phase $k$.
\end{definition}

The computational complexity and oracle model needed to implement all algorithms in the paper are discussed in Section \ref{sec:comp}.

\subsection{Upper Bound on the Movement Cost of \textsc{Frugal}}\label{sec:ubalphasharp}
\begin{theorem}\label{thm:rsb_sc}
For  $G$-Lipschitz, $L$-smooth, and $\mu$-strongly convex $f$,
$$M_{\textsc{Frugal}}(T) = O\left(d^{d/2}\frac{2G+(L+\mu)D}{\mu} \log T\right)=O(\log T).$$
\end{theorem}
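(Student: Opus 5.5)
We split the movement of \textsc{Frugal} by phases (Definition~\ref{defn:balancejumptimes}). Within phase $k$ the algorithm only plays lazy actions $x_t=\Pi_{S_t}(x_{t-1})$, i.e., it runs the greedy nested-convex-body-chasing algorithm started from $x_{t_k}=x^\star_{t_k}$. We then add the cost of the $N$ jumps. The factor $d^{d/2}$ in the statement comes from the known competitive ratio $O(d^{d/2})$ of this greedy projection rule for NCBC \cite{argue2018nearlylinearboundchasingnested}. So the plan is to show two things. First, the lazy movement inside phase $k$ is at most $O(d^{d/2})$ times the distance from $x_{t_k}$ to $S_{t_{k+1}-1}$, and hence at most $O(d^{d/2})\,\|x^\star_{t_k}-x^\star_{t_{k+1}-1}\|$. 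Second, each jump costs $\|x_{t_{k+1}-1}-x^\star_{t_{k+1}}\|$. Both pieces are controlled by how far the constrained minimizer drifts and by how suboptimal the lazy iterate has become.

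The key quantitative tool is strong convexity on nested sets. Since $x^\star_s\in S_s\subseteq S_t$ for $s\ge t$, the first-order optimality of $x^\star_t$ gives
\[
v_s-v_t\ \ge\ \tfrac{\mu}{2}\|x^\star_s-x^\star_t\|^2 .
\]
Smoothness and Lipschitzness give the reverse-type control. The suboptimality $f(\hat x_t)-v_t$ of a point at distance $r$ from $x^\star_t$ is at most $Gr$ and at most $\tfrac{L}{2}r^2+\langle\nabla f(x_t^\star),\cdot\rangle$. Because $\|\nabla f\|\le G$ and the diameter is $D$, the combination $(2G+(L+\mu)D)/\mu$ should appear when we convert gaps in $f$ into distances.

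Next we use the budget accounting of Remark~\ref{rem:budget}. A jump at $t_{k+1}$ happens only when $F_{t_{k+1}-1}+f(\hat x_{t_{k+1}})>t_{k+1}v_{t_{k+1}}$. Right after the jump at $t_k$, the slack $t_kv_{t_k}-F_{t_k}$ is nonnegative. During the phase, the slack grows by the budgets $(t-1)(v_t-v_{t-1})$ and shrinks by the lazy excesses $f(x_t)-v_t$. So a jump forces
\[
\sum_{t\in\text{phase }k}(f(x_t)-v_t)\ \gtrsim\ t_k\,(v_{t_{k+1}}-v_{t_k}),
\]
meaning the lazy iterates must have been suboptimal by an amount comparable to $t_k$ times the increase in the optimal value. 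We plan to show that this forces the head times to grow geometrically, $t_{k+1}\ge c\,t_k$ for a constant $c>1$ depending on $(2G+(L+\mu)D)/\mu$. The mechanism is that the excess per step is at most $O(G\,\|x_t-x^\star_t\|)$, which by the inequality above is at most $O\big(G\sqrt{(v_{t}-v_{t_k})/\mu}\big)+O(LD\cdot\cdot)$, so a phase of length $\ell$ can exhaust the accumulated slack only if $\ell=\Omega(t_k)$. Geometric growth gives $N=O(\log T)$ phases. Each phase costs $O(d^{d/2})\cdot O(D)$ in lazy movement plus the jump, and the jump is bounded by $O(D)$ or more sharply by $\|x^\star_{t_k}-x^\star_{t_{k+1}}\|$ plus the lazy drift. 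The total is therefore $O(d^{d/2}\tfrac{2G+(L+\mu)D}{\mu}\log T)$.

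The main obstacle is the geometric-growth step. The budget can be replenished by large increases of $v_t$, and the lazy point's excess must be bounded in terms of those same increases uniformly over a phase. We would first try to compare $\hat x_t$ with the projection of $x^\star_{t_k}$ onto $S_t$: its distance to $x^\star_t$ is bounded by $O(\sqrt{(v_t-v_{t_k})/\mu})$ times the greedy NCBC factor, and smoothness then converts this into an excess of $O\big(\tfrac{L+\mu}{\mu}(v_t-v_{t_k})\big)+O\big(G\sqrt{\cdot}\big)$. Summing over the phase and comparing with $t_k(v_{t_{k+1}}-v_{t_k})$ yields the needed length bound. If the square-root term misbehaves, we would fall back on charging per dyadic block $[2^j,2^{j+1})$: show at most $O(d^{d/2}(2G+(L+\mu)D)/\mu)$ movement per block, which directly gives the $\log T$.
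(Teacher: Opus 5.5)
There is a genuine gap, at exactly the step you flag as the main obstacle. You claim $t_{k+1}\ge c\,t_k$ for a constant $c>1$, so that $N=O(\log T)$ phases can each be charged $O(d^{d/2}D)$. The budget argument does not give this, and the claim is false.

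Your decomposition and per-phase estimates are sound. Set $\delta_k=\|x^\star_{t_{k+1}}-x^\star_{t_k}\|$. The lazy part of phase $k$ is $O(d^{d/2})\,\mathrm{dist}(x^\star_{t_k},S_{t_{k+1}-1})\le O(d^{d/2})\,\delta_k$, because $x^\star_{t_{k+1}}\in S_{t_{k+1}-1}$. Since $x^\star_{t_{k+1}}\in S_\tau$ for every $\tau\le t_{k+1}$, non-expansiveness of projection keeps every lazy iterate within $\delta_k$ of $x^\star_{t_{k+1}}$, so the jump costs at most $\delta_k$.

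Now carry out your own mechanism correctly. The jump trigger together with $F_{t_k}\le t_kv_{t_k}$ gives $\sum_{\tau=t_k+1}^{t_{k+1}}\bigl(f(\ell_\tau)-v_{t_{k+1}}\bigr)>t_k\bigl(v_{t_{k+1}}-v_{t_k}\bigr)$, where $\ell_\tau=\Pi_{S_\tau}(x_{\tau-1})$. Each summand is at most $G\delta_k+\tfrac L2\delta_k^2$, by the non-expansiveness above. No NCBC factor is needed in this step; inserting one, as you suggest, would cost an extra $d^{d/2}$. Strong convexity gives $v_{t_{k+1}}-v_{t_k}\ge\tfrac\mu2\delta_k^2$. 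Together these yield only
\[
t_{k+1}>t_k\Bigl(1+\frac{\mu\delta_k}{2G+L\delta_k}\Bigr).
\]
This growth factor tends to $1$ as $\delta_k\to0$, so a phase whose head moves very little can last a single step.

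Concretely, take $f=\tfrac12\|x\|^2$ in $\mathbb{R}^2$. Put the new minimizer at distance $\delta$ along the tangent to the current level set. Cut with its supporting halfspace plus a second halfspace through it, tilted so that the current point projects a distance of order $\delta$ away along a face. Then $v$ rises by only $\delta^2/2$, while the lazy excess is of order $\|x^\star_t\|\,\delta$. Starting from zero slack, this forces a jump at every step of a stretch of length $\Theta(T)$ once $\delta$ is polynomially small in $T$. So $N$ is not $O(\log T)$ in general, and any accounting of the form (number of phases)$\times O(D)$ fails.

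The missing idea is to sum head displacements rather than count phases. Multiply the recurrence over $k$ and use $t_1\ge1$, $t_N\le T$, $\log(1+x)\ge x/(1+x)$ and $\delta_k\le D$. This gives
\[
\sum_{k=1}^{N-1}\delta_k\le\frac{2G+(L+\mu)D}{\mu}\,\log T.
\]
Now charge each phase $O(d^{d/2}\delta_k)$, covering lazy part plus jump, instead of $O(d^{d/2}D)$. Add $O(d^{d/2}D)$ for the phase after $t_N$. This yields the theorem.

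This is the paper's route. The paper bounds the lazy part by confining the phase to $\mathcal{B}(x^\star_{t_k},2\delta_k)$ and applying the self-expanded-curve bound, which is equivalent to your greedy-NCBC bound. Your dyadic-block fallback can also be made to work, but only with this $\delta_k$-dependent recurrence and with per-phase cost proportional to $\delta_k$.
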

One essential ingredient to  prove Theorem \ref{thm:rsb_sc} is a basic result from convex geometry that is described as follows.
Let $K_1, \dots, K_T$ be nested convex sets (i.e., $\bbR^d \supset K_1 \supseteq K_2 \supset K_3 \supseteq \dots \supseteq K_T$). 

\begin{definition}\label{defn:projectioncurve}
If $\sigma_1\in K_1$, and $\sigma_{t+1} = \Pi_{K_{t+1}}(\sigma_t)$, for $t=1, \dots, T-1$. Then the curve 
$${\underline \sigma}= \{(\sigma_1,\sigma_2), (\sigma_2,\sigma_3), \dots, (\sigma_{T-1},\sigma_T)\}$$ is called the projection curve on $K_1, \dots, K_T$.
\end{definition}

\begin{lemma}\label{lem:projdistanceint} Let $\Sigma = \max_{{\underline \sigma}} \sum_{t=1}^{T-1} ||\sigma_t - \sigma_{t+1}||$, then $\Sigma  \le O(d^{d/2} \text{diameter}(K_1)).$
\end{lemma}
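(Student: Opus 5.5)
We will deduce Lemma \ref{lem:projdistanceint} from the known analysis of the greedy algorithm for NCBC \cite{argue2018nearlylinearboundchasingnested}. That algorithm moves at each step to the nearest point of the newly revealed nested set, which is exactly $\sigma_{t+1}=\Pi_{K_{t+1}}(\sigma_t)$. Hence a projection curve is precisely the greedy trajectory on $K_1,\dots,K_T$ started at $\sigma_1$.

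The plan is a self-contained volume (or width) potential argument in the spirit of that paper. The key geometric fact is the obtuse-angle property of projections: if $\sigma_{t+1}=\Pi_{K_{t+1}}(\sigma_t)\neq\sigma_t$, then $K_{t+1}$ lies in the halfspace $H_t=\{y:\langle y-\sigma_{t+1},\sigma_t-\sigma_{t+1}\rangle\le 0\}$. Since $K_{t+2}\subseteq K_{t+1}$, every later point $\sigma_s$, $s\ge t+1$, also lies in $H_t$. So each step of length $\ell_t=\|\sigma_t-\sigma_{t+1}\|$ permanently cuts away, on the side of $\sigma_t$, a slab of $K_t$.

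The first step is to turn this into a quantitative shrinkage statement. I would work inside $K_1$, which has diameter $\Delta$, after first reducing to the full-dimensional case by thickening slightly or restricting to the affine hull. We then need a lower bound on how much some size functional $\Phi$ of the current body drops when the curve moves by $\ell_t$. The candidate is $\Phi$ equal to the volume of $\text{conv}(K_{t+1}\cup\{\sigma_{t+1}\})$, normalized relative to a ball of radius $\Delta$. The claim to aim for is
\[
\Phi_t-\Phi_{t+1}\ \ge\ c_d\,\frac{\ell_t}{\Delta}\,\Phi_t^{(d-1)/d}\quad\text{or similar},
\]
where $c_d$ may degrade like $d^{-d/2}$ because of John-ellipsoid rounding. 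One then sums: integrating $d\Phi/\Phi^{(d-1)/d}$ gives a total length bounded by a dimension factor times $\Delta$.

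The main obstacle is that volume can be tiny while the body is still long and thin. In that situation a step of length $\ell_t$ may remove very little volume, and the inequality above fails. To handle this I would use induction on dimension, in the style of Argue et al. While the body is "fat" (after John rounding, its inradius is comparable to its width), the volume argument works. Once it becomes flat in some direction, the component of future motion orthogonal to that direction is controlled, so we pass to the projection onto a $(d-1)$-dimensional subspace, where projection curves remain nearly greedy. Each dimension reduction costs a factor $O(\sqrt d)$, coming from the John ellipsoid ratio, and this accumulates to $d^{d/2}$. This matches the stated $O(d^{d/2}\,\text{diameter}(K_1))$. If this bookkeeping becomes too delicate, the fallback is to cite the $O(d^{d/2})$ bound for greedy NCBC from \cite{argue2018nearlylinearboundchasingnested}, checking that their bound on the greedy path length is stated relative to the diameter of the first body and not only as a competitive ratio.
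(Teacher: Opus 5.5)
Your self-contained route breaks at its central step, and not for the reason you anticipate. Since $\sigma_{t+1}\in K_{t+1}$, your potential is just $\mathrm{vol}(K_{t+1})$. The inequality $\Phi_t-\Phi_{t+1}\ge c_d(\ell_t/\Delta)\Phi_t^{(d-1)/d}$ already fails for perfectly round bodies. Let $K_1$ be a truncated circular cone with apex $\sigma_1=0$, axis $e_1$ and length $L$, and let $K_t=K_1\cap\{x_1\ge (t-1)\epsilon\}$. Then $\sigma_t=(t-1)\epsilon e_1$ and every step has length $\epsilon$. After the curve has travelled distance $s\ll L$, the volume removed is that of the tip of height $s$, which is $O_d(s^d)$ in normalized units, whereas your inequality forces a drop of order $s/\Delta$. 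So the claim that the volume argument works while the body is fat is false. Volume loss can scale like $\ell^d$ rather than $\ell$ whenever the curve runs through a narrow part of an otherwise round body, and John rounding does not change this.

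The dimension-reduction step is also unsupported. An orthogonal projection of a projection curve onto a subspace is in general not a projection curve. For example, $(0,2)\to(1,0)\to(0,-2)$ is a projection curve for $K_1$ a large ball, $K_2=K_1\cap\{y_1-2y_2\ge 1\}$, $K_3=K_2\cap\{y_1+2y_2\le -4\}$. Yet its first coordinates $0,1,0$ go out and come back, which no one-dimensional projection curve on nested intervals does. So the $(d-1)$-dimensional hypothesis cannot be invoked, ``nearly greedy'' is never quantified, and the $O(\sqrt d)$-per-dimension accounting is read off from the target rather than derived.

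Only your fallback actually closes the lemma, and the caveat you attach to it is unnecessary. Running greedy NCBC from $x_0=\sigma_1\in K_1$ produces exactly the projection curve, and $\mathrm{OPT}=\mathrm{dist}(\sigma_1,K_T)\le\mathrm{diam}(K_1)$. So a competitive ratio $c$ already gives $\Sigma\le c\,\mathrm{diam}(K_1)$; any additive constant disappears by scaling. That, however, reduces the proof to a bare citation.

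The missing idea is already in your second paragraph. The halfspace property you state says precisely that the reversed polygonal curve $\sigma_T\to\sigma_{T-1}\to\cdots\to\sigma_1$ is self-expanded. Take a point $p=\sigma_{t+1}+\lambda(\sigma_t-\sigma_{t+1})$ with $\lambda\in[0,1)$. Every earlier point $q$ of the reversed curve lies either in $K_{t+1}$ or on $[\sigma_{t+1},p]$. In both cases $\langle\sigma_t-\sigma_{t+1},p-q\rangle\ge 0$, using $\langle q-\sigma_{t+1},\sigma_t-\sigma_{t+1}\rangle\le 0$ in the first case. Since the curve lies in $K_1$, the classical bound on self-expanded curves (Theorem~\ref{thm:manselli}) gives length $O(d^{d/2}\,\mathrm{diam}(K_1))$ directly. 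That is exactly the paper's proof; no volume potential or induction on dimension is needed.
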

The proof of Lemma \ref{lem:projdistanceint} can be found \cite{Vaze2025a} based on classical result of \cite{Manselli} and is included in Section \ref{app:projdistance} for completeness.

The proof of Theorem \ref{thm:rsb_sc} is provided in Section \ref{appthmrsbsc}, where the main idea is to connect the successive {\it jump} times of \textsc{Frugal} via a geometric (increasing) recurrence that is a function of the distance between successive {\it phase heads}. This recurrence is then used  to bound the total distance moved across {\it heads} of each phase by $O(\log T)$.  
Intuition for the recurrence is two fold: i) for each {\it jump} made by \textsc{Frugal}, the {\it budget} (defined in Remark \ref{rem:budget}) available to play {\it lazy} actions increases progressively while maintaining the regret to be at most $0$, 
and ii)  the strong convexity of $f$ implies that function increases sufficiently as we go away from local minimizers. 
Moreover, since within each phase, the \textsc{Frugal} takes successive projections, the movement cost is bounded in each phase using the classical result (Lemma \ref{lem:projdistanceint}).

Theorem \ref{thm:rsb_sc} shows that when the loss function $f$ is strongly convex and smooth, the movement cost of the \textsc{Frugal} is $O(\log T)$ while its regret is at most $0$ by its very definition. 
Thus, compared to the greedy algorithm $\cG$ that has movement cost of $\Theta(\sqrt{T})$, this is an fundamental improvement, and that too without compromising on regret. 
To complement the result of Theorem \ref{thm:rsb_sc}, we next show a matching lower bound on the movement cost of any online algorithm whose regret at time $t$ is at most $0$ or $O(t^\lambda)$ for $0\le \lambda<1$, when $f$ is strongly convex and smooth.

\subsection{Lower Bound on the Movement Cost for Any Algorithm}\label{sec:lbstrcvx}

In this section, we show that when $f$ is strongly convex, the movement cost of any online algorithm is $\Omega(\log T)$ that maintains a sublinear regret at all times. Formally, the result is stated as follows.
\begin{theorem}\label{thm:lbsclogT}
Let an online algorithm $\cA$ choose $x_t \in S_t$ with
$S_t \subseteq S_{t-1} \subseteq \mathcal{X}$, and maintain the regret
constraint
$\quad \mathrm{Regret}_\cA(t)= \sum_{\tau=1}^t\!\bigl(f(x_\tau) - v_t\bigr) \le c_R t^\lambda$
for constants $c_R > 0$ and $\lambda \in [0,1)$ for all $t \ge 1$,
where $v_t = \min_{x \in S_t} f(x)$.
Then there exists a  strongly convex and smooth $f$, an initial point $x_0 \in \mathcal{X}$ and an adversarial sequence
of nested sets $\{S_t\}$ such that $M_\cA(T) = \Omega\left(\sqrt{\frac{\log{T}}{\log \log T}}\right)$.
\end{theorem}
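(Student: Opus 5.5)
We plan to build an adaptive adversary in dimension $d=2$ with $f(x)=\|x\|^2$ (strongly convex and smooth) and $\cX$ a disc of constant radius. The sets $S_t$ are half-plane cuts that move the constrained minimizer $x_t^\star$ through a sequence of far-apart locations. At each of these locations the adversary waits long enough that any algorithm with the regret constraint $\mathrm{Regret}_\cA(t)\le c_R t^\lambda$ must sit close to the current minimizer. The adversary then cuts the set so that the minimizer jumps to a new location, at distance roughly $\delta_k$ from where the algorithm currently is. Summing these distances gives the movement lower bound.

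The construction proceeds in phases $k=1,\dots,K$, with phase $k$ occupying times $[T_{k-1},T_k)$. During phase $k$ the set is fixed at $S_{T_{k-1}}=\{x\in\cX:\langle x,u_k\rangle\ge r_k\}$, where the offsets $r_k$ are increasing and the unit directions $u_k$ are rotated. We rotate $u_k$ relative to $u_{k-1}$ by angle $\theta_k$ so that the new minimizer $r_ku_k$ lies in the old feasible set and nestedness holds. The step is to show the algorithm must be near $r_ku_k$ by the end of phase $k$. Strong convexity gives $f(x)-v\ge \|x-x^\star\|^2$ for feasible $x$. Suppose the algorithm stays at distance $\ge\delta_k$ from $r_ku_k$ for a fraction of phase $k$ of length $\ell$. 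It then incurs extra regret of at least $\ell\delta_k^2$. However, the budget of Remark~\ref{rem:budget} also matters: when $v$ increases from $r_{k-1}^2$ to $r_k^2$ at time $T_{k-1}$, the algorithm's past cost is compared against the new larger $v$. This means it has accumulated negative regret of about $T_{k-1}(r_k^2-r_{k-1}^2)$ that it may spend. Choosing $r_k^2-r_{k-1}^2$ tiny makes this slack negligible. We then need $T_k-T_{k-1}\gtrsim (c_RT_k^\lambda+T_{k-1}(r_k^2-r_{k-1}^2))/\delta_k^2$. This forces the algorithm to have moved within $\delta_k$ of $r_ku_k$ at some time in phase $k$, since otherwise its regret would exceed $c_RT_k^\lambda$.

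Movement accounting comes next. By the time it is near $r_{k}u_k$, the algorithm has traveled from near $r_{k-1}u_{k-1}$, a distance of about $\|r_ku_k-r_{k-1}u_{k-1}\|-\delta_k-\delta_{k-1}\approx r\theta_k$. Hence $M_\cA(T)\gtrsim\sum_k \theta_k$, provided $\delta_k\ll\theta_k$. Nestedness with a tiny offset increase $r_k-r_{k-1}=\epsilon_k$ forces $\cos\theta_k\ge r_{k-1}/r_k$, so $\theta_k\lesssim\sqrt{\epsilon_k}$. The parameters are then tuned against one another: the $\epsilon_k$ must be small so the budget slack stays below the regret allowance, and $\delta_k\le\theta_k/4$, so that the phase length grows like $T_k\approx T_{k-1}\cdot\mathrm{poly}(1/\theta_k)$. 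Taking all $\theta_k=\theta$ gives $K\approx\log T/\log(1/\theta)$ phases, and the total movement is $K\theta$. Taking instead $\theta=\Theta(1/\sqrt{K})$, while ensuring $\sum\epsilon_k\lesssim\sum\theta_k^2=O(1)$ so the offsets stay inside $\cX$, yields $K\log K\approx\log T$. That is, $K\approx\log T/\log\log T$ and the movement is $\sqrt K=\sqrt{\log T/\log\log T}$, matching the statement.

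The main obstacle is the budget step. The negative regret accumulated in earlier phases, $\sum_{\tau<t}(v_t-f(x_\tau))$, can be as large as $T_{k-1}\sum_{j\le k}\epsilon_j$. We must verify that the phase-length recursion $T_k\gtrsim T_{k-1}\epsilon_k/\delta_k^2$ with $\epsilon_k\approx\theta^2$ and $\delta_k\approx\theta$ grows only by a $\mathrm{poly}(1/\theta)$ factor per phase. If instead the slack forces super-polynomial growth, the number of phases collapses. The first thing to try is to bound the slack crudely by $T_{k-1}\epsilon_k$ and check that the ratio $\epsilon_k/\delta_k^2=O(1)$ together with the $c_RT^\lambda$ term (handled because $\lambda<1$) keeps $T_k/T_{k-1}=\mathrm{poly}(K)$. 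This is exactly what produces the $\log\log T$ loss.
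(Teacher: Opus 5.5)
Your plan has the same skeleton as the paper's proof. The minimizer of a squared norm is displaced by $\Theta(1/\sqrt{K})$ per phase, and strong convexity charges $f(x)-v\ge\|x-x^\star\|^2$ per step while $\cA$ stays away. The budget inherited from earlier phases, together with the regret allowance, gives a per-phase growth factor polynomial in $K$. Hence $K\approx\log T/\log\log T$ phases fit, each forcing $\Omega(1/\sqrt{K})$ movement. The genuine difference is the adversary. Yours is oblivious: the $T_k$ are fixed in advance and long enough that a regret violation certifies a visit to $\cB(x_k^\star,\delta_k)$ at some time $s_k$ in phase $k$, and you charge the path from $s_{k-1}$ to $s_k$. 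The paper's is adaptive: it freezes $S_{t_k}$ until $\cA$ enters $\cB(x_k^\star,\epsilon)$ and then upper-bounds $t_{k+1}\le\gamma t_k+\beta$. Yours has the mild advantage that the instance does not depend on $\cA$'s actions; the arithmetic is the same.

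Two steps need repair. First, the geometry as written fails. Single cuts $\{x\in\cX:\langle x,u_k\rangle\ge r_k\}$ of a disc are nested only if their boundary arcs are nested. That forces $\sum_k\theta_k\le\pi$ and hence $O(1)$ total movement. Your relation $\theta_k\lesssim\sqrt{\epsilon_k}$ holds only for intersections $S^{(k)}=S^{(k-1)}\cap\{\langle x,u_k\rangle\ge r_k\}$, where one merely needs $r_ku_k\in S^{(k-1)}$. But then $r_ku_k$ must satisfy \emph{every} earlier cut, not just the previous one. Under monotone rotation the condition $r_k\cos((k-j)\theta)\ge r_j$ with bounded $r_k$ forces $K\theta=O(1)$, again giving $O(1)$ movement. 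You must alternate $u_k$ between two fixed directions, which is exactly the paper's zigzag between the lines $L_1,L_2$. The paper's condition $a>B^2/(4\delta)$ is your $r_k\cos\theta\ge r_{k-1}$, which then covers all earlier cuts.

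Second, the budget step. The slack entering phase $k$ is $\sum_{j<k}L_j(v_k-v_j)$, with $L_j$ the length of phase $j$. This accumulates all earlier increments of $v$, so $T_{k-1}\epsilon_k$ is not a valid bound as stated. Moreover, if it were valid, $\epsilon_k/\delta_k^2=O(1)$ would give a constant growth factor, so it cannot be the source of the $\log\log T$. The paper instead uses the crude bound $f(x_\tau)-v_k\ge-(v_K-v_1)=-\Theta(1)$. This makes the slack $O(T_{k-1})$ against a per-step penalty $\eta=\Theta(1/K)$, so $\gamma=\Theta(K)$. That is where the $\log\log T$ comes from, and it suffices for the theorem.

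Your sharper instinct can in fact be justified. The slack is at most $\sum_{i=2}^{k}(v_i-v_{i-1})T_{i-1}$. Choosing the schedule with $T_i\ge 2T_{i-1}$ makes this at most $2\max_i(v_i-v_{i-1})\,T_{k-1}$. Then the growth factor is $O(1)$, so $K=\Theta(\log T)$ phases fit, and the same movement accounting yields $\Omega(\sqrt{\log T})$, which is stronger than the statement.
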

The proof of Theorem \ref{thm:lbsclogT} is provided in Section \ref{app:thm:lbsclogT},
where, we consider $f(x) = \tfrac{1}{2}\|x\|^2$ and construct $S_t$'s with $d=2$. In particular, set $S_t$'s are constructed such that $x_{t}^\star$ (unique since we are in strongly-convex case) belongs to one of the two opposite sides of a rectangle $\cR$ of $O(1)$ width, for each $t$. Once a {\it distinct} set $S_{t_k}$  is revealed at time $t_k$,  it is is held constant until $\cA$ plays an action belonging to the ball $\cB(x_{t_k}^\star, \epsilon)$ ($\epsilon$ is a constant). Once $\cA$ plays an action belonging to the ball $\cB(x_{t_k}^\star, \epsilon)$, next {\it distinct} set $S_{t_{k+1}}$ is revealed at time $t_{k+1}$ with $x_{t_{k+1}}^\star$ belonging to the opposite side of $\cR$ compared to $x_{t_k}^\star$, and this procedure is repeated. 
In the proof, exploiting the strong convexity of $f(x) = \tfrac{1}{2}\|x\|^2$, we show that for any $\cA$, the time difference between $t_k$ and $t_{k+1}$ can at most grow geometrically with $k$, leading to the $\Omega\left(\sqrt{\frac{\log{T}}{\log \log T}}\right)$ lower bound on the movement cost.

\section{$f$ is Convex}

%
Finally, in this section, we consider the most general case for CONES when $f$ is convex. 
We start by showing that the greedy algorithm $\cG$ has movement cost $\Omega(T)$ when the strong convexity assumption on the cost functions $f_t$ is lifted.

\begin{theorem}\label{thm:lbGconvex}
    When $f$ is convex, $M_\cG(T) = \Omega(T)$.
\end{theorem}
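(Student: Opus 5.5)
The plan is to build an explicit lower-bound instance. The key freedom is that, without strong convexity, $f$ can have a whole face of minimizers. When $\arg\min_{x\in S_t} f(x)$ is not unique, $\cG$ may be forced to select a minimizer far from its current point; if ties are broken adversarially (or by nearest minimizer, as in \textsc{Frugal}), the new set $S_{t+1}$ can be chosen so that its minimizer set lies on the opposite side of the current point. I will take $d=2$, $\cX=[-1,1]^2$, and the linear (hence convex, $1$-Lipschitz) loss $f(x)=x_2$, or equivalently $f(x)=\max(0,x_2)$.

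First I fix what $\cG$ does when the minimizer is non-unique. To make the bound robust, I will not rely on adversarial tie-breaking. Instead, I will construct the sets so that the minimizer over each $S_t$ is unique and is a specific vertex of a polygon; a linear $f$ minimized over a polygon whose lowest edge is slightly tilted achieves this. Concretely, let $S_t=\{x\in\cX : x_2 \ge a_t + s_t x_1\}$, where the lower boundary is a line of small slope $s_t$ with alternating sign. Intersecting with the box, the unique minimizer of $x_2$ over $S_t$ is the endpoint $x_1=-1$ when $s_t>0$ and the endpoint $x_1=+1$ when $s_t<0$. Nestedness requires each new lower line to lie above the previous one on all of $[-1,1]$. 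I therefore raise the intercepts: with $|s_t|=\delta$, take $a_{t+1}=a_t+2\delta$. Then for every $x_1\in[-1,1]$,
\[
a_{t+1}+s_{t+1}x_1 \;\ge\; a_t+2\delta-\delta|x_1| \;\ge\; a_t+\delta|x_1| \;\ge\; a_t+s_t x_1 .
\]
Hence $S_{t+1}\subseteq S_t$.

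Next I count the movement. Each step, $\cG$'s minimizer switches between $(-1,\cdot)$ and $(+1,\cdot)$, so it moves at least $2$ per step. For $T$ steps, however, the vertical rise is $2\delta T$, which must stay within the box, so $\delta\le 1/T$. This still gives movement $\ge 2$ per step and total movement $\ge 2(T-1)=\Omega(T)$, while the sets remain nonempty, since the lower line stays below height $-1+2\delta T+\delta\le 1$ if I start with $a_1=-1+\delta$. The loss needs no smoothness here, only convexity and Lipschitzness, which a linear $f$ satisfies. I will also check that each $S_t$ is convex and compact and that $x_0$ is arbitrary.

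The main obstacle is reconciling nestedness with large oscillation of the minimizer. Strong convexity prevents exactly this: there, the minimizer of a nested sequence moves by a total of $O(\sqrt{T})$ (Theorem \ref{lem:ubG}). The tilt argument above is designed to verify it. It uses the fact that a linear objective's minimizer is a boundary vertex, which is insensitive to how small the tilt is, whereas the function values change only by $O(\delta)$. If the box intersection causes the relevant vertex to land on the box side rather than a corner, I will replace the box by a wide flat region, for example $\cX=[-1,1]\times[-1,1]$, and recheck that the extreme point of $\{x_2\ge a_t+s_tx_1\}\cap\cX$ minimizing $x_2$ is the corner $(\mp1,a_t\mp\delta)$. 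This follows directly from the sign of the slope.
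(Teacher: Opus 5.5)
Your construction is correct and is essentially the paper's. The paper also works in $d=2$ with a loss that is linear on the admissible box ($f=\max(|x|,|y|)$ on a square below the origin, where $f=-y$), and cuts with alternately tilted half-planes so that the unique minimizer oscillates between the two vertical sides while creeping in the direction of increasing $f$, forcing movement at least the box width at every step. The paper uses geometrically shrinking vertical increments $k/2^{t-1}$ so that one instance serves every $T$, while your uniform increment $2\delta$ with $\delta\le 1/T$ gives a horizon-dependent instance, which is equally valid. Two small slips: $\max(0,x_2)$ is \emph{not} equivalent to $x_2$ here, since it is flat on the lower half of the box and minimizers stop being unique; and the minimizer is $(\mp 1, a_t-\delta)$ in both cases, not $(\mp1,a_t\mp\delta)$.
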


The proof of Theorem \ref{thm:lbGconvex} is provided in Section \ref{app:thm:lbGconvex}.
An obvious  natural question at this stage is: similar to when $f$ is strongly convex or $\alpha$-sharp, can we recover a movement cost  $O(\log T)$ while retaining non-positive regret with \textsc{Frugal} algorithm when $f$ is convex. The question unfortunately is difficult to settle. Moreover, we are unable to derive a better lower bound than Theorem \ref{thm:lbsclogT} that established that the movement cost 
 for any online algorithm is $\Omega(\log T)$ that achieves sub-linear regret at all times when $f$ is strongly convex.
 Thus, next,  we propose an algorithm that achieves (regret, movement cost) tuple of $(T^{\beta}, T^{1-\beta})$ for any $\beta \in [0,1]$ when $f$ is convex.
\vspace{-0.15in}
\subsection{Achieving (regret, movement cost) tuple of $(T^{\beta}, T^{1-\beta})$ for any $\beta \in [0,1]$}
\vspace{-0.1in}
\begin{algorithm}[H]
\caption{Level Set Projection (LSP) Algorithm}
\begin{algorithmic}[1]
\Require Tolerance $\varepsilon > 0$.
\Ensure Action sequence $x_1, \dots, x_T$.
\State $p \gets 1$ \Comment{Phase index}
\State $L_1 \gets \min_{x \in S_1} f(x)$ \Comment{Initial minimum}
\For{$t = 1$ \textbf{to} $T$}
    \State Observe $S_t$
    \State Compute $\Phi_t^{(p)} \gets \{x \in S_t : f(x) \le L_p + \varepsilon\}$
    \If{$\Phi_t^{(p)} = \emptyset$}
        \Comment{Phase transition: minimum has risen above $L_p + \varepsilon$}
        \State $p \gets p + 1$
        \State $L_p \gets \min_{x \in S_t} f(x)$ \Comment{Update level to current minimum}
        \State $\Phi_t^{(p)} \gets \{x \in S_t : f(x) \le L_p + \varepsilon\}$
        \Comment{Redefine active set under new level}
    \EndIf
    \State $x_t \gets \Pi_{\Phi_t^{(p)}}(x_{t-1})$ \Comment{Project onto the active set $\Phi_t^{(p)}$}
\EndFor
\end{algorithmic}
\end{algorithm}
Algorithm \textsc{LSP} maintains a phase index $p$ with an associated level $L_p$ equal to the minimum value of $f$ at the start of the phase. At each time $t$, it defines the active set $
\Phi_t^{(p)} = \{x \in S_t : f(x) \le L_p + \varepsilon\}$
and selects $x_t$ as the Euclidean projection of $x_{t-1}$ onto $\Phi_t^{(p)}$. If $\Phi_t^{(p)}$ is empty, indicating that the current minimum of $f$ over $S_t$ exceeds $L_p + \varepsilon$, the algorithm increments the phase index, updates $L_p = \min_{x\in S_t} f(x)$, and redefines the active set. Thus, the method tracks a sequence of nested $\varepsilon$-sublevel sets, performing minimal-movement updates within a phase and triggering phase transitions only when the target level becomes infeasible.

\vspace{-0.1in}
\begin{theorem}
\label{thm:steiner-convex}
    $ \mathrm{Regret}_{\textsc{LSP}}(T) \le T\varepsilon, \quad \text{and} \quad    M_\textsc{LSP}(T) 
    = O\!\left(\frac{d^{d/2}GD^2}{\varepsilon}\right).$
Choosing $\epsilon = T^{-\beta}$, we get (regret, movement cost) tuple of $(T^{1-\beta}, T^{\beta})$ for any $\beta \in [0,1]$.
\end{theorem}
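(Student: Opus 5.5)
The plan is to handle the two bounds separately. Regret follows from a per-round comparison with $v_t$. Movement is split into phases: successive projections within a phase are controlled by Lemma~\ref{lem:projdistanceint}, and the number of phases is bounded using $G$-Lipschitzness.

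\emph{Regret.} First I would note that $v_t=\min_{x\in S_t}f(x)$ is nondecreasing in $t$, since the $S_t$ are nested. Suppose $t$ lies in phase $p$, which started at time $s_p\le t$. Then $L_p=v_{s_p}\le v_t$. The algorithm always plays $x_t\in\Phi_t^{(p)}$, which is nonempty after a possible phase update: the minimizer over $S_t$ lies in it, because $L_p=v_t$ at a transition. Hence
\[
f(x_t)\le L_p+\varepsilon\le v_t+\varepsilon\le v_T+\varepsilon=f(x^\opt)+\varepsilon .
\]
Summing over $t=1,\dots,T$ gives $C_{\textsc{LSP}}(T)-C_\opt(T)\le T\varepsilon$.

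\emph{Movement within a phase.} Fix a phase $p$ with times $s_p,\dots,s_{p+1}-1$. For these $t$ the level $L_p$ is fixed and $S_t$ is nested, so the sets $K_t:=\Phi_t^{(p)}=S_t\cap\{f\le L_p+\varepsilon\}$ are nested convex sets. Convexity holds because they are intersections of convex sets. Within the phase we have $x_t=\Pi_{K_t}(x_{t-1})$ for $t>s_p$. So $x_{s_p},x_{s_p+1},\dots$ is a projection curve on these sets in the sense of Definition~\ref{defn:projectioncurve}, starting at $\sigma_1=x_{s_p}\in K_{s_p}$. Since $K_{s_p}\subseteq\cX$ has diameter at most $D$, Lemma~\ref{lem:projdistanceint} bounds the movement inside the phase by $O(d^{d/2}D)$. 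The single step into the head, $\|x_{s_p}-x_{s_p-1}\|$ (including the first step from $x_0$), is at most $D$ because all points lie in $\cX$. Therefore each phase contributes $O(d^{d/2}D)$ in total.

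\emph{Number of phases.} A transition at time $t$ happens only when $\Phi_t^{(p)}=\emptyset$, that is, when $v_t>L_p+\varepsilon$. The new level then satisfies $L_{p+1}=v_t>L_p+\varepsilon$, so the levels increase by more than $\varepsilon$ each time. All levels lie in $[\min_{\cX}f,\max_{\cX}f]$, an interval of length at most $GD$ by $G$-Lipschitzness and the diameter bound. Hence the number of phases $N$ satisfies $N\le GD/\varepsilon+1$. Multiplying by the per-phase bound gives
\[
M_{\textsc{LSP}}(T)=O\!\left(d^{d/2}D\,(GD/\varepsilon+1)\right)=O\!\left(d^{d/2}GD^2/\varepsilon\right)
\]
for $\varepsilon\le GD$. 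Setting $\varepsilon=T^{-\beta}$ then yields the stated $(T^{1-\beta},T^{\beta})$ tradeoff.

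The only delicate point I expect is the applicability of Lemma~\ref{lem:projdistanceint}. This requires checking that the active sets are genuinely nested within a phase, which holds because the level is frozen there. It also requires that restarting the projection curve at each phase head is legitimate, which is why the head step is charged separately at cost $\le D$.
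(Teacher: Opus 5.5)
Your proposal is correct and follows the paper's proof essentially step for step: the regret comes from the per-round bound $f(x_t)\le L_p+\varepsilon\le v_T+\varepsilon$, and the movement bound comes from at most $GD/\varepsilon+1$ phases, each costing $O(d^{d/2}D)$ via Lemma~\ref{lem:projdistanceint} on the nested active sets plus at most $D$ for the transition step. Your remark that absorbing the $+1$ requires $\varepsilon\le GD$ is a small point of care that the paper leaves implicit.
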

The proof of Theorem \ref{thm:steiner-convex} is provided in Section \ref{app:thm:steiner-convex}.

\begin{rem}
Replacing the projection step (Line 11 in \textsc{LSP}) $x_t \gets \Pi_{\Phi_t^{(p)}}(x_{t-1})$ by $x_t \gets \textsf{Steiner}\!\left(\Phi_t^{(p)}\right)$ where $\textsf{Steiner}(\cX)$ is the Steiner point \cite{bubeck2021chasingnestedconvexbodies} of set $\cX$, the movement cost 
guarantee can be improved to $O\!\left(\frac{d GD^2}{\varepsilon}\right)$, however, computing Steiner point is very `costly' compared to a projection.
\end{rem}

We present some numerical results in Section \ref{sec:sim} to compare the performance of Greedy, \textsc{Frugal} and TSP algorithm.

\subsection{Achieving $O(1)$ regret and $O(\log ^2T)$ movement cost with Structured Input}
In this section, we consider a special class of input called {\it directional} and show that  \textsc{\textsc{Gap-Frugal}} has constant regret and $O(\log ^2 T)$ movement cost when $f$ is just convex,  a major improvement over the \textsc{LSP} algorithm.

\begin{algorithm}[H]
\caption{\textsc{Gap-Frugal}}
\label{alg:Gap-Balance}
\begin{algorithmic}[1]
\Require Nested sets $S_1\supseteq\cdots\supseteq S_T$,
         $G$-Lipschitz convex $f$.
\State $F_0 \gets 0$, $C \gets 0$ \Comment{Cumulative threshold $C=\sum \epsilon_{\tau}$ where $\epsilon_{\tau} = 1/\tau^2$}
\For{$t=1$ \textbf{to} $T$}
    \State $\epsilon_t \gets 1/t^2$, $C \gets C + \epsilon_t$
    \State $\mathbf{X}_t^* := \arg\min_{x\in S_t}f(x)$, $v_t \gets \min_{x\in S_t}f(x)$
    \State $\hat{x}_t \gets \Pi_{S_t}(x_{t-1})$ \Comment{Euclidean projection}
     \State $x_{\text{near}, t}^\star \gets \Pi_{\mathbf{X}_t^*}(x_{t-1})$ \Comment{Nearest minimizer}
    \If{$F_{t-1} + f(\hat{x}_t) \le t\,v_t + C$ \textbf{or} $f(\hat{x}_t) - v_t \le \epsilon_t$}
        \State $x_t \gets \hat{x}_t$ \Comment{Lazy}
    \Else
        \State $x_t \gets x_{\text{near}, t}^\star$ \Comment{Forced jump}
    \EndIf
    \State $F_t \gets F_{t-1} + f(x_t)$
\EndFor
\end{algorithmic}
\end{algorithm}
Algorithm \textsc{Gap-Frugal} is similar to \textsc{Frugal} and only differs in the conditions in which the {\it lazy} action is taken. In particular, 
the {\it lazy} action is taken 
if either the {\bf cumulative regret condition}
 $\quad  F_{t-1} + f(\hat{x}_t) \le t\,v_t + C \ \ 
$
or the {\bf instantaneous gap condition}
$\quad 
f(\hat{x}_t) - v_t \le \epsilon_t \ \ $
is satisfied. Otherwise, it performs a forced jump to the minimizer $x_t^\star$. Thus, it balances cumulative regret against movement by allowing projection steps when either the global budget permits or the local suboptimality is small, and triggers jumps only when both conditions are violated.

Similar to Definition \ref{defn:balancejump}, we define \textsc{\textsc{Gap-Frugal}} to play a {\it lazy} action at time $t$ if  selects $x_t= {\hat x}_t$. Otherwise, it is defined to {\it jump} at time  $t$. Similarly, let 
$1 \le t_1 <  \cdots < t_N \le T$ be the time slots at which \textsc{\textsc{Gap-Frugal}} algorithm
{\it jumps}.  

\begin{definition}[{\it Directional} Input]\label{def:directional-interval}
A sequence of nested closed convex sets $S_1 \supseteq S_2 \supseteq \cdots \supseteq S_T$ is 
{\bf directional} with respect to a convex function $f$ if, for any starting index $s$ and any target index $t > s$, the sequence $\{y_\tau\}_{\tau=s}^{t-1}$ starting from the optimizer $y_s \in X_s^*$, where $X_s^*= \arg\min_{x \in S_s} f(x)$ and generated by the successive Euclidean projections
$y_\tau = \Pi_{S_\tau}(y_{\tau-1}) \quad \text{for } \tau = s+1, \dots, t-1,$ satisfies the condition:
\begin{equation}
f(y_\tau) \le v_t \quad \text{for all } \tau \in \{s+1, \dots, t-1\},
\end{equation} where recall that $v_t=\min_{x \in S_t} f(x)$. See Fig. \ref{fig:lb_strong} for an example of a directional input and a use-case in Section \ref{app:proginput}. 
\end{definition}

\begin{rem}[Geometric Intuition]
The Euclidean projection $\Pi_{S_\tau}$ represents the ``laziest" possible movement to regain feasibility. The directional input essentially guarantees that by moving as little as possible to satisfy tightening constraints, the objective value $f$ does not drift higher than the value the algorithm would have achieved had it ``jumped" directly to the future optimizer $x_t^\star$.
\end{rem}

\begin{theorem}\label{thm:directional}For  {\it directional} input, the regret and movement cost of algorithm \textsc{\textsc{Gap-Frugal}} are 
$$\mathrm{Regret}_{\textsc{\textsc{Gap-Frugal}}}(T) \le  O(1), \quad \text{and} \quad M_{\textsc{\textsc{Gap-Frugal}}}(T) = O(\log^2 T).$$
\end{theorem}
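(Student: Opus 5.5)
The plan has two parts: a regret bound read off the algorithm's decision rule, and a movement bound obtained by counting jumps and charging the motion between consecutive jumps to projection curves.

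\textbf{Regret.} I would prove by induction on $t$ that $F_t \le t\,v_t + C_t + \epsilon_t$, where $C_t=\sum_{\tau\le t}\epsilon_\tau \le \pi^2/6$. The three possible updates are handled separately.
\begin{itemize}
\item If the cumulative condition holds, then $F_t\le t v_t + C_t$ directly.
\item If the algorithm jumps, then $f(x_t)=v_t$. Since $v_t\ge v_{t-1}$, the inductive hypothesis gives $F_t\le (t-1)v_{t-1}+C_{t-1}+\epsilon_{t-1}+v_t \le t v_t + C_t+\epsilon_{t-1}$.
\item If only the gap condition holds, the same calculation adds at most $\epsilon_t$.
\end{itemize}
As written, the slack $\epsilon_{t-1}$ can accumulate across repeated gap-only steps. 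I would therefore track the invariant $F_t-tv_t\le 2C_t$ instead: each step adds at most $\epsilon_t$ beyond the budget $t v_t$, using that the budget term $(t-1)(v_t-v_{t-1})\ge 0$ from Remark \ref{rem:budget} only helps. This gives $\mathrm{Regret}(T)\le F_T-Tv_T\le 2\sum_t 1/t^2 = O(1)$.

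\textbf{Movement.} Split $M(T)$ into motion during lazy steps and motion at jumps. Within a phase $[t_k,t_{k+1}-1]$ the iterates form a projection curve on the nested sets $S_{t_k}\supseteq\cdots$ started at $x_{t_k}$, so Lemma \ref{lem:projdistanceint} bounds the lazy movement in each phase by $O(d^{d/2}D)$. Each jump costs at most $D$. Hence $M(T)=O(d^{d/2}D\,N)$, and it suffices to show that the number of jumps satisfies $N=O(\log^2 T)$.

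\textbf{Bounding $N$ (main obstacle).} This is where directionality enters. At a jump time $t_k$ we have $x_{t_k}\in X^*_{t_k}$, and the subsequent lazy iterates are exactly the sequence $y_\tau$ of Definition \ref{def:directional-interval} with $s=t_k$. Hence, for any later $t$ in the phase, every $\tau\in(t_k,t)$ satisfies $f(x_\tau)\le v_t$.

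Suppose the next jump occurs at $t=t_{k+1}$. Then both conditions fail there, so $f(\hat x_t)-v_t>1/t^2$ and
\[
F_{t-1}+f(\hat x_t) > t v_t + C_t .
\]
Using $f(x_\tau)\le v_t$ on $(t_k,t)$ and $f(x_{t_k})=v_{t_k}\le v_t$, we get $F_{t-1}-F_{t_k-1}\le (t-t_k)v_t$. Substituting this into the failed cumulative condition gives
\[
F_{t_k-1} + f(\hat x_t) - v_t > t_k v_t + C_t ,
\]
so $f(\hat x_t)-v_t$ must exceed the slack $t_kv_t - F_{t_k-1}+C_t$.

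The key is to show this slack is large: the accumulated negative regret $t_k v_{t_k}-F_{t_k-1}$ plus the growth $t_k(v_t-v_{t_k})$ grows with each phase, while $f(\hat x_t)-v_t\le GD$. From this I aim to derive a recurrence in which $v$ must rise by about $1/t_k$ per phase. Combining that recurrence with the gap condition threshold $1/t^2$, I would then group jumps into dyadic time blocks $[2^j,2^{j+1})$ and show each block contains $O(\log T)$ jumps. Summing over the $O(\log T)$ blocks gives $N=O(\log^2T)$.

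I expect deriving this per-block count to be the hardest step. If a clean recurrence fails, my fallback is a potential argument on $t(v_t-v_{t_k})$ within each block.
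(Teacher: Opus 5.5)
Your regret argument is correct, and more careful than the paper's, whose induction treats every lazy step as having passed the cumulative test. In fact the tighter invariant $F_t\le t v_t+C_t$ already survives all three update types, since a gap-only step adds at most $\epsilon_t=C_t-C_{t-1}$. Keeping $C_t$ rather than $2C_t$ matters below, because it keeps the $\epsilon$-terms positive in the trigger inequality. Your reduction $M(T)=O(d^{d/2}D\,N)$ is the same as the paper's.

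The genuine gap is the bound $N=O(\log^2T)$. This is the real content of the theorem, you leave it as an intention, and the route you sketch for it does not work.

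First, an off-by-one slip: from $F_{t-1}-F_{t_k-1}\le (t-t_k)v_t$, the failed cumulative test gives $F_{t_k-1}+f(\hat x_t)>t_kv_t+C_t$. So the slack is $(t_k-1)v_t-F_{t_k-1}+C_t$, not $t_kv_t-F_{t_k-1}+C_t$.

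More importantly, bounding the left side by $f(\hat x_t)-v_t\le GD$ only yields $t_k(v_{t_{k+1}}-v_{t_k})\lesssim GD$. That is an \emph{upper} bound on how much $v$ rises during a phase. This inequality, the gap threshold $1/t^2$, and $f(\hat x_t)-v_t\le GD$ are jointly consistent with a jump at almost every step, with a gap of order one and $v$ increasing by negligible amounts. So no count can come out of them. Even a per-phase \emph{lower} bound of order $1/t_k$ on the rise would only give $\sum_k 1/t_k\lesssim GD$. That still allows $\Theta(T)$ jumps overall, and $\Theta(2^j)$ jumps in the dyadic block $[2^j,2^{j+1})$. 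What you need is that the increments of $v$ grow \emph{multiplicatively} from one phase to the next.

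The missing step is to apply directionality once more, to the rejected projection at the jump time. The point $\hat x_{t_{k+1}}=\Pi_{S_{t_{k+1}}}(x_{t_{k+1}-1})$ is the next term of the projection sequence started at $x_{t_k}\in\mathbf{X}^*_{t_k}$. So Definition~\ref{def:directional-interval}, with a target index beyond $t_{k+1}$ (the paper takes $t_{k+2}$), gives $f(\hat x_\tau)\le v_{t_{k+2}}$ for all $t_k<\tau\le t_{k+1}$, including $\tau=t_{k+1}$. Inserting this into the trigger inequality together with $F_{t_k}\le t_kv_{t_k}+C_{t_k}$ gives
\[
(t_{k+1}-t_k)\bigl(v_{t_{k+2}}-v_{t_{k+1}}\bigr)>t_k\bigl(v_{t_{k+1}}-v_{t_k}\bigr)+\sum_{\tau=t_k+1}^{t_{k+1}}\epsilon_\tau .
\]
Hence the potential $Z_k=t_k(v_{t_{k+1}}-v_{t_k})$ satisfies $Z_{k+1}>\frac{t_{k+1}}{t_{k+1}-t_k}Z_k$. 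At each jump, either $t_{k+1}>2t_k$, which happens at most $\log_2T$ times, or $Z$ at least doubles.

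Two bounds on $Z$ finish the count. From above, $Z_k\le T\,GD$. From below, the retained $\epsilon$-sum gives $Z_k>1/T^2$ for $k\ge 2$; the paper drops this term, but some such lower bound is needed for the doubling count. So each maximal run of doublings has length $O(\log T)$, and with $O(\log T)$ runs this gives $N=O(\log^2T)$.

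This coupling of consecutive increments of $v$ through the jump-time projection is exactly what your dyadic-block plan lacks.
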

Proof of Theorem \ref{thm:directional} is provided in Section \ref{app:thm:directional}.

\section{Linear Combination of Loss Function Cost and Movement Cost}\label{sec:lincost}
So far in this paper, we have considered the simultaneous minimization of regret and movement cost, and tried to characterize the two-dimensional tradeoff region. Next, we consider problem \eqref{eq:lincost} under {\bf dynamic constraints}, i.e. under the constraint that $x_t\in S_t$ with $S_t$'s changing over time. For problem \eqref{eq:lincost}, let the cost of algorithm $\cA$ be
\begin{equation}\label{eq:lincombcost}
\textsf{Lin}_\cA(T) = \sum_t f_t (x_t) + \sum_t ||x_t-x_{t-1}||^\sfp,
\end{equation}
{\bf such that} $x_t\in S_t$ for all $t$. 
The metric for $\cA$ in this case is the competitive ratio that is defined as 
\begin{equation}\label{eq:compratio}
\mu_\cA = \max_{f_t,S_t, x_t\in S_t} \frac{\textsf{Lin}_\cA(T)}{\textsf{Lin}_\opt(T)},
\end{equation}
and the objective is to find online algorithms with small competitive ratios.
We review in Section \ref{sec:ncbclincost} how problem \eqref{eq:lincost} can be solved under dynamic constraints by using the same generic idea of mapping problem \eqref{eq:lincost} to an NCBC problem for $\sfp=1$ via defining {\it work functions}.  The challenge, however, is that finding work functions is prohibitive, and thus, similar to prior work \cite{bansal20152, argue2020dimension} where $S_t$'s do not change over time, we next consider simple algorithms for $d=1$ when $f$ is convex and for general $d$ when $f$ is $\alpha$-sharp. For $\sfp=2$, existing results extend directly to the dynamic constraint setting without modification; see Section~\ref{sec:p2}.

\vspace{-0.1in}
\section{Solving \eqref{eq:lincombcost} with dynamic constraints when $\sfp=1$}\label{sec:lincostresults}
\vspace{-0.1in}
\subsection{A $5$-Competitive Algorithm when $d=1$}\label{sec:5comp}



{\bf Algorithm Definition:}
We need the following definitions.

{\bf  D1:}
    At time $t$, when $S_t$ is revealed, let $z_t = \Pi_{S_t}(x_{t-1})$.
    
{\bf D2:} Without loss of generality we let $f_t(x)\ge 0, \forall x \in \bbR$. At time $t$, define 
{\bf Constrained Feasible Set} $\cF_t$ as 
$ \cF_t = \{ x \in S_t \mid |x - z_t| \le f_t(x) \}.$ Note that $z_t\in \cF_t$ since $f_t(x)\ge 0$.
$\cF_t$ is the set of points in $S_t$ reachable from $z_t$ with movement cost at most the service cost $f_t(x)$. A point $x \in \mathbb{R}$ is said to be \textbf{ $\cF_t$-feasible} (with respect to reference $z_t$) if $x \in \cF_t$.  
Note that $\cF_t$ need not be convex, thus, we need to define the following convex set.

{\bf D3:} Define $\cC_t(z_t)$ as the largest connected component of $\cF_t$ containing $z_t$, i.e. $\cC_t(z_t)= \{x\in \cF_t : [x,z_t]\subseteq \cF_t \ \text{or} \ [z_t,x]\subseteq \cF_t\}$.

{\bf Algorithm $\cA_B$:} $\cA_B$ chooses $x_t$ as the minimizer of $f_t$ over $\cC_t(z_t)$, i.e.
    $x_t = \arg\min_{x \in \cC_t(z_t)} f_t(x).$ If the minimizer is not unique, choose the closest to $x_{t-1}$.

   \begin{theorem}\label{thm:cr5}
    For $d=1$, algorithm $\cA_B$ is $5$-competitive.  
\end{theorem}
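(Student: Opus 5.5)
We will prove the bound with a potential-function (amortized) argument in the style of the $d=1$ analyses of \cite{bansal20152}, adapted to the changing constraint sets $S_t$. Let $y_t\in S_t$ denote the actions of $\opt$ for \eqref{eq:lincombcost}. We use the potential $\Phi_t = c\,|x_t - y_t|$ with a constant $c$ to be fixed (we expect $c=2$ or $c=3$). The goal is the per-step inequality
\begin{equation*}
f_t(x_t) + |x_t-x_{t-1}| + \Phi_t - \Phi_{t-1} \le 5\bigl(f_t(y_t) + |y_t-y_{t-1}|\bigr).
\end{equation*}
Summing over $t$, with $\Phi_0=0$ (both start at $x_0$) and $\Phi_T\ge 0$, gives $\textsf{Lin}_{\cA_B}(T)\le 5\,\textsf{Lin}_\opt(T)$.

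\textbf{Step 1: movement of $\opt$.} When $y_t$ moves, $\Phi$ increases by at most $c|y_t-y_{t-1}|$. This is charged to $\opt$'s movement. So it suffices to handle $\cA_B$'s move with $y_t$ fixed.

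\textbf{Step 2: the forced projection $x_{t-1}\to z_t$.} Since $y_t\in S_t$ and $S_t$ is an interval, projection onto $S_t$ is nonexpansive toward any point of $S_t$. Indeed, $z_t$ lies between $x_{t-1}$ and $y_t$, so
\begin{equation*}
|z_t-y_t| = |x_{t-1}-y_t| - |x_{t-1}-z_t|.
\end{equation*}
The potential therefore drops by exactly $c|x_{t-1}-z_t|$, which pays for this movement whenever $c\ge 1$. This is the only place where the dynamic constraints enter, and in $d=1$ they cost nothing extra. From here on we may treat the problem as starting at $z_t$.

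\textbf{Step 3: the move $z_t\to x_t$ inside $\cC_t(z_t)$.} $\cC_t(z_t)$ is an interval containing $z_t$. Every point of it satisfies $|x-z_t|\le f_t(x)$, so the movement $|x_t-z_t|$ is at most $f_t(x_t)$. It remains to show $2f_t(x_t) + c|x_t-y_t| - c|z_t-y_t| \le 5 f_t(y_t)$. We split on where $y_t$ lies.

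\emph{Case (a): $y_t\in\cC_t(z_t)$.} Then $f_t(x_t)\le f_t(y_t)$ by the choice of $x_t$, and
\begin{equation*}
|x_t-y_t| \le |x_t-z_t| + |z_t-y_t| \le f_t(x_t) + |z_t-y_t|.
\end{equation*}
This gives a bound of $(2+c)f_t(y_t)$.

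\emph{Case (b): $y_t\notin\cC_t(z_t)$.} Let $e$ be the endpoint of $\cC_t(z_t)$ on the side of $y_t$. Either $e$ is an endpoint of $S_t$, which is impossible since $y_t\in S_t$ lies beyond it, or $|e-z_t| = f_t(e)$ by continuity. We then use convexity of $f_t$ on the segment from $x_t$ (the minimizer over the component) through $e$ to $y_t$: $f_t$ is nondecreasing from $x_t$ outward past $e$ toward $y_t$, provided $x_t$ lies on the near side. This yields $f_t(y_t)\ge f_t(e)=|e-z_t|$ and $f_t(x_t)\le f_t(e)$. We then compare the potential drop $c(|z_t-y_t|-|x_t-y_t|)$, which is positive when $x_t$ moves toward $y_t$, with the costs $2f_t(x_t)\le 2f_t(e)\le 2f_t(y_t)$. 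Sub-cases depend on whether $x_t$ lies between $z_t$ and $y_t$. In the bad sub-case, $x_t$ moves away from $y_t$ by at most $|x_t-z_t|\le f_t(x_t)$, giving a bound of roughly $(2+c)f_t(y_t)$ again.

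\textbf{Main obstacle.} The delicate part is Case (b), where $\opt$ sits outside the reachable component. There, $f_t(y_t)$ must be lower-bounded by the distance $|e-z_t|$ via the boundary equality, together with monotonicity of $f_t$ beyond the component. This requires the convex-function argument that $f_t$ cannot dip again beyond $e$, which holds because $f_t(x)-|x-z_t|$ is concave-plus-convex only on one side. We would handle it by noting that $f_t(x)\ge f_t(e)$ on the far side whenever the minimizer of $f_t$ lies on the near side, and bounding directly otherwise. Balancing the constants across cases should force $c=3$ with a total factor of $5$.
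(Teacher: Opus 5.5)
Your plan is the paper's proof with slightly different bookkeeping. It uses the potential $c|x_t-y_t|$ with $c=3$ (the paper's $3\Delta\Phi$) and charges $\opt$'s movement separately. The projection $x_{t-1}\to z_t$ is paid exactly by the potential drop because $z_t$ lies between $x_{t-1}$ and $y_t$ (the paper's virtual step~1). The move $z_t\to x_t$ is handled via $|x_t-z_t|\le f_t(x_t)$, minimality of $x_t$ on $\cC_t(z_t)$, the boundary equality and convexity. Splitting on whether $y_t\in\cC_t(z_t)$, rather than on the order of $z_t,x_t,y_t$, is cosmetic, and your case (a) is correct.

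The step you flag as the main obstacle, however, is justified incorrectly. In case (b) it is not true that $f_t$ ``cannot dip again beyond $e$'', nor that $f_t(y_t)\ge f_t(e)$. This fails exactly when $x_t=e$, and no convexity argument rescues it. For example, take $z_t=0$, $S_t=[-20,20]$, $f_t(x)=\max(0,10-x)$. Then $\cC_t(0)=[-20,5]$, $x_t=e=5$ and $f_t(e)=5$, but $y_t=10$ has $f_t(y_t)=0$, so nothing can be charged to $f_t(y_t)$.

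This sub-case must be paid entirely by the potential. Since $x_t=e$ lies between $z_t$ and $y_t$, the potential drops by $c|e-z_t|=c\,f_t(e)=c\,f_t(x_t)$. That covers the step cost $f_t(x_t)+|x_t-z_t|=2f_t(x_t)$ as soon as $c\ge 2$; this is the paper's ``boundary'' sub-case of its Case~1.

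In every other sub-case of (b), $x_t\ne e$ lies strictly on the far side of $e$ from $y_t$. Then $f_t(x_t)\le f_t(e)$ together with convexity does give $f_t(y_t)\ge f_t(e)\ge f_t(x_t)$. Your bounds then go through: at most $f_t(y_t)$ when moving toward $y_t$, and at most $(2+c)f_t(y_t)$ when moving away. With this repair the proof is complete.

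Incidentally, nothing forces $c=3$. The cases only need $c\ge 2$ and give the factor $2+c$, so $c=3$ yields $5$, and $c=2$ would already yield $4$.
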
 
Proof of Theorem \ref{thm:cr5} can be found in Section \ref{app:proofcr5}. Next, we show that the greedy algorithm $\cG$ ($x_t= x_t^\star$ for all $t$) has bounded competitive ratio for all $d$ when $f_t$'s are $\alpha$-sharp (Definition \ref{defn:alphasharp}).

\vspace{-0.1in}
\subsection{$f_t$'s are $\alpha$-sharp}\label{sec:alphasharpcr}

\begin{definition}\label{defn:alphasharp}
Let $S^\star$ denote the set of global minimizers of $f$ in $S$, and let $f^\star_S$ be the minimum value. The function $f$ is said to be $\alpha$-sharp over $S$ if there exists a constant $\alpha > 0$ such that for all $x \in S$:$f(x) - f^\star_S \ge \alpha \cdot \text{dist}(x, S^\star),$ where $\text{dist}(x, S^\star) = \inf_{x^\star \in S^\star} \|x - x^\star\|$ is the shortest distance from $x$ to the optimal set $S^\star$. If the function possesses a unique global minimizer $x^\star$, the definition simplifies to: $\quad f(x) - f(x^\star) \ge \alpha \|x - x^\star\|.$
\end{definition}


\begin{theorem}\label{thm:sharp_comp}
 For any $d$, when $f_t, \ \forall \ t$ are $\alpha$-sharp and possesses a unique global minimizer $x_t^\star$ for each $S_t$, $\cG$ is $\frac{12}{\alpha}$-competitive for \eqref{eq:lincombcost} with dynamic constraints.
\end{theorem}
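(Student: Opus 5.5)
The plan is a direct charging argument. We compare the greedy trajectory $x_t = x_t^\star$ against an arbitrary feasible trajectory $y_1,\dots,y_T$ with $y_t\in S_t$ and $y_0=x_0$; the offline optimum of \eqref{eq:lincombcost} is one such trajectory. Sharpness will let us pay for greedy's movement out of the comparator's service cost.

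The first step handles the service cost. Since $y_t\in S_t$ and $x_t^\star$ minimizes $f_t$ over $S_t$, we have $f_t(x_t^\star)\le f_t(y_t)$. So greedy's total service cost is at most $\sum_t f_t(y_t)$.

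The second step handles the movement cost through the triangle inequality:
\[
\|x_t^\star-x_{t-1}^\star\| \le \|x_t^\star-y_t\| + \|y_t-y_{t-1}\| + \|y_{t-1}-x_{t-1}^\star\| .
\]
For $t=1$ the last term is absent, because $x_0=y_0$. Each distance to a minimizer is controlled by Definition~\ref{defn:alphasharp} applied to $f_t$ on $S_t$, using $y_t\in S_t$ and uniqueness of the minimizer:
\[
\alpha\|y_t-x_t^\star\| \le f_t(y_t)-f_t(x_t^\star)\le f_t(y_t),
\]
where the last inequality uses $f_t\ge 0$. Hence $\|y_t-x_t^\star\|\le f_t(y_t)/\alpha$, and $\|y_{t-1}-x_{t-1}^\star\|\le f_{t-1}(y_{t-1})/\alpha$.

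Summing over $t$, each index $t$ contributes $f_t(y_t)/\alpha$ at most twice, once as the current term and once as the previous term. This gives
\[
\textsf{Lin}_\cG(T)\le \Bigl(1+\tfrac{2}{\alpha}\Bigr)\sum_t f_t(y_t) + \sum_t\|y_t-y_{t-1}\| \le \Bigl(1+\tfrac{2}{\alpha}\Bigr)\textsf{Lin}_{y}(T).
\]
Taking $y$ to be the offline optimum yields a competitive ratio of at most $1+2/\alpha$.

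The remaining step, which is the only delicate point, is converting $1+2/\alpha$ into the stated $12/\alpha$. This works whenever $\alpha\le 10$, since then $1+2/\alpha \le 3\cdot\max\{1,1/\alpha\}\le 12/\alpha$. A natural justification is the regime $\alpha\le 1$, which should hold under the paper's normalization of $f_t$ and the movement cost. Two observations support this. First, sharpness together with $G$-Lipschitzness forces $\alpha\le G$. Second, the comparison between service and movement cost is scale-dependent, so one can rescale to $G\le1$. If no such normalization is in force, I would state the bound as $\max\{3,\,1+2/\alpha\}$, which is dominated by $12/\alpha$ for all $\alpha\le 4$.

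I expect no step to be a real obstacle beyond checking the boundary term at $t=1$ (handled by $y_0=x_0$) and the normalization of $\alpha$. The looser constant $12$ presumably comes from a cruder bookkeeping in the authors' version, for example charging the initial movement separately.
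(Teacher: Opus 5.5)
Your argument is correct, and it takes a different and simpler route than the paper's.

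\textbf{How the paper argues.} It passes to the surrogate cost $\textsf{Lin}-\sum_t v_t$ and runs a potential-function argument with $\Phi_t=\|x_t-y_t\|$. The key tool is a structural lemma: for any $x,y,z$, either $\|y-z\|-\|y-x\|\le-\frac12\|x-z\|$ or $\|y-z\|\ge\frac14\|x-z\|$. Applied with $x=x_{t-1}$, $y=y_t$, $z=x_t$, it yields the per-step amortized inequality $\widehat{\textsf{Lin}}_\cG(t)+2\tilde{\Delta}\Phi_t\le\frac{12}{\alpha}(f_t(y_t)-v_t)$. The constant $12$ is the $3\cdot 4$ from the second case, not from charging the initial step.

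\textbf{What your route buys.} You observe that for greedy the potential itself, $\|x_t^\star-y_t\|$, is bounded at every step by $(f_t(y_t)-v_t)/\alpha$. So no amortization or case split is needed; one triangle inequality through $y_t,y_{t-1}$ and a double count suffice. This gives a much better constant. If you keep $v_t$ rather than discarding it via $f_t\ge 0$, you get $\textsf{Lin}_\cG\le\sum_t v_t+\sum_t\|y_t-y_{t-1}\|+\frac{2}{\alpha}\sum_t(f_t(y_t)-v_t)\le\max\{1,2/\alpha\}\,\textsf{Lin}_y$. Hence greedy is exactly optimal for $\alpha\ge2$, and the $12/\alpha$ bound follows for every $\alpha\le 12$. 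The potential-function template is what one needs for algorithms whose iterates are not pinned to the minimizers, as in the paper's $d=1$ analysis. For $\cG$ under sharpness it is unnecessary.

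\textbf{On the normalization of $\alpha$.} Some upper bound on $\alpha$ is unavoidable, but this is a feature of the statement, not a weakness of your proof. For $\alpha>12$ the claim asserts a competitive ratio below $1$, which is impossible. The paper's own proof also silently needs $\alpha\le 6$: it absorbs the $2\|y_t-y_{t-1}\|$ coming from $\Delta\Phi_t\le\tilde{\Delta}\Phi_t+\|y_t-y_{t-1}\|$ into $\frac{12}{\alpha}\|y_t-y_{t-1}\|$. So state the range explicitly rather than appealing to rescaling. The objective adds service cost to movement cost, so rescaling space or $f_t$ changes the instance and its ratio, and ``WLOG $G\le1$'' is not available.

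\textbf{Two small slips.} First, $3\max\{1,1/\alpha\}\le 12/\alpha$ holds only for $\alpha\le 4$; for $\alpha\le10$ use $1+2/\alpha\le 12/\alpha$ directly. Second, the $\max\{3,\cdot\}$ fallback is unnecessary, since $1+2/\alpha$ is already valid for all $\alpha>0$.
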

\vspace{-0.1in}
Proof of Theorem \ref{thm:sharp_comp} can be found in Section \ref{app:sharp_comp}.
\vspace{-0.1in}
%

\vspace{-0.1in}
\section{Conclusion} 
\vspace{-0.1in}In this paper, we introduced CONES, a novel framework for online decision-making in  constraint-adaptive optimization.  We characterized the fundamental trade-off between regret and movement cost, demonstrating that while a greedy approach yields optimal regret, it incurs an unavoidable polynomial movement penalty of $\Theta(\sqrt{T})$. To overcome this limitation, we proposed the \textsc{Frugal} algorithm, which employs selective lazy projections to achieve a minimax optimal $O(\log T)$ movement cost while maintaining non-positive regret for both strongly convex and $\alpha$-sharp loss functions. For general convex functions, we developed the Level Set Projection (LSP) algorithm to establish a controllable $(T^{1-\beta}, T^\beta )$ regret-movement tradeoff, alongside the \textsc{GAP-Frugal} algorithm that achieves $O(1)$ regret and $O(\log^2 T)$ movement cost under directional input structures. However, fully solving the general convex case  remains an important and challenging open problem. Extending this framework to dynamic benchmarks and bounding competitive ratios for higher-dimensional dynamic constraints represent natural and compelling directions for future work.

\bibliographystyle{plain}
\newpage
\bibliography{../references,../oco}
\section{Motivating Applications for CONES}\label{sec:applications}

To motivate CONES, consider a company operating under regulatory oversight, such as an airline or a medical data platform. The company seeks to optimize performance (e.g., profit) while satisfying regulatory constraints. These constraints, however, evolve over time to reflect new policies, safety requirements, or privacy standards, thereby progressively restricting the set of feasible decisions. Mathematically, this mirrors the CONES framework, where the feasible set at time $t$ is a subset of the previous set ($S_t \subseteq S_{t-1}$). 

As the ``feasible envelope'' contracts, actions that were previously optimal may become impermissible, forcing the company to shift its strategy. In complex systems, these shifts are rarely free; they incur movement costs ($M_{\mathcal{A}}$) involving resource reallocation, system downtime, or logistical rescheduling. CONES addresses the fundamental trade-off between achieving low static regret (staying near the global optimum of the final restricted set) and minimizing the cumulative adaptation cost incurred during the transition.

A poignant example of this trade-off is the December 2025 operational crisis faced by IndiGo. Following the DGCA's revised pilot rest regulations, the airline was forced to contract its scheduling parameters. While the regulatory shift was incremental, the ``movement'' required to remain compliant within a tightly coupled system triggered a nationwide meltdown (see, e.g., \cite{bsIndigo}). This underscores the necessity of algorithms that treat movement cost as a primary objective rather than a secondary concern.

We next present two concrete AI/ML applications where CONES
arises naturally and importantly.


\subsection{Margin-Constrained Classification as an Instance of CONES.}

Margin-constrained binary classification fits naturally into the CONES framework by treating the classifier parameters as the decision variable and progressively tightening margin requirements as evolving feasibility constraints. Let $\mathcal{X} \subset \mathbb{R}^d$ be a convex compact set and consider labels $y_i \in \{-1,+1\}$, so that the signed margin $y_i w^\top x_i$ measures classification confidence \cite{cortes1995support,scholkopf2002learning}. The objective is a convex empirical risk
\[
f(w)=\frac{1}{n}\sum_{i=1}^n \ell(y_i, w^\top x_i),
\]
where $\ell$ is a margin-based loss such as hinge or logistic \cite{cortes1995support}.

At each round $t$, we impose a minimum margin requirement $\delta_t \ge 0$ via a convex surrogate constraint:
\[
S_t = \left\{ w \in \mathcal{X} \;\middle|\;
\frac{1}{n}\sum_{i=1}^n \big[\delta_t - y_i w^\top x_i\big]_+ \le \epsilon_t
\right\},
\]
where $[z]_+ = \max(0,z)$. This constraint penalizes violations of the margin requirement and enforces that the average margin shortfall is bounded by $\epsilon_t$. In effect, it penalizes the lower tail of the margin distribution by focusing optimization on low-margin (i.e., hardest) samples \cite{bartlett1998sample}. Increasing $\delta_t$ or decreasing $\epsilon_t$ imposes progressively stricter requirements on the margin distribution, thereby strengthening robustness guarantees in a manner consistent with margin-based generalization theory \cite{bartlett1998sample}.

As the margin requirement tightens over time, the feasible sets shrink monotonically, i.e., $S_t \subseteq S_{t-1}$, forming a nested sequence consistent with CONES. The learner selects $w_t \in S_t$ at each round, trading off empirical risk against movement $\sum_t \|w_t - w_{t-1}\|$, thereby capturing the tension between improving margin guarantees and maintaining model stability under progressively stricter constraints. This perspective is also aligned with robust optimization, where tightening feasibility constraints yields increasingly conservative solutions \cite{ben2009robust}.

\subsection{AI Alignment and Safe Reinforcement Learning from Human Feedback (RLHF).}\label{sec:rlhf}
Modern alignment protocols, such as \citep{bai2022constitutional,dai2023safe}, iteratively refine a language model policy $\pi_\theta$ through repeated rounds of constraint enforcement and policy optimization. Under a fixed (or slowly varying) reward model and reference policy, this process can be naturally viewed through the lens of the CONES framework, where progressively stricter safety and stability requirements induce a sequence of nested feasible regions.

We represent the model by its parameters $\theta \in \mathbb{R}^d$.

\textbf{(i) Evolving Feasible Sets ($S_t$).}
At each round $t$, updated safety requirements—arising from red-teaming, evaluation pipelines, or regulatory constraints—restrict the admissible parameter space. Let $x \in \mathcal{X}$ denote an input (e.g., prompt) drawn from a distribution $\mathcal{D}$, and $y \in \mathcal{Y}$ denote the model output sampled from the policy $\pi_\theta(\cdot \mid x)$. These requirements typically enforce an expected cost threshold $c_t$ (capturing harmful or policy-violating behavior) together with a trust-region budget $\kappa_t$, yielding
\[
S_t = \left\{ \theta \in \Theta :
\mathbb{E}_{x \sim \mathcal{D},\, y \sim \pi_\theta(\cdot|x)}[C(x,y)] \leq c_t, \quad
\mathrm{KL}(\pi_\theta \| \pi_{\mathrm{ref}}) \leq \kappa_t
\right\},
\]
where $C(x,y) \ge 0$ is a cost function that quantifies undesirable or unsafe behavior (e.g., policy violations or harmful outputs).

The cost constraint enforces safety, while the KL constraint limits deviation from a reference policy $\pi_{\mathrm{ref}}$, stabilizing updates via a trust-region mechanism \citep{schulman2015trpo,schulman2017ppo}. Such expected cost constraints serve as convex surrogates for safety filtering or reward shaping commonly used in practice.

While these constraints are not convex in the parameterization $\theta$, trust-region methods locally approximate the KL divergence by a quadratic form (e.g., via the Fisher information matrix), yielding convex feasible regions in a neighborhood of the current iterate. We assume that safety and trust-region thresholds are tightened over time, i.e., $c_t \le c_{t-1}$ and $\kappa_t \le \kappa_{t-1}$, which induces the nesting property
\[
S_t \subseteq S_{t-1}.
\]
This captures regimes in which safety and stability requirements become progressively stricter across alignment rounds.

\textbf{(ii) Objective Function ($f(\theta)$).}
The objective is to maximize reward while regularizing toward the reference policy:
\[
f(\theta) = -\mathbb{E}_{x \sim \mathcal{D},\, y \sim \pi_\theta(\cdot|x)}[R(x,y)]
+ \lambda\, \mathrm{KL}(\pi_\theta \| \pi_{\mathrm{ref}}).
\]
This is the standard KL-regularized RLHF objective \citep{ziegler2019fine,ouyang2022training}, where the reward term promotes preference alignment and the KL term penalizes large deviations from the reference model. The KL divergence thus plays a dual role: it appears as a hard constraint in $S_t$ (trust-region enforcement) and as a soft regularizer guiding optimization.

\textbf{(iii) Movement Cost and Stability ($M_{\mathcal{A}}$).}
A central challenge in alignment is controlling the magnitude of updates across rounds. In practice, this is achieved via trust-region constraints on policy divergence. In parameter space, this corresponds to controlling the displacement $\|\theta_t - \theta_{t-1}\|_2$. Large updates can degrade capabilities—the \emph{alignment tax} \citep{ouyang2022training}—while cumulative drift can lead to catastrophic forgetting \citep{kirkpatrick2017overcoming}.

Related approaches implicitly control such movement: trust-region methods bound KL divergence between successive policies \citep{schulman2015trpo,schulman2017ppo}, while continual learning techniques such as elastic weight consolidation (EWC) penalize deviation from previously important parameters \citep{kirkpatrick2017overcoming}. In our framework, this is captured explicitly through the cumulative movement
\[
M_{\mathcal{A}}(T) = \sum_{t=1}^T \|\theta_t - \theta_{t-1}\|.
\]

\paragraph{Connection to CONES.}
This mapping is conceptual but captures the key structural elements of CONES: actions $x_t$ correspond to parameters $\theta_t$, the loss $f(x_t)$ matches the KL-regularized RLHF objective, and the feasible sets $S_t$ encode safety and trust-region constraints. The nesting $S_t \subseteq S_{t-1}$ reflects progressively tightening requirements, while the movement cost $M_{\mathcal{A}}(T)$ quantifies the total magnitude of policy updates, providing a principled measure of alignment stability. Although modern RLHF is inherently non-convex and may involve evolving reward models, this abstraction isolates the geometric structure of alignment under progressively constrained optimization.

\section{Comparison with Prior Work}\label{sec:classicalpriorwork}
\subsection{Comparison with Sensitivity/Perturbation Analysis of Optimization Problems}

Compared to CONES, the Bonnans-Shapiro framework in monograph \cite{bonnans2000perturbation} (also see \cite{fiacco1983introduction}) provides a complementary but fundamentally different perspective on optimization under evolving constraints. Their work characterizes the sensitivity of optimal solutions $x^\star(\theta)$ to parametric perturbations in problems of the form $$\min_{x \in S(\theta)} f(x,\theta),$$ establishing conditions under which the solution mapping $\theta \mapsto x^\star(\theta)$ exhibits Lipschitz continuity, directional differentiability, or second-order expansions. When constraint sequences admit smooth parametric structure, such as $$S_t = \{x : g(x) \leq u_0 - \delta \cdot h(t)\}$$ for some smooth function $h(t)$---their theory provides explicit Lipschitz constants $L$ such that $$\|x^\star_{t+1} - x^\star_t\| \leq L \cdot |h(t+1) - h(t)|.$$ For instance, when constraints tighten exponentially ($h(t) = 1 - e^{-\lambda t}$), Bonnans-Shapiro analysis yields $\|x^\star_{t+1} - x^\star_t\| = O(\lambda e^{-\lambda t})$, implying that the greedy algorithm $\mathcal{G}$ (which always plays $x_t = x^\star_t$) achieves $M_{\mathcal{G}}(T) = O(1)$ total movement cost when $f$ is strongly convex or $\alpha$-sharp---a dramatic improvement over our worst-case $\Theta(\sqrt{T})$ and $\Theta(\log T)$ bounds respectively (Theorems 2, 3). Similarly, for polynomial saturation $h(t) = 1 - (1-t/T)^\beta$ with $\beta > 1$, the greedy movement becomes $M_{\mathcal{G}}(T) = o(\log T)$.

However, Bonnans-Shapiro theory fundamentally differs from CONES in scope and guarantees. First, it requires \textbf{known parametric structure} $S(\theta)$ with smoothness assumptions and constraint qualifications (e.g., Robinson's constraint satisfaction \cite{bonnans2000perturbation}), whereas CONES provides minimax-optimal algorithms for \textbf{arbitrary adversarial} nested sequences $\{S_t\}$ revealed online without any structural assumptions. Second, Bonnans-Shapiro offers \emph{sensitivity characterizations} (how solutions change) but not \emph{online decision-making algorithms}: it assumes the full constraint family is known a priori, enabling offline sensitivity analysis, while CONES algorithms like \textsc{Frugal} must commit to actions before seeing future constraints. Third, CONES explicitly addresses the \textbf{regret-movement tradeoff} through strategic laziness---\textsc{Frugal} achieves zero regret with $O(\log T)$ movement by selectively jumping only when necessary to maintain feasibility and bounded regret, whereas Bonnans-Shapiro focuses solely on tracking optimal solutions without considering operational switching costs or benchmark competition. The frameworks are complementary: Bonnans-Shapiro identifies \emph{benign problem structures} where exploitation is possible (e.g., smooth regulatory trajectories in practice), while CONES provides \emph{robust worst-case guarantees} when such structure is absent or unreliable, with our lower bounds (Theorems 8, 11) proving that $\Omega(\log T)$ movement is unavoidable for sub-linear regret algorithms facing adversarial constraint sequences even when $f$ is strongly convex or $\alpha$-sharp.


\subsection{Trajectory Tracking}

Simonetto et al.~\cite{simonetto2020time} study the problem of tracking the optimal trajectory of a time-varying optimization problem
\[
x^\star(t) := \arg\min_{x \in X(t)} f(x; t),
\]
where both the objective and the constraint set evolve continuously over time. Their framework focuses on minimizing the \emph{Asymptotic Tracking Error} (ATE),
\[
\lim_{t \to \infty} \sup \|\hat{x}(t) - x^\star(t)\|,
\]
using a time-structured Prediction-Correction (PC) methodology. This approach relies on substantial regularity assumptions: the objective $f$ is required to be $m$-strongly convex and $L$-smooth, and the problem must exhibit temporal smoothness, typically formalized via a bounded mixed partial derivative $\|\nabla_{tx} f(x; t)\| \leq \Delta_0$. Leveraging this structure—often including access to Hessians $\nabla_{xx} f$—PC methods predict the evolution of the optimizer and then apply a correction step upon observing updated problem data.

This setting differs fundamentally from CONES along several dimensions. First, CONES operates in discrete time with arbitrary, potentially adversarial, nested constraint sequences $\{S_t\}$, where no temporal smoothness or predictive structure is assumed. In particular, derivatives such as $\nabla_{tx} f$ may not exist, and the learner must act without anticipating future constraints.

Second, the role of movement is conceptually different. In trajectory tracking, movement is a \emph{means} to maintain accuracy: PC methods update the iterate at every time step to remain within an $O(h^2)$ neighborhood of the optimizer (where $h$ is the sampling period). In contrast, CONES treats movement as a primary \emph{cost},
\[
M(T) = \sum_{t=1}^T \|x_t - x_{t-1}\|,
\]
and seeks to minimize it through strategic laziness, only moving when necessary to control regret.

Finally, the guarantees are complementary. Under smooth temporal evolution, \cite{simonetto2020time} shows that one can achieve vanishing tracking error (and correspondingly small dynamic regret). In contrast, CONES establishes worst-case guarantees: in the absence of temporal structure, an $\Omega(\log T)$ movement cost is unavoidable for any algorithm achieving sub-linear regret (Theorems~8 and~11). 

Taken together, the two frameworks highlight a fundamental dichotomy: trajectory tracking quantifies what is achievable under strong temporal regularity and predictive access, while CONES characterizes the minimal movement required for robust performance under adversarial and non-smooth evolution.

\subsection{Model Predictive Control (MPC)}

Mayne et al.~\cite{mayne2000constrained} establish the foundational framework for Model Predictive Control (MPC), focusing on stability and optimality in constrained discrete-time dynamical systems. In this setting, a controller drives the system state $x$ toward a target (typically the origin) according to dynamics $x_{k+1} = f(x_k, u_k)$, while satisfying state and input constraints ($x \in X$, $u \in U$). At each time step, MPC solves a finite-horizon optimal control problem and applies a receding horizon strategy. By incorporating a terminal cost $V_f(x)$ and a terminal constraint set $X_f$, they guarantee recursive feasibility and asymptotic stability.

This setting differs fundamentally from CONES in both modeling assumptions and objectives. MPC assumes a known and stationary system model, where constraints are fixed and the primary challenge is to plan trajectories that remain feasible while driving the system toward an optimum. In contrast, CONES operates in an online optimization setting where the feasible sets $\{S_t\}$ are time-varying, potentially adversarial, and revealed only sequentially, with no predictive model of their evolution.

A key conceptual difference lies in the role of movement and the nature of feasibility. In MPC, the feasible set $X$ acts as a static ``safe region,'' and movement is dictated by system dynamics and control objectives. The controller proactively plans trajectories using knowledge of the model and constraints. In CONES, however, the feasible set $S_t$ itself evolves over time, effectively becoming a moving target. The learner must adapt to these changes without foresight, balancing performance (regret) against the cost of movement
\[
M(T) = \sum_{t=1}^T \|x_t - x_{t-1}\|.
\]

The resulting guarantees highlight complementary regimes. Under strong structural assumptions, MPC achieves asymptotic stability and vanishing tracking error. In contrast, CONES provides worst-case guarantees: when constraints evolve adversarially and no temporal structure is available, an $\Omega(\log T)$ movement cost is unavoidable for any algorithm achieving sub-linear regret. Thus, while MPC characterizes optimal regulation in structured, model-driven environments, CONES captures the fundamental limits of decision-making in unstructured, dynamically constrained settings.

\section{Computational Complexity and Oracle Models.}
\label{sec:comp}
All the algorithms that we have considered in the paper assume efficient access to two primitives: 
(i) projection onto the current feasible set $\Pi_{S_t}(x)$, and 
(ii) computation of the constrained minimizer 
$x_t^\star = \arg\min_{x \in S_t} f(x)$. 
These correspond to projection and optimization oracles, respectively, and the per-round complexity is dominated by these operations (e.g., \textsc{Frugal} performs both in each round). When projections are computationally expensive—as is the case for general convex bodies —the practical efficiency depends strongly on the structure of $S_t$ (e.g., polyhedral versus general convex sets). From an oracle-complexity perspective, however, our guarantees are largely agnostic: a separation oracle can be used to simulate projections via iterative methods such as cutting-plane or ellipsoid procedures, and optimization oracles can in turn be reduced to separation via standard convex optimization equivalences, albeit with polynomial overhead. Consequently, the regret--movement guarantees remain unchanged across oracle models, though the per-iteration computational cost may increase (e.g., polynomially in dimension and accuracy).

In settings where only first-order information is available, projections and constrained minimizations must typically be implemented approximately, introducing additive errors at each step. Incorporating such errors into the regret budget (Remark \ref{rem:budget}) while preserving feasibility and non-positive regret guarantees is technically delicate, as it requires a stability analysis of both the budget evolution and the phase structure underlying \textsc{Frugal}-type algorithms. In particular, even small inaccuracies can accumulate across phases and potentially disrupt the geometric growth arguments used to establish logarithmic movement bounds. Developing a robust analysis for such inexact oracle models—especially under adversarially evolving feasible sets—therefore presents non-trivial challenges beyond the current framework.

There is, however, strong precedent in optimization and online learning for extending projection-based methods to weaker oracle models. In convex optimization, projection-based methods such as projected gradient descent have been generalized to projection-free approaches like the Frank--Wolfe algorithm~\cite{frank1956algorithm,jaggi2013revisiting}, which rely only on linear minimization oracles. Similarly, in online learning, classical Online Convex Optimization with full-information feedback has been extended to the bandit setting, where only zeroth-order feedback is available~\cite{flaxman2005online,agarwal2010optimal}. These developments suggest that analogous generalizations of CONES to first-order, projection-free, or bandit feedback models should be possible, but will likely require new algorithmic ideas to control the interaction between approximation error, regret, and movement cost.

Since the present work is primarily intended to establish the fundamental limits of the regret--movement tradeoff in CONES, we defer a systematic treatment of such oracle-efficient and bandit variants to future work.
\section{Dependence on the Choice of Norm}\label{sec:norm}
All guarantees in CONES are stated with respect to the Euclidean norm $\|\cdot\|_2$, both for measuring movement cost and for defining projections. Several key steps of the analysis rely on geometric properties specific to this choice, including the use of Euclidean projections $\Pi_S(\cdot)$, smoothness and strong convexity defined relative to $\|\cdot\|_2$, and bounds on projection curves that scale as $O(d^{d/2}\,\mathrm{diam}(S_1))$. 

Extending the framework to a general norm $\|\cdot\|$ is conceptually natural but technically non-trivial. The main ingredients must be replaced by their norm-dependent counterparts: projections become norm projections or, more generally, Bregman projections (which may be non-unique unless the norm is strictly convex).While the core structural components of the analysis—such as the regret budget accounting (Remark \ref{rem:budget}), phase decomposition, and geometric growth of jump times—are not inherently tied to the Euclidean setting, their quantitative behavior depends critically on the underlying geometry.

In particular, the projection-curve bound (Lemma~\ref{lem:projdistanceint}), which plays a central role in controlling movement within phases, is intrinsically Euclidean and would need to be replaced by an appropriate bound in the given normed space. Such bounds are typically governed by the modulus of convexity/smoothness of the norm and the geometry of its unit ball, and may introduce different (and potentially worse) dependence on the dimension. 

Overall, we expect the qualitative guarantees—such as $O(\log T)$ movement for strongly convex or $\alpha$-sharp objectives—to extend to general normed spaces. However, doing so requires reworking the geometric components of the analysis and may lead to different constants and dimension-dependent factors.
\section{Proof of Theorem \ref{thm:lbGreedy}}\label{app-lbstrconvxGreedy}
\begin{proof}
Consider the following input construction 
i) {\bf $d=2$}.
    ii) \textbf{Admissible set} $\mathcal{X}\in \mathbb{R}^2$ is a rectangle that does not contain the origin, and iii) \textbf{Cost function} $f_t(.) = f(x,y) = x^2+y^2$ for all $t$.
For this construction, the starting point is assumed to be $x_0^\star = \arg\min_{x\in\cX}f(x)$. The derived results still hold, as for any initialization, the actual movement cost can be off by at most $D$ from the computed value due to the fact that $\mathcal{X}$ is bounded. 

The first few steps of the nesting scheme ($S_1, S_2, \dots, S_T$) for the lower bound are depicted in Fig. \ref{fig:lb_strong}. The contours of the chosen function $f(x,y) = x^2+y^2$ are concentric circles. This implies that the local minimizer $x_t^\star$ of $f$ over a set $S_t$ is the point in the set that is closest to the origin. The key idea in choosing sets $S_t$ is to keep the local minimizers oscillating between the lines $L_1$ and $L_2$ shown in Fig. \ref{fig:lb_strong}, separated by a horizontol  distance of $k$. The new set $S_t$ is formed by removing the portion of the current set $S_{t-1}$ above the tangent to the desired local minimizer with respect to the origin.
\begin{center}
\begin{figure}
\begin{tikzpicture}[scale=1.2]
    \foreach \r in {0.5, 1, 1.5, 2, 2.5} {
        \draw[blue!40!black, dashed] (0,0) circle (\r);
    }
    \filldraw[fill=orange!20, fill opacity=0.5, draw=orange!80!black, thick]
        (-1, -0.5) rectangle (1, -2.5);
    \fill (0, -0.5) circle[radius=1pt];
    \node[above right] at (0, -0.5) {$x_0^\star$};
    \fill (0,0) circle[radius=1pt];
    \node[above] at (0,-0.05) {O};
    \filldraw[fill=blue!20, fill opacity=0.5, draw=blue!80!black, thick] (-0.5, -0.5) -- (-1, -0.5) -- (-1, -2.5) -- (1, -2.5)-- (1,-2)--cycle;
    \fill (-0.5, -0.5) circle[radius=1pt];
    \node[above left] at (-0.5, -0.5) {$x_1^\star$};
    \filldraw[fill=green!20, fill opacity=0.5, draw=green!80!black, thick] (0, -1) -- (-1, -1)--(-1, -2.5) -- (1, -2.5)--(1,-2)-- cycle;
    \fill (0, -1) circle[radius=1pt];
    \node[above right] at (0, -1) {$x^\star$};
    \filldraw[fill=red!20, fill opacity=0.7, draw=red!80!black, thick] (-0.5, -1) -- (-1, -1)--(-1, -2.5) --  (1,-2.5) -- (1, -2)--(0.5, -1.5)-- cycle;
    \fill (-0.5, -1) circle[radius=1pt];
    \node[above left] at (-0.5, -1) {$x_3^\star$};
    \filldraw[fill=cyan!20, fill opacity=0.7, draw=cyan!80!black, thick] (0, -1.25) -- (-1, -1.25)--(-1, -2.5) --  (1,-2.5) -- (1,-2)-- (0.5, -1.5)-- cycle;
    \fill (0, -1.25) circle[radius=1pt];
    \node[below left] at (0, -1.25) {$x_4^\star$};
    \draw[dotted, thick] (0,0) -- (-0.5, -0.5);
    \draw[dotted, thick] (0,0) -- (-0.5, -1);
    \draw (-0.4, -0.4)--(-0.3,-0.5);
    \draw (-0.4, -0.6)--(-0.3,-0.5);
    \draw (-0.45, -0.9)--(-0.35, -0.95);
    \draw (-0.4, -1.05)--(-0.35, -0.95);
    \draw[black!100, thick] (0, -0.5)--(-0.5,-0.5) -- (0, -1) --(-0.5, -1)-- (0,-1.25);
    \draw[black!100, line width=0.6pt] (-1.1, 0.32) rectangle (0.6, -1.62);
    \draw[black!100, line width=0.6pt] (0.6, 0.32) -- (6.7, -0.1);
    \draw[black!100, line width=0.6pt] (-1.1, -1.62) -- (3.3, -3.99);
\begin{scope}[shift={(3.5,2)}, scale=0.8] 
    \draw[fill=orange!20] (0,0) rectangle (0.2,0.2);
    \node[right=1pt] at (0.4,0.1) {Admissible Set $\mathcal{X}$};
    \draw[fill=blue!20] (0,-0.4) rectangle (0.2,-0.2);
    \node[right=1pt] at (0.4,-0.3) {Set $S_1$};
    \draw[fill=green!20] (0,-0.8) rectangle (0.2,-0.6);
    \node[right=1pt] at (0.4,-0.7) {Set $S$};
    \draw[fill=red!20] (0,-1.2) rectangle (0.2,-1);
    \node[right=1pt] at (0.4,-1.1) {Set $S_3$};
    \draw[fill=cyan!20] (0,-1.6) rectangle (0.2,-1.4);
    \node[right=1pt] at (0.4,-1.5) {Set $S_4$};
  \end{scope}
  \begin{scope}[shift={(5.5,-0.75)}, scale=2] 
    \fill (0,0) circle[radius=1pt];
    \node[above] at (0,0.05) {O};
    \fill (0, -0.5) circle[radius=1pt];
    \node[above right] at (0, -0.5) {$x_0^\star$};
    \draw[red, dashed] (0, 0) -- (0, -1.5);
    \draw[red, dashed] (-0.5, 0) -- (-0.5, -1.5);
    \fill (-0.5, -0.5) circle[radius=1pt];
    \node[above left] at (-0.5, -0.5) {$x_1^\star$};
    \fill (0, -1) circle[radius=1pt];
    \node[above right] at (0, -1) {$x_2^\star$};
    \fill (-0.5, -1) circle[radius=1pt];
    \node[above left] at (-0.5, -1) {$x_3^\star$};
    \fill (0, -1.25) circle[radius=1pt];
    \node[below left] at (0, -1.25) {$x_4^\star$};
    \draw[dotted, thick] (0,0) -- (-0.5, -0.5);
    \draw[dotted, thick] (0,0) -- (-0.5, -1);
    \draw[->, thick] (0,-0.4) arc[start angle=-90, end angle=-160, radius=0.4cm];
    \node[below] at (-0.35, 0.1) {$\theta_1$};
    \draw[->, thick] (-0.1, -0.2) arc[start angle=-120, end angle=-30, radius=0.2cm];
    \node[right] at (0.1, 0) {$\theta_2$};
    \draw[black!100, thick] (0, -0.5)--(-0.5,-0.5) -- (0, -1) --(-0.5, -1)-- (0,-1.25);
    \draw[black!100, line width=0.6pt] (-1.1, 0.32) rectangle (0.6, -1.62);
    \draw (-0.4, -0.4)--(-0.3,-0.5);
    \draw (-0.4, -0.6)--(-0.3,-0.5);
    \draw (-0.45, -0.9)--(-0.35, -0.95);
    \draw (-0.4, -1.05)--(-0.35, -0.95);
    \draw[<->, thick] (0.25, -0.5) -- (0.25, -1) node[midway, right] {$\Delta r_1$};
    \draw[<->, thick] (0.25, -1) -- (0.25, -1.25) node[midway, right] {$\Delta r_2$};
    \node[right, red] at (0, -1.35) {$L_1$};
  \node[left, red] at (-0.5, -1.35) {$L_2$};
  \end{scope}
  \draw[red, dashed] (0, 0) -- (0, -3);
  \draw[red, dashed] (-0.5, 0) -- (-0.5, -3);
  \node[right, red] at (0, -2.75) {$L_1$};
  \node[left, red] at (-0.5, -2.75) {$L_2$};
  \draw[<->, red] (-0.5, -2.8) -- (0, -2.8) node[midway, above] {$k$};
  \draw[<->, thick, red] (-1.5, -0.5) -- (-1.5, -2.5) node[midway, left] {$\frac{D}{\sqrt2}$};
  \draw[<->, thick, red] (-1, -3) -- (1, -3) node[midway, below] {$\frac{D}{\sqrt{2}}$};
\end{tikzpicture}
\caption{Nesting Scheme}
\label{fig:lb_strong}
\end{figure}
\end{center}
We next describe this construction formally.
Let the feasible set $\cX$ be 
\[
\cX := \Bigl\{(p,q)\in\mathbb{R}^2 : |p|\le \tfrac{D}{2\sqrt{2}},\;
    -a-\tfrac{D}{\sqrt{2}} \le q \le -a \Bigr\},
\]
for some $a>0$. We define two vertical lines 
$$L_1: \{(p,q): p=0\}\qquad L_2: \{(p,q): p=-k\},$$
where the consecutive minimizers of $f$ over $S_t$'s will lie.

Let the initial minimizer (starting point) be $x_0^\star = (0,-a) \in L_1$.
We want $S_1$ such that the next minimizer $x_1^\star := (-k,-a) \in L_2$. To force this, we define 
$
S_1 := \cX \;\cap\;
\bigl\{\,x\in\mathbb{R}^2 : x_1^\star \cdot x \ge \|x_1^\star\|^2 \bigr\}$ since the function $f$ has circular level sets, the supporting tangent at \(x_1^\star\) is
$T_{x_1^\star}: x_1^\star \cdot x = \|x_1^\star\|^2$.

Going forward, let at timestep $t$, the minimizer of set $S_t$, $x_t^\star$ be such that $x_t^\star\in L_1$. Then, the scheme to pick the next minimizer $x_{t+1}^\star\in L_2$ and the next constraint set $S_{t+1}$ is as follows. The tangent to the function contours at $x_t^\star$ has the equation:
$T_{x_t^\star}: x_t^\star\cdot x = \Vert x_t^\star\Vert^2.$
The desired minimum for $S_{t+1}$ is then,
$x_{t+1}^\star = T_{x_t^\star}\cap L_2.$
The tangent at $x_{t+1}^\star$ is similarly, $T_{x_{t+1}^\star}: x_{t+1}^\star\cdot x = \Vert x_{t+1}^\star\Vert^2$.
The new set $S_{t+1}$ is then defined as:
$S_{t+1} := S_t\cap\{x:x_{t+1}^\star\cdot x \ge \Vert x_{t+1}^\star\Vert^2\}.$
Similarly, if $x_t^\star\in L_2$, the desired minimum $x_{t+1}^\star\in L_1$ is defined as $x_{t+1}^\star = T_{x_t^\star}\cap L_1,$ and the procedure to define the new set is identical to that given above.

We define $r_i$ as the distance between the origin and $x_{2i}^\star$, the local minimizer at time step $t=2i$, (with $x_0^\star$ being the minimizer of $f$ over $\cX$), which belongs to the vertical line $L_1$, i.e.,
$r_i = \Vert x_{2i}^\star\Vert,$
and $\Delta r_i = r_i - r_{i-1}.$ Recall that we are assuming that $||\nabla f(x)|| \le G$ for all $x\in \cX$, thus, $r_0$ is a constant.
 

\begin{lemma}\label{lemma:delta_r}
    $\Delta r_i = \frac{k^2}{r_{i-1}}.$
\end{lemma}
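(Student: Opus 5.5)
The plan is a direct coordinate computation. We follow one full oscillation of the construction, $L_1 \to L_2 \to L_1$, starting from $x_{2(i-1)}^\star \in L_1$. By the definition of $r_{i-1}$ and the fact that the whole construction lies in the half-plane $q\le -a<0$, we can write $x_{2i-2}^\star=(0,-r_{i-1})$. We will then compute the two successive tangent–line intersections prescribed by the scheme.

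\emph{Step 1 (move to $L_2$).} The tangent at $x_{2i-2}^\star$ is $T_{x_{2i-2}^\star}: x_{2i-2}^\star\cdot x=\|x_{2i-2}^\star\|^2$. In coordinates this reads $-r_{i-1}q=r_{i-1}^2$, i.e.\ the horizontal line $q=-r_{i-1}$. Intersecting it with $L_2=\{p=-k\}$ gives
\[
x_{2i-1}^\star=(-k,-r_{i-1}).
\]

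\emph{Step 2 (return to $L_1$).} The tangent at $x_{2i-1}^\star$ is $(-k,-r_{i-1})\cdot(p,q)=k^2+r_{i-1}^2$, that is, $-kp-r_{i-1}q=k^2+r_{i-1}^2$. Setting $p=0$ gives $q=-(k^2+r_{i-1}^2)/r_{i-1}$. Hence
\[
x_{2i}^\star=\Bigl(0,-r_{i-1}-\tfrac{k^2}{r_{i-1}}\Bigr),
\]
so $r_i=\|x_{2i}^\star\|=r_{i-1}+k^2/r_{i-1}$, which is exactly $\Delta r_i=k^2/r_{i-1}$.

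\emph{Step 3 (the points are genuinely the minimizers).} To make the argument complete, we must also justify that the prescribed points really are the minimizers of $f=\|x\|^2$ over $S_{2i-1}$ and $S_{2i}$. For a point $y$ on the boundary of $S_{t+1}$, cut out by the halfspace $\{x: y\cdot x\ge \|y\|^2\}$, every $x$ in that halfspace satisfies $\|x\|\ge \|y\|$ by Cauchy–Schwarz. So $y$ minimizes $f$ over $S_{t+1}$ provided $y\in S_{t+1}$.

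Membership $y\in S_{t+1}$ holds if $y\in S_t$, and we check this in two parts.
\begin{itemize}
\item First, $y$ satisfies all earlier cuts. This is because it lies on the tangent line of the previous minimizer and is farther from the origin along the direction of the monotone staircase.
\item Second, $y$ stays inside $\cX$. This holds as long as $r_i\le a+D/\sqrt2$ and $k\le D/(2\sqrt2)$, which the construction tacitly assumes for the horizon considered.
\end{itemize}

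This feasibility check is the only mildly delicate step; the computation in Steps 1–2 is routine algebra.
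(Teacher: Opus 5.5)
Your Steps 1--2 are correct and are essentially the paper's computation: the paper notes that the right triangles $O\,x_{2i-2}^\star\,x_{2i-1}^\star$ and $O\,x_{2i-1}^\star\,x_{2i}^\star$ share the angle $\theta_i$ at the origin, so $\cos\theta_i = r_{i-1}/\sqrt{r_{i-1}^2+k^2} = \sqrt{r_{i-1}^2+k^2}/r_i$, and your explicit tangent-line intersections give the same relation in coordinates. Step 3 is not needed for the lemma, since the paper defines the $x_t^\star$ as these tangent intersections and treats their minimality as part of the construction; moreover its first bullet is only heuristic, because being farther from the origin is not enough, and the case that actually needs checking (an earlier $L_2$ cut through $(-k,-r_m)$ evaluated at a later $L_1$ point $(0,-r_j)$) requires $r_m(r_j-r_m)\ge k^2$, which follows from the recursion $r_{m+1}=r_m+k^2/r_m$ you just derived.
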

\begin{proof}
We  define $\theta_i$ as the angle made by $x^\star_{2i-2}$ and $x^\star_{2i-1}$ at the origin as shown in Fig. \ref{fig:lb_strong},
$$\theta_i = \cos^{-1}{\frac{\Vert x^\star_{2i-2}\Vert}{\Vert x^\star_{2i-1}\Vert}} = \cos ^{-1}{\frac{\Vert x^\star_{2i-1}\Vert}{\Vert x^\star_{2i}\Vert}}.$$
  Thus, from Fig. \ref{fig:lb_strong}, we get 
$$\cos{\theta_i} = \frac{r_{i-1}}{\sqrt{r_{i-1}^2+k^2}}=\frac{\sqrt{r_{i-1}^2+k^2}}{r_i},$$
$$\implies r_i  = r_{i-1} + \frac{k^2}{r_{i-1}}, \text{and} \ \implies \Delta r_i = r_i- r_{i-1} = \frac{k^2}{r_{i-1}}.$$
\end{proof}

The movement cost incurred by $\cG$ at time $t$ (odd) is $k$ while it is $\sqrt{k^2+\Delta r_{t/2}^2}$ otherwise (the hypotenuse of the triangle formed by $x_t^\star$, $x_{t-1}^\star$, and $x_{t-2}^\star$). Thus, assuming $T$ is even,  the total movement cost incurred by $\cG$ is
  \begin{equation}\label{eq:lbMk}
        M_{\mathcal{G}}(T) = \sum_{i=1}^{T/2} \left(\sqrt{k^2 + \Delta r_i^2} + k \right)\ge kT.
\end{equation}

Note that in this setup, the parameter $k$ (which measures the separation between lines $L_1$ and $L_2$) also controls the $\Delta r_i$ incurred between even timesteps. For any fixed $T$, let $k$ be such that the actions chosen by $\cG$, $x_t=x_t^\star \in \cX$ for $t=1,\dots, T$, but $x_{T+2} \notin \cX$. Formally, let $k$ be such that  
\begin{equation}\label{eq:choicek1}
        \sum_{i=1}^{T/2} \Delta r_i \le \frac{D}{\sqrt{2}}, \text{while} \  
        \sum_{i=1}^{T/2 + 1} \Delta r_i > \frac{D}{\sqrt{2}},
\end{equation}
since the height of $\cX$ is $\frac{D}{\sqrt{2}}$.
    Substituting $\Delta r_i = \frac{k^2}{r_{i-1}}$ gives
 \begin{equation}\label{eq:boundk}
 k^2 \sum_{i=1}^{T/2 + 1} \frac{1}{r_{i-1}} > \frac{D}{\sqrt{2}}.
\end{equation}

    Since $r_{i-1}$ is nondecreasing in $i$, we have
       $ \frac{1}{r_{i-1}} \le \frac{1}{r_0}, 
        \ \forall i \le \frac{T}{2} + 1.$
    Hence, $
        \sum_{i=1}^{T/2 + 1} \frac{1}{r_{i-1}} 
        \le \frac{T/2 + 1}{\,r_0\,}.$
    Plugging this into \eqref{eq:boundk}, we get the $k$ that satisfies \eqref{eq:choicek1} as
\begin{equation}
        k^2 \cdot \frac{T/2 + 1}{r_0} > \frac{D}{\sqrt{2}}
        \quad \implies \quad
        k > \sqrt{ \frac{ \frac{D}{\sqrt{2}} r_0}{ \frac{T}{2} + 1 } }.
\label{eq:k_value_lb}
\end{equation}

    Finally, plugging this into \eqref{eq:lbMk}, we have $
        M_{\mathcal{G}}(T)\ge kT
        > T\sqrt{ \frac{ \frac{D}{\sqrt{2}} r_0}{ \frac{T}{2} + 1 } }
        = \Omega(\sqrt{T})$.

\end{proof}

\section{Proof of Theorem \ref{lem:ubG}}\label{app:ubstrconvxG}
\begin{proof}
Let $f$ be $\mu$-strongly convex.
We make use of the following properties of $f$:
    \begin{itemize}
        \item From the convexity of $f$ (Proposition 1.3 in \cite{bubeck2015convexoptimizationalgorithmscomplexity}), for a set $S_t$ with minimizer $x_t^\star$,
        $$\langle \nabla f(x_t^\star), x_t^\star-y\rangle \le 0 \text{\ \ \ for all $y\in S_t$}.$$
        \item From $\mu$-strong convexity of $f$,
        $
        f(x) - f(y) \le \langle\nabla f(x), x-y\rangle -\frac{\mu}{2}\Vert x-y\Vert^2.$
    \end{itemize}
    Combining the two results, we have
    $
    f(y) - f(x_t^\star) \ge \frac{m}{2}\Vert y-x_t^\star\Vert^2\text{\ \ \ for all $y\in S_t$}.
    $
    Noting that $x^\star_{t+1}\in S_t$ due to the nestedness of the constraint sets, we have,
    $
    f(x_{t+1}^\star) - f(x_t^\star) \ge \frac{m}{2}\Vert x_{t+1}^\star - x_t^\star\Vert^2.$
    Summing from $t=1$ to $T-1$,
    \begin{align*}
    f(x_T^\star) - f(x_1^\star) &\ge \frac{\mu}{2}\sum_{t=1}^{T-1}\Vert x_{t+1}^\star-x_{t}^\star\Vert^2 \overset{(a)}{\ge} \frac{\mu}{2{T}}\left(\sum_{t=1}^{T-1}\Vert x_{t+1}^\star - x_t^\star\Vert\right)^2 = \frac{\mu}{2T}(M_\cG({[2:T]}))^2
    \end{align*}
    where inequality (a) follows from the Cauchy-Schwarz inequality. Since $\mathcal{X}$ has a diameter $D$, $\Vert x_T^\star - x_1^\star\Vert\le D$. From the Lipschitzness of $f$, we therefore have $f(x_T^\star) - f(x_1^\star)\le GD$. This implies,
    $$M_\cG([2:T]) \le \sqrt{\frac{2GDT}{\mu}}.$$
    Moreover, appealing to diameter of $\cX$ being $D$, the movement cost incurred at the first time step cannot be greater than $D$. This implies, $M_\cG(T) \le \sqrt{\frac{2GDT}{\mu}}+D$, which concludes the proof.
\end{proof}
\section{Proof of Lemma \ref{lem:regretBalance}}\label{app:lem:regretBalance}
\begin{proof}
Recall that $v_t = \min_{x \in S_t}f(x)$. Then $C_\opt(t) \ge t \cdot v_t$ for all $t\le T$. Thus, to  prove the Lemma we show that the cost of \textsc{Frugal} at time $t$, $F_t = \sum_{\tau=1}^t f(x_\tau) \le t\,v_t$ by induction on $t$.

\emph{Base case $t=1$.}
Since $F_0 = 0$ and $f(\hat{x}_1) \ge v_1$, the {\it lazy} condition
$F_0 + f(\hat{x}_1) \le v_1$ holds iff $f(\hat{x}_1) = v_1$, i.e.,
$\hat{x}_1$ is a minimizer of $f$ over $S_1$.
If it holds, $F_1 = f(\hat{x}_1) = v_1 = 1\cdot v_1$.
If it fails, the algorithm jumps: $F_1 = v_1 = 1\cdot v_1$.
Thus, in both cases $F_1 = v_1 \le 1\cdot v_1$. 

\emph{Inductive step.}
Assume $F_{t-1} \le (t-1)v_{t-1}$.
Since $S_t \subseteq S_{t-1}$, $x_t^\star \in S_{t-1}$, so $v_t \ge v_{t-1}$ and:
$$F_{t-1} \le (t-1)v_{t-1} \le (t-1)v_t.$$

\textsc{Frugal} either uses the {\it lazy} action or {\it jumps} to local minimizer:
 
i)
{\it lazy} action, i.e. $F_{t-1} + f(\hat{x}_t) \le t\,v_t$, then $F_t = F_{t-1}+f(\hat{x}_t) \le t\,v_t$, and 

ii)
\emph{jumps} which means that $x_t = x_{\text{near}, t}^\star$ with $f(x_{\text{near}, t}^\star) = v_t$ and:
$$F_t = F_{t-1} + v_t \le (t-1)v_t + v_t = t\,v_t.$$

In both cases $F_t \le t\,v_t$, so $\mathrm{Regret}_{\textsc{Frugal}}(t)\le 0$. 
\end{proof}
\section{Proof of Theorem \ref{thm:rsb_sc}}\label{appthmrsbsc}
Since $f$ is strongly convex, its minimizer over a convex set is unique. Thus, $x_t^\star= \arg \min_{x\in S_t} f(x)$  and hence 
$x_{\text{near},t}^\star = x_t^\star$ for all $t$. Thus, for this proof, we replace $x_{\text{near},t}^\star$ with $x_t^\star$ for all $t$. Recall that $v_t=f(x_t^\star)$.
Towards proving Theorem \ref{thm:rsb_sc}, we prove the following Lemma that controls the movement cost of  {\it heads} of each phase.
\begin{lemma}\label{lem:mcBalanceStrongHead}
Let $f: \mathcal{X} \to \mathbb{R}$ be $G$-Lipschitz, $L$-smooth, and $\mu$-strongly
convex over $\mathcal{X}$ with diameter
$D = \max_{x,y \in S_1}\|x-y\|$. Then
    $$M_{{\it head}}(T) \triangleq \sum_{k=1}^{N-1}\delta_k
    \;\le\; \frac{2G+(L+\mu)D}{\mu}\log T,$$
    where $\delta_k = \|x_{t_{k+1}}^\star - x_{t_k}^\star\|$ is the distance between
     {\it heads} of phases $k$ and $k+1$, and $N$ is the total number of phases encountered by \textsc{Frugal} by time $T$.
\end{lemma}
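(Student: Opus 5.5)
The plan is to compare, over each phase, the budget that \textsc{Frugal} gains with the budget it spends. Each phase then yields a bound on $\delta_k$ in terms of the ratio $t_{k+1}/t_k$, and these bounds telescope into a logarithm. Throughout, $x^\star_t$ is unique by strong convexity, so $x_{\text{near},t}^\star=x_t^\star$.

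\emph{Step 1 (budget gained).} At the head $t_k$ the algorithm jumps. The proof of Lemma~\ref{lem:regretBalance} shows $F_{t_k}\le t_k v_{t_k}$. By nestedness, $x^\star_{t_{k+1}}\in S_{t_k}$. As in the proof of Theorem~\ref{lem:ubG}, first-order optimality of $x^\star_{t_k}$ combined with $\mu$-strong convexity gives
\[
v_{t_{k+1}}-v_{t_k}\ \ge\ \tfrac{\mu}{2}\delta_k^2 .
\]

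\emph{Step 2 (budget spent).} The times $t_k+1,\dots,t_{k+1}-1$ are lazy. At $t_{k+1}$ the lazy test fails, so $F_{t_{k+1}-1}+f(\hat x_{t_{k+1}})>t_{k+1}v_{t_{k+1}}$. Subtracting $F_{t_k}\le t_kv_{t_k}$ gives
\[
\sum_{\tau=t_k+1}^{t_{k+1}}\bigl(f(y_\tau)-v_{t_{k+1}}\bigr)\ >\ t_k\,(v_{t_{k+1}}-v_{t_k})\ \ge\ t_k\tfrac{\mu}{2}\delta_k^2 .
\]
Here $y_\tau=x_\tau$ for $\tau<t_{k+1}$ and $y_{t_{k+1}}=\hat x_{t_{k+1}}$.

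The key geometric point is that every $y_\tau$ lies within distance $\delta_k$ of $x^\star_{t_{k+1}}$. Indeed, $y_\tau$ is obtained from $x^\star_{t_k}$ by successive projections onto $S_{t_k+1},\dots,S_\tau$. Each of these sets contains $x^\star_{t_{k+1}}$, and Euclidean projection onto a convex set is nonexpansive toward points of that set. Hence $\|y_\tau-x^\star_{t_{k+1}}\|\le\|x^\star_{t_k}-x^\star_{t_{k+1}}\|=\delta_k$. By $G$-Lipschitzness, each summand is at most $G\delta_k$, and so
\[
(t_{k+1}-t_k)\,G\delta_k\ >\ t_k\tfrac{\mu}{2}\delta_k^2 ,
\qquad\text{i.e.}\qquad
\delta_k<\tfrac{2G}{\mu}\Bigl(\tfrac{t_{k+1}}{t_k}-1\Bigr).
\]
If $\delta_k=0$ there is nothing to prove. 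A smoothness-based bound on $f(y_\tau)-v_{t_{k+1}}$, using $\nabla f(x^\star_{t_{k+1}})$ plus $\tfrac{L}{2}\delta_k^2$, would give the same form with the paper's constant $2G+(L+\mu)D$. The Lipschitz route above already suffices.

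\emph{Step 3 (telescoping).} We also have $\delta_k\le D$. So
\[
\delta_k\le \min\Bigl\{D,\ \tfrac{2G}{\mu}(r_k-1)\Bigr\},\qquad r_k=\tfrac{t_{k+1}}{t_k}.
\]
If $r_k\le 2$, then $r_k-1\le 2\log r_k$, which gives $\delta_k\le\tfrac{4G}{\mu}\log r_k$. If $r_k>2$, then $\delta_k\le D\le \tfrac{D}{\log 2}\log r_k$. In both cases $\delta_k\le C\log r_k$ with $C=O\bigl(\tfrac{2G+\mu D}{\mu}\bigr)$, which is dominated by the stated constant. Summing over $k$, $\sum_k\log r_k=\log(t_N/t_1)\le\log T$, and the lemma follows.

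I expect the main obstacle to be Step 2. One must bound the objective along the entire lazy chain using only data from the phase endpoints. The nonexpansiveness observation, which uses that $x^\star_{t_{k+1}}$ belongs to every intermediate set, is what makes this bound independent of the phase length. One also has to check the accounting at the jump step itself, namely that $\hat x_{t_{k+1}}$ is used in the failed test rather than $x_{t_{k+1}}$.
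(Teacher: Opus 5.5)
Your Steps 1 and 2 follow the paper's argument. Both use the failed lazy test at $t_{k+1}$ together with $F_{t_k}\le t_k v_{t_k}$. Both use nonexpansiveness of projection toward $x^\star_{t_{k+1}}$, which lies in every intermediate set, to keep all lazy iterates and $\hat x_{t_{k+1}}$ within $\delta_k$ of $x^\star_{t_{k+1}}$. Both get $v_{t_{k+1}}-v_{t_k}\ge\frac{\mu}{2}\delta_k^2$ from first-order optimality plus strong convexity. Your direct Lipschitz bound $f(y_\tau)-v_{t_{k+1}}\le G\delta_k$ is valid. It is sharper than the paper's smoothness bound $G\delta_k+\frac{L}{2}\delta_k^2$: the paper bounds the gradient term by $G$ anyway, so its $L$-term is pure loss.

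The gap is in Step 3. Your case split at $r_k=2$ gives $\sum_k\delta_k\le\max\{4G/\mu,\;D/\log 2\}\log T$. Your claim that this constant is dominated by $\frac{2G+(L+\mu)D}{\mu}$ is false in general, because $4G/\mu\le\frac{2G+(L+\mu)D}{\mu}$ requires $2G\le(L+\mu)D$. This fails, for example, for $f(x)=\frac12\|x\|^2$ on a small set far from the origin, where $L=\mu=1$, $D$ is small, and $G$ is about the distance to the origin. Sharpening $r-1\le 2\log r$ to $r-1\le\log r/\log 2$ on $[1,2]$ does not help, since $2G/(\mu\log 2)>2G/\mu$. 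As written, you prove the lemma only up to a constant factor. That is enough for the $O(\log T)$ in Theorem~\ref{thm:rsb_sc}, but not for the stated inequality.

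The fix is the paper's final step. From $r_k>1+\frac{\mu\delta_k}{2G}$ and $\log(1+x)\ge\frac{x}{1+x}$ for $x\ge 0$,
\[
\log r_k\;\ge\;\frac{\mu\delta_k}{2G+\mu\delta_k}\;\ge\;\frac{\mu\delta_k}{2G+\mu D}.
\]
Summing over $k$ gives $\sum_k\delta_k\le\frac{2G+\mu D}{\mu}\log T$, which implies the stated bound. Combined with your Lipschitz step, this also shows the $L$-dependence is unnecessary.
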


\begin{proof}

Between {\it jump} times  $t_k$ and $t_{k+1}$, define the {\it lazy} sequence:
$$\ell_{t_k} := x_{t_k}^\star, \qquad
\ell_\tau := \Pi_{S_\tau}(x_{\tau-1}) \quad\text{for } \ \tau \in \{t_k+1,\dots,t_{k+1}\}.$$
For $\tau \in \{t_k+1,\dots,t_{k+1}-1\}$ the {\it lazy} condition holds so
$x_\tau = \ell_\tau$, giving:
$$F_{t_{k+1}-1} = F_{t_k} + \sum_{\tau=t_k+1}^{t_{k+1}-1}f(\ell_\tau).$$
The {\it jump} at $t_{k+1}$ is triggered because:
$$F_{t_{k+1}-1} + f(\ell_{t_{k+1}}) > t_{k+1}\,v_{t_{k+1}},$$
which after substituting $F_{t_{k+1}-1}$ gives:
\begin{equation}\label{eq:rawtrig}
F_{t_k} + \sum_{\tau=t_k+1}^{t_{k+1}} f(\ell_\tau) > t_{k+1}\,v_{t_{k+1}}.
\end{equation}
From Lemma \ref{lem:regretBalance}, $F_{t_k} \le t_k\,v_{t_k}$ for all $t_k$, so
$-F_{t_k} \ge -t_k\,v_{t_k}$.
Applying this to \eqref{eq:rawtrig} and subtracting
$(t_{k+1}-t_k)v_{t_{k+1}}$ from both sides:
\begin{equation}\label{eq:trig}
\sum_{\tau=t_k+1}^{t_{k+1}} \bigl(f(\ell_\tau) - v_{t_{k+1}}\bigr)
> t_k\bigl(v_{t_{k+1}} - v_{t_k}\bigr).
\end{equation}

Next, we show $\|\ell_\tau - x_{t_{k+1}}^\star\| \le \delta_k$ for all
$\tau \in \{t_k,\dots,t_{k+1}\}$ by induction on $\tau$.

It is clearly true for the base case  $\tau = t_k$ since 
$\|\ell_{t_k} - x_{t_{k+1}}^\star\| = \|x_{t_k}^\star - x_{t_{k+1}}^\star\| = \delta_k$. 

Next, we use the inductive hypothesis.
Since $S_{t_{k+1}} \subseteq S_\tau$ for $\tau \in \{t_k,\dots,t_{k+1}\}$, we have $x_{t_{k+1}}^\star \in S_\tau$ for $\tau \in \{t_k,\dots,t_{k+1}\}$.
By the non-expansiveness of Euclidean projection onto a convex set:
$$\|\ell_\tau - x_{t_{k+1}}^\star\|
= \|\Pi_{S_\tau}(x_{\tau-1}) - x_{t_{k+1}}^\star\|
\le \|x_{\tau-1} - x_{t_{k+1}}^\star\| \le \delta_k,$$ for 
$\tau \in \{t_k,\dots,t_{k+1}\}$.

Hence:
\begin{equation}\label{eq:contract}
\|\ell_\tau - x_{t_{k+1}}^\star\| \le \delta_k
\quad \text{for all } \tau \in \{t_k,\dots,t_{k+1}\}.
\end{equation}

Next, we upper bound the LHS of \eqref{eq:trig}.
By $L$-smoothness of $f$ at $y = x_{t_{k+1}}^\star$:
$$f(\ell_\tau) - v_{t_{k+1}}
\le \langle\nabla f(x_{t_{k+1}}^\star),\, \ell_\tau-x_{t_{k+1}}^\star\rangle
   + \frac{L}{2}\|\ell_\tau-x_{t_{k+1}}^\star\|^2.$$
By $G$-Lipschitzness, $\|\nabla f(x_{t_{k+1}}^\star)\| \le G$.
Applying Cauchy--Schwarz and \eqref{eq:contract}:
\begin{equation}\label{eq:dummy1}
f(\ell_\tau) - v_{t_{k+1}}
\le G\|\ell_\tau-x_{t_{k+1}}^\star\| + \frac{L}{2}\|\ell_\tau-x_{t_{k+1}}^\star\|^2
\le G\delta_k + \frac{L}{2}\delta_k^2.
\end{equation}
Let $\Delta_k = t_{k+1}- t_k$. Then, summing \eqref{eq:dummy1} over $\tau = t_k+1,\dots,t_{k+1}$:
\begin{equation}\label{eq:lhs}
\sum_{\tau=t_k+1}^{t_{k+1}}\bigl(f(\ell_\tau)-v_{t_{k+1}}\bigr)
\le \Delta_k\!\left(G\delta_k + \frac{L}{2}\delta_k^2\right).
\end{equation}

Now, we lower bound on the RHS of \eqref{eq:trig}.
Since $S_{t_{k+1}} \subseteq S_{t_k}$, $x_{t_{k+1}}^\star$ is feasible for $S_{t_k}$.
By first-order optimality of $x_{t_k}^\star$ over $S_{t_k}$:
$$\langle\nabla f(x_{t_k}^\star),\, x_{t_{k+1}}^\star - x_{t_k}^\star\rangle \ge 0.$$
By $\mu$-strong convexity at $y = x_{t_k}^\star$, $x = x_{t_{k+1}}^\star$:
\begin{equation}\label{eq:rhs}
v_{t_{k+1}} - v_{t_k}
= f(x_{t_{k+1}}^\star) - f(x_{t_k}^\star)
\ge \langle\nabla f(x_{t_k}^\star),\, x_{t_{k+1}}^\star-x_{t_k}^\star\rangle
   + \frac{\mu}{2}\delta_k^2
\ge \frac{\mu}{2}\delta_k^2.
\end{equation}

Combining \eqref{eq:trig}, \eqref{eq:lhs}, and \eqref{eq:rhs},
$$\Delta_k\!\left(G\delta_k + \frac{L}{2}\delta_k^2\right)
\;\ge\; \sum_{\tau=t_k+1}^{t_{k+1}}\bigl(f(\ell_\tau)-v_{t_{k+1}}\bigr)
\;>\; t_k(v_{t_{k+1}}-v_{t_k})
\;\ge\; \frac{\mu}{2}\,t_k\,\delta_k^2.$$
Since $\delta_k > 0$ (otherwise a {\it jump} will not happen), divide both sides by
$t_k(G\delta_k + \frac{L}{2}\delta_k^2) = \frac{t_k\delta_k}{2}(2G+L\delta_k) > 0$:
\begin{equation}\label{eq:ratio}
\frac{\Delta_k}{t_k}
> \frac{\frac{\mu}{2}\delta_k^2}{G\delta_k+\frac{L}{2}\delta_k^2}
= \frac{\mu\delta_k}{2G+L\delta_k}.
\end{equation}
Hence:
\begin{equation}\label{eq:growth}
t_{k+1} = t_k + \Delta_k > t_k\left(1 + \frac{\mu\delta_k}{2G+L\delta_k}\right).
\end{equation}

Applying \eqref{eq:growth} iteratively from $k=1$ to $k=N-1$, where $N$ is the total number of {\it jumps} made by \textsc{Frugal} over the time horizon $[1:T]$,:
$$t_N > t_1\prod_{k=1}^{N-1}\left(1+\frac{\mu\delta_k}{2G+L\delta_k}\right)
\ge \prod_{k=1}^{N-1}\left(1+\frac{\mu\delta_k}{2G+L\delta_k}\right),$$
where the last inequality uses $t_1 \ge 1$.
Since $t_N \le T$, taking logarithms:
\begin{equation}\label{eq:logprod}
\log T \ge \log t_N
> \sum_{k=1}^{N-1}\log\!\left(1+\frac{\mu\delta_k}{2G+L\delta_k}\right).
\end{equation}

Note that for any $x \ge 0$, $\log(1+x) \ge \frac{x}{1+x}$
(since $h(x) = \log(1+x) - \frac{x}{1+x}$ satisfies $h(0)=0$
and $h'(x) = \frac{x}{(1+x)^2} \ge 0$).
Setting $x = \frac{\mu\delta_k}{2G+L\delta_k}$:
$$\log\!\left(1+\frac{\mu\delta_k}{2G+L\delta_k}\right)
\ge \frac{\frac{\mu\delta_k}{2G+L\delta_k}}{1+\frac{\mu\delta_k}{2G+L\delta_k}}
= \frac{\mu\delta_k}{2G+(L+\mu)\delta_k}.$$
Since $\delta_k \le D$, the denominator $2G+(L+\mu)\delta_k \le 2G+(L+\mu)D$, so:
$$\frac{\mu\delta_k}{2G+(L+\mu)\delta_k}
\ge \frac{\mu\delta_k}{2G+(L+\mu)D}
=: C\,\delta_k,$$
where $C = \frac{\mu}{2G+(L+\mu)D} > 0$.
Substituting into \eqref{eq:logprod}:
$$\log T > C\sum_{k=1}^{N-1}\delta_k.$$
Rearranging:
\begin{equation}\label{eq:movbound}
M_{{\it head}}(T) = \sum_{k=1}^{N-1}\delta_k 
< \frac{\log T}{C}
= \frac{2G+(L+\mu)D}{\mu}\log T.
\end{equation}
\end{proof}

In light of Lemma \ref{lem:mcBalanceStrongHead}, to prove Theorem \ref{thm:rsb_sc}, we need to upper bound the movement cost within each phase that we do so as follows.
\begin{lemma}\label{lem:mcBalanceStrongPhase}
Let the movement cost within phase $k$ of the \textsc{Frugal} algorithm be 
$$M_{k} = \sum_{\tau=t_k+1}^{t_{k+1}}||x_{\tau}-x_{\tau-1}||.$$
Then
    $$M_{k}  \le O(d^{d/2} \delta_k).$$
\end{lemma}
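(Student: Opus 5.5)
The plan is to reduce the movement inside phase $k$ to a projection curve on a family of nested convex sets whose diameter is $O(\delta_k)$, and then invoke Lemma \ref{lem:projdistanceint}. Phase $k$ consists of three parts. It starts at $x_{t_k}=x_{t_k}^\star$. It then takes lazy steps $x_\tau=\Pi_{S_\tau}(x_{\tau-1})$ for $\tau=t_k+1,\dots,t_{k+1}-1$. It ends with the jump to $x_{t_{k+1}}=x^\star_{t_{k+1}}$. I would split $M_k$ into the lazy part $\sum_{\tau=t_k+1}^{t_{k+1}-1}\|x_\tau-x_{\tau-1}\|$ and the single jump term $\|x_{t_{k+1}}-x_{t_{k+1}-1}\|$.

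For the jump term I would use \eqref{eq:contract} from the proof of Lemma \ref{lem:mcBalanceStrongHead}. For $\tau\le t_{k+1}-1$ the lazy iterate $\ell_\tau$ equals $x_\tau$, so $\|x_{t_{k+1}-1}-x^\star_{t_{k+1}}\|\le\delta_k$. Hence the jump costs at most $\delta_k$.

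For the lazy part, let $B:=\cB(x^\star_{t_{k+1}},\delta_k)$ and define $K_\tau:=S_\tau\cap B$ for $\tau=t_k,\dots,t_{k+1}-1$. These sets have the required properties:
\begin{itemize}
\item they are convex and nested, since the $S_\tau$ are nested;
\item they are nonempty, since $x^\star_{t_{k+1}}\in S_{t_{k+1}}\subseteq S_\tau$ and it is the center of $B$;
\item they satisfy $\mathrm{diameter}(K_{t_k})\le 2\delta_k$.
\end{itemize}
Also $\sigma_{t_k}:=x_{t_k}=x^\star_{t_k}\in K_{t_k}$, because $\|x^\star_{t_k}-x^\star_{t_{k+1}}\|=\delta_k$.

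The key observation is that the lazy steps are exactly projections onto the $K_\tau$, i.e. $\Pi_{S_\tau}(x_{\tau-1})=\Pi_{K_\tau}(x_{\tau-1})$. By \eqref{eq:contract}, $\Pi_{S_\tau}(x_{\tau-1})\in B$, so it lies in $K_\tau\subseteq S_\tau$. A point of $K_\tau$ that minimizes the distance to $x_{\tau-1}$ over the larger set $S_\tau$ also minimizes it over $K_\tau$, and projections onto closed convex sets are unique. Hence the lazy trajectory is a projection curve on $K_{t_k}\supseteq\cdots\supseteq K_{t_{k+1}-1}$ in the sense of Definition \ref{defn:projectioncurve}. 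Lemma \ref{lem:projdistanceint} then bounds its length by $O(d^{d/2}\cdot 2\delta_k)$.

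Adding the jump term gives $M_k\le O(d^{d/2}\delta_k)+\delta_k=O(d^{d/2}\delta_k)$. The main obstacle is justifying the identification of lazy steps with projections onto the truncated sets $K_\tau$. This is what replaces the crude diameter $D$ by $\delta_k$, and it rests entirely on the contraction property \eqref{eq:contract} together with uniqueness of Euclidean projection.

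The last phase (after $t_N$, with no subsequent jump) has no $\delta_N$. It is a plain projection curve in $S_{t_N}\subseteq\cX$, so Lemma \ref{lem:projdistanceint} with diameter $D$ bounds it by $O(d^{d/2}D)$. This is an additive constant, which is harmless for Theorem \ref{thm:rsb_sc}.
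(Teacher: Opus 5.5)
Your proof is correct and follows the paper's route. You bound the final jump by $\delta_k$ via \eqref{eq:contract}, and you treat the lazy steps as a projection curve on ball-truncated nested sets, so that Lemma \ref{lem:projdistanceint} applies with diameter $O(\delta_k)$. The differences are cosmetic: you truncate with $\cB(x^\star_{t_{k+1}},\delta_k)$, taken directly from \eqref{eq:contract}, instead of the paper's $\cB(x^\star_{t_k},2\delta_k)$ from Lemma \ref{lem:phase-confinement}, and you explicitly justify $\Pi_{S_\tau}(x_{\tau-1})=\Pi_{K_\tau}(x_{\tau-1})$, which the paper only asserts.
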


To prove Lemma \ref{lem:mcBalanceStrongPhase} using Lemma \ref{lem:projdistanceint}, we need the following result.

\begin{lemma}\label{lem:phase-confinement}
Phase $k$ of \textsc{Frugal} spans time slots $\tau \in \{t_k,\dots,t_{k+1}-1\}$. The actions $x_\tau$ for $\tau \in \{t_k,\dots,t_{k+1}-1\}$,  played during phase $k$ lie within a ball of radius
$2\delta_k$ centred at $x_{t_k}^\star$, i.e., 
$$\|x_\tau - x_{t_k}^\star\| \le 2\delta_k
\qquad\forall\,\tau \in \{t_k, t_k+1, \dots, t_{k+1}-1\}.$$
\end{lemma}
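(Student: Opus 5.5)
The plan is to reuse the contraction estimate \eqref{eq:contract} from the proof of Lemma \ref{lem:mcBalanceStrongHead} and then apply the triangle inequality once. Throughout phase $k$ the played actions coincide with the lazy sequence $\ell_\tau$: at the head, $x_{t_k}=x_{t_k}^\star=\ell_{t_k}$, and for $\tau\in\{t_k+1,\dots,t_{k+1}-1\}$ the lazy condition holds, so $x_\tau=\Pi_{S_\tau}(x_{\tau-1})=\ell_\tau$.

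\textbf{Step 1 (distance to the next head).} We show $\|x_\tau - x_{t_{k+1}}^\star\|\le \delta_k$ for every $\tau$ in the phase, by induction on $\tau$.
\begin{itemize}
\item The base case $\tau=t_k$ holds by the definition of $\delta_k$.
\item For the inductive step, nestedness gives $S_{t_{k+1}}\subseteq S_\tau$ for $\tau\le t_{k+1}$, so $x_{t_{k+1}}^\star\in S_\tau$ is a fixed point of $\Pi_{S_\tau}$.
\item Non-expansiveness of Euclidean projection onto a convex set then gives
$\|x_\tau-x_{t_{k+1}}^\star\|=\|\Pi_{S_\tau}(x_{\tau-1})-\Pi_{S_\tau}(x_{t_{k+1}}^\star)\|\le\|x_{\tau-1}-x_{t_{k+1}}^\star\|\le\delta_k$.
\end{itemize}
This is exactly \eqref{eq:contract}, restricted to the slots where $x_\tau=\ell_\tau$.

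\textbf{Step 2 (triangle inequality).} For every $\tau\in\{t_k,\dots,t_{k+1}-1\}$,
$$\|x_\tau - x_{t_k}^\star\|\le \|x_\tau - x_{t_{k+1}}^\star\| + \|x_{t_{k+1}}^\star - x_{t_k}^\star\|\le \delta_k+\delta_k = 2\delta_k.$$

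The argument has no genuine obstacle; the only subtle point is that it needs a next jump time $t_{k+1}$ to exist, so that $\delta_k$ is defined and $x_{t_{k+1}}^\star$ lies in every $S_\tau$ of the phase. For the last phase $N$, $\delta_N$ is not defined, and its movement would have to be handled separately, for instance by the diameter bound $D$ through Lemma \ref{lem:projdistanceint}, contributing an additive $O(d^{d/2}D)$ to Theorem \ref{thm:rsb_sc}. With the confinement established, Lemma \ref{lem:mcBalanceStrongPhase} follows by applying Lemma \ref{lem:projdistanceint} to the nested sets $K_\tau=S_\tau\cap\mathcal{B}(x_{t_k}^\star,2\delta_k)$, which have diameter at most $4\delta_k$.
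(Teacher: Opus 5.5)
Your proof is correct and follows the paper's argument: the paper also uses $x_{t_k}=x^\star_{t_k}$ and the contraction bound \eqref{eq:contract}, which you re-derive by the same non-expansiveness induction, and then finishes with the same triangle inequality. Your caveat that the lemma presupposes a next jump $t_{k+1}$, so the final phase must be bounded separately, is a fair point that the paper leaves implicit.
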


\begin{proof}
Phase $k$ runs from $\tau = t_k$ to $\tau = t_{k+1}-1$.
The first iterate is $x_{t_k} = x_{t_k}^\star$ (the forced {\it jump} at the start of the
phase), so $\|x_{t_k} - x_{t_k}^\star\| = 0 \le 2\delta_k$. 

For $\tau \in \{t_k+1,\dots,t_{k+1}-1\}$, \textsc{Frugal} plays {\it lazily}:
$x_\tau = \ell_\tau = \Pi_{S_\tau}(x_{\tau-1})$.
From \eqref{eq:contract}, we have 
$$\|\ell_\tau - x_{t_{k+1}}^\star\| \le \delta_k
\qquad\forall\,\tau\in\{t_k,\dots,t_{k+1}-1\}.$$
Applying the triangle inequality:
$$\|x_\tau - x_{t_k}^\star\|
= \|\ell_\tau - x_{t_k}^\star\|
\le \|\ell_\tau - x_{t_{k+1}}^\star\| + \|x_{t_{k+1}}^\star - x_{t_k}^\star\|
\le \delta_k + \delta_k
= 2\delta_k,$$
$\forall\,\tau\in\{t_k,\dots,t_{k+1}-1\}$.
\end{proof}
Next, we bound the movement cost within each phase except the last step where the {\it jump} happens as follows.
\begin{lemma}\label{lem:projdistance} In phase $k$, the movement cost $\sum_{\tau=t_k+1}^{t_{k+1}-1}||x_{\tau}-x_{\tau-1}||$ just before the {\it jump} to $x^\star_{t_{k+1}}$ for algorithm  \textsc{Frugal} is at most $O(2 d^{d/2} \delta_k)$.
\end{lemma}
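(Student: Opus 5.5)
The plan is to recognize the within-phase trajectory as a projection curve in the sense of Definition \ref{defn:projectioncurve} and then apply Lemma \ref{lem:projdistanceint}. The only thing to arrange is a first convex body whose diameter is $O(\delta_k)$.

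During phase $k$, for $\tau\in\{t_k+1,\dots,t_{k+1}-1\}$, \textsc{Frugal} plays lazily, so $x_\tau=\Pi_{S_\tau}(x_{\tau-1})$ with $x_{t_k}=x^\star_{t_k}$. We want nested convex sets $K_{t_k}\supseteq K_{t_k+1}\supseteq\cdots$ such that the iterates are exactly the successive projections onto the $K_\tau$'s, with $\mathrm{diameter}(K_{t_k})=O(\delta_k)$. The natural choice is to intersect with a ball. By Lemma \ref{lem:phase-confinement}, all phase-$k$ iterates lie in $\cB(x^\star_{t_k},2\delta_k)$.

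A cleaner option is suggested by \eqref{eq:contract}: set $B_k:=\cB(x^\star_{t_{k+1}},\delta_k)$ and define $K_\tau:=S_\tau\cap B_k$. These sets are convex and nested, since the $S_\tau$ are nested and $B_k$ is fixed. Each is nonempty because $x^\star_{t_{k+1}}\in S_\tau\cap B_k$ for $\tau\le t_{k+1}$. The first set satisfies $\mathrm{diameter}(K_{t_k})\le 2\delta_k$, and $\sigma_{t_k}:=x_{t_k}=x^\star_{t_k}\in K_{t_k}$ because $\|x^\star_{t_k}-x^\star_{t_{k+1}}\|=\delta_k$.

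The key step, and the expected obstacle, is showing $\Pi_{K_\tau}(x_{\tau-1})=\Pi_{S_\tau}(x_{\tau-1})$. Equivalently, the projection onto $S_\tau$ already lands inside $B_k$. This follows from \eqref{eq:contract}: $\|\Pi_{S_\tau}(x_{\tau-1})-x^\star_{t_{k+1}}\|\le\delta_k$. So $\Pi_{S_\tau}(x_{\tau-1})\in K_\tau\subseteq S_\tau$, and a point of $S_\tau$ that is closest to $x_{\tau-1}$ among all of $S_\tau$ is also the closest point in the subset $K_\tau$. Hence the sequence $x_{t_k},x_{t_k+1},\dots,x_{t_{k+1}-1}$ is a projection curve on $K_{t_k},\dots,K_{t_{k+1}-1}$.

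Lemma \ref{lem:projdistanceint} then gives
$$\sum_{\tau=t_k+1}^{t_{k+1}-1}\|x_\tau-x_{\tau-1}\|\le O\bigl(d^{d/2}\,\mathrm{diameter}(K_{t_k})\bigr)\le O(2d^{d/2}\delta_k),$$
which is the claim.

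If one prefers to avoid relying on exact equality of projections, there is a fallback using the ball $\cB(x^\star_{t_k},2\delta_k)$ from Lemma \ref{lem:phase-confinement}. That gives diameter $4\delta_k$ and the same $O(d^{d/2}\delta_k)$ order, using the same "projection already lies in the subset" argument.
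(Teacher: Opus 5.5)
Your proposal is correct and is essentially the paper's argument: intersect the nested $S_\tau$ with a fixed ball of radius $O(\delta_k)$ that contains all phase-$k$ iterates, note that the projections are unchanged because $\Pi_{S_\tau}(x_{\tau-1})$ already lies in that ball, and apply Lemma~\ref{lem:projdistanceint}. There are two minor differences. The paper uses $\cB(x^\star_{t_k},2\delta_k)$ from Lemma~\ref{lem:phase-confinement}, which is your fallback, rather than your tighter $\cB(x^\star_{t_{k+1}},\delta_k)$ from \eqref{eq:contract}; and because you include $K_{t_k}=S_{t_k}\cap B_k\ni x^\star_{t_k}$, the first step $(x_{t_k},x_{t_k+1})$ becomes part of the projection curve, a step (trivially at most $2\delta_k$) that the paper's indexing from $K_1=S_{t_k+1}\cap\cB(x^\star_{t_k},2\delta_k)$ leaves implicit.
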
 
\begin{proof} Since for $\tau \in \{t_k+1,\dots,t_{k+1}-1\}$, \textsc{Frugal} plays {\it lazily}: $x_\tau = \ell_\tau = \Pi_{S_\tau}(x_{\tau-1})$, the curve 
$(x_{t_k}, x_{t_k+1}, x_{t_k+2}, \dots, x_{t_{k+1}-1})$ is a projection curve. Thus, combining, Lemma \ref{lem:projdistanceint} and Lemma \ref{lem:phase-confinement}, we get the result.
\end{proof}

\begin{proof}(Proof of Theorem \ref{lem:mcBalanceStrongPhase})
The movement cost of the last step in phase $k$ when the algorithm {\it jumps} from $x_{t_{k+1}-1}$ to $x_{t_{k+1}}=x_{t_{k+1}^\star}$ is $||x_{t_{k+1}^\star}-x_{t_{k+1}-1}|| \le \delta_k$. Combining this with Lemma \ref{lem:projdistance}, we get the result of Lemma \ref{lem:mcBalanceStrongPhase}.
\end{proof}

Next, we complete the proof of Theorem \ref{thm:rsb_sc}.
\begin{proof}(Proof of Theorem \ref{thm:rsb_sc})
Using Lemma \ref{lem:mcBalanceStrongHead} and Lemma \ref{lem:mcBalanceStrongPhase}, and summing across all $N$ phases of \textsc{Frugal}, we get 
\begin{align*}
M_{\textsc{Frugal}}(T) & = \sum_{k=1}^{N-1}(\delta_k + M_{k}), \\
& \le O(d^{d/2}) \sum_{k=1}^{N-1}\delta_k, \\
& \le O\left(d^{d/2}\frac{2G+(L+\mu)D}{\mu} \log T\right). \\
\end{align*}
\end{proof}

\section{Proof of Theorem 8}
\label{app:thm:lbsclogT}
Let $f(x) = \frac{1}{2}\|x\|^2$, which is $1$-strongly convex and $1$-smooth. Since $f$ is strongly convex, we recall that $x_t^\star= \arg \min_{x\in S_t} f(x)$ and $v_t=f(x_t^\star)$. Let $\mathcal{X} \subset \mathbb{R}^2$ be a rectangle $$\mathcal{X} = \{(p,q) : |p| \le \frac{1}{2},\; -b \le q \le -a\}$$ for constants $a, b > 0$ with $b > a$ and $a > \frac{1}{4(b-a)}$. 

Define $$R_{\max} := \max_{x \in \mathcal{X}}\|x\| \le \sqrt{\frac{1}{4}+b^2}$$ to be the maximum distance of any point of $\mathcal{X}$ from the origin, which is bounded by a constant $\Theta(1)$. To derive the lower bound for any online algorithm $\mathcal{A}$, we consider the input revealed over $K$ periods. 

We define the target number of periods dynamically based on the horizon $T$:$$K := \left\lfloor c \frac{\log T}{\log \log T} \right\rfloor$$ for a sufficiently small constant $c \in (0, 1)$ to be determined later. Set the spatial parameters to scale with $K$:$$a > \frac{B^2}{4\delta}, B := \frac{1}{\sqrt{K}} \qquad \text{and} \qquad \delta := \frac{b-a}{K}.$$Note that the nesting condition $a > \frac{B^2}{4\delta}$ evaluates to $a > \frac{1/K}{4(b-a)/K} = \frac{1}{4(b-a)}$, which is strictly satisfied by our choice of constants $a$ and $b$, independently of $K$. Fix the tracking tolerance $\epsilon := \frac{B}{4} = \frac{1}{4\sqrt{K}}$. Define the regret gap parameter:$$\eta := \frac{\epsilon^2}{2} = \frac{1}{32K}.$$ Define period minimizers for $k = 1,\ldots,K$:$$x_k^\star := \left((-1)^k\frac{B}{2\sqrt{2}},\;-(a+k\delta)\right).$$$x_k^\star \in \mathcal{X}$ for $k = 1,\ldots,K$ since $|p_k^\star| = \frac{B}{2\sqrt{2}} \le \frac{1}{2}$ and $q_k^\star = -(a+k\delta) \in [-(a+b-a),-a] = [-b,-a]$. Set $x_0 := x_1^\star, S_0 := \mathcal{X}, t_1 := 1$.Let $$H_k := \{x \in \mathbb{R}^2 : \langle x_k^\star, x - x_k^\star\rangle \ge 0\}.$$ Let the action of $\mathcal{A}$ at time $t$ be $x_t$. At time $t_k$, the adversary reveals $S_{t_k} = S_{t_{k-1}} \cap H_k$ and holds it constant until the period invariant $\|x_t - x_k^\star\| \le \epsilon$ is first met by $\mathcal{A}$, whereupon period $k+1$ begins. Thus, $S_\tau = S_{t_k}$ for $\tau=t_k, t_k+1, \dots, t_{k+1}-1$, where $t_{k+1}$ is the earliest time $t$ at which $\mathcal{A}$ chooses an action $x_t$ such that $\|x_t - x_k^\star\| \le \epsilon$. 

\begin{lemma}\label{lem:a1} For all $k=1,\ldots,K$, $x_k^\star \in S_{t_k}$ is the unique minimizer of $f$ over $S_{t_k}$, and $v_k := \min_{x\in S_{t_k}}f(x) =  \frac{1}{2}\|x_k^\star\|^2$.
\end{lemma}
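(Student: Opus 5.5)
The plan is to argue directly from the explicit coordinates of the period minimizers and the half-spaces $H_j$. There are two parts: feasibility of $x_k^\star$ in $S_{t_k}$, and optimality with uniqueness. By construction $S_{t_k} = \mathcal{X}\cap H_1\cap\cdots\cap H_k$, since $S_{t_1} = S_0\cap H_1 = \mathcal{X}\cap H_1$ and each later distinct set intersects the previous one with one more half-space.

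\emph{Feasibility.} We already noted that $x_k^\star\in\mathcal{X}$. Membership $x_k^\star\in H_k$ is immediate, because $\langle x_k^\star, x_k^\star - x_k^\star\rangle = 0$. The substantive step is to show $x_k^\star\in H_j$ for every $j<k$, i.e.
$$\langle x_j^\star, x_k^\star\rangle - \|x_j^\star\|^2 \ge 0 .$$
Write $x_j^\star=(p_j,q_j)$ with $|p_j| = \frac{B}{2\sqrt2}$ and $q_j=-(a+j\delta)$. The quantity then splits into a horizontal part and a vertical part.

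The horizontal part is $p_jp_k - p_j^2$. Since $|p_j|=|p_k|=\frac{B}{2\sqrt{2}}$, we have $p_jp_k\ge -\frac{B^2}{8}$, and $p_j^2=\frac{B^2}{8}$. Hence this part is at least $-\frac{B^2}{4}$.

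The vertical part is
$$q_jq_k - q_j^2 = (a+j\delta)\bigl((a+k\delta)-(a+j\delta)\bigr) = (a+j\delta)(k-j)\delta .$$
Since $k-j\ge1$ and $a+j\delta\ge a$, this part is at least $a\delta$.

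The sum is therefore at least $a\delta - \frac{B^2}{4}$, which is positive exactly by the standing choice $a > \frac{B^2}{4\delta}$. This is where that nesting condition is used, and it is the only step needing care. I expect it to be the main, though mild, obstacle: the alternating signs of the horizontal coordinates must be handled by the crude bound above rather than computed exactly.

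\emph{Optimality and uniqueness.} Take any $x\in S_{t_k}$. Then $x\in H_k$, so $\langle x_k^\star, x\rangle \ge \|x_k^\star\|^2$. By Cauchy--Schwarz, $\|x_k^\star\|\,\|x\| \ge \|x_k^\star\|^2$. Since $x_k^\star\ne 0$ (its second coordinate is $-(a+k\delta)<0$), this gives $\|x\|\ge\|x_k^\star\|$, so $f(x)\ge f(x_k^\star)$.

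For uniqueness, suppose $f(x)=f(x_k^\star)$. Then $\|x\|=\|x_k^\star\|$ and both inequalities above are tight. Equality in Cauchy--Schwarz with $\langle x_k^\star,x\rangle>0$ forces $x$ to be a positive multiple of $x_k^\star$, and equal norms then force $x=x_k^\star$. (Alternatively, uniqueness follows from strict convexity of $f$ on the convex set $S_{t_k}$.)

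Hence $x_k^\star$ is the unique minimizer of $f$ over $S_{t_k}$, and $v_k=\frac12\|x_k^\star\|^2$.
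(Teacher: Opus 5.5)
Your proof is correct, and it differs from the paper's in two respects.

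\textbf{Feasibility.} The paper only says that $x_k^\star\in S_{t_k}$ ``follows from $S_{t_k}=S_{t_{k-1}}\cap H_k$''. That step tacitly requires $x_k^\star\in S_{t_{k-1}}$, i.e.\ $x_k^\star\in H_j$ for every $j<k$. Your split of $\langle x_j^\star,x_k^\star\rangle-\|x_j^\star\|^2$ into a horizontal part $\ge -B^2/4$ and a vertical part $(a+j\delta)(k-j)\delta\ge a\delta$ is exactly the verification the paper omits. It also makes explicit where the standing condition $a>B^2/(4\delta)$ enters, which the paper only hints at by calling it the ``nesting condition''.

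\textbf{Optimality and uniqueness.} The paper expands $f(x)-v_k=\langle x_k^\star,x-x_k^\star\rangle+\tfrac12\|x-x_k^\star\|^2$. It then uses $x\in H_k$ to get $f(x)-v_k\ge\tfrac12\|x-x_k^\star\|^2$, which gives optimality and uniqueness at once. Your Cauchy--Schwarz argument proves the lemma as stated, but the paper's identity yields something quantitatively stronger: a quadratic growth bound on $S_{t_k}$. That bound is precisely what the proof of Lemma~\ref{lem:a2} quotes ``from Lemma~\ref{lem:a1}'' to conclude $f(x_t)-v_k>\epsilon^2/2$. Your route only gives $\|x\|\ge\|x_k^\star\|$ with equality iff $x=x_k^\star$. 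If you adopt it, you would need to add that one-line expansion before Lemma~\ref{lem:a2}.
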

\begin{proof} The fact that $x_k^\star \in S_{t_k}$ follows from $S_{t_k} = S_{t_{k-1}} \cap H_k$.The identity $\|x\|^2 = \|x_k^\star\|^2 + 2\langle x_k^\star, x-x_k^\star\rangle + \|x-x_k^\star\|^2$ gives: $$f(x) - v_k = \langle x_k^\star, x-x_k^\star\rangle + \frac{1}{2}\|x-x_k^\star\|^2.$$For $x \in S_{t_k} \subseteq H_k$, $\langle x_k^\star, x-x_k^\star\rangle \ge 0$, so $f(x) - v_k \ge \frac{1}{2}\|x-x_k^\star\|^2 \ge 0$, with equality iff $x = x_k^\star$.
\end{proof}
\begin{lemma}\label{lem:a2} For any $t \in [t_k, t_{k+1}-1]$, $f(x_t) - v_k \ge \eta > 0$.\end{lemma}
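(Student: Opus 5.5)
The plan is to combine the exact quadratic identity from the proof of Lemma \ref{lem:a1} with the period-stopping rule. By construction, $t_{k+1}$ is the first time at which $\|x_t - x_k^\star\|\le \epsilon$. Hence for every $t\in[t_k,t_{k+1}-1]$ we have $\|x_t-x_k^\star\|>\epsilon$. (For $t=t_k$ itself, this still holds unless the invariant is met at $t_k$, in which case $t_{k+1}=t_k+1$ by the rule. The convention is that period $k$ consists of the times before the invariant is met, so the interval contains only times with $\|x_t-x_k^\star\|>\epsilon$. If the rule allowed $t_{k+1}=t_k$, the interval would be empty and the claim would hold vacuously.)

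Next I use feasibility. The action $x_t$ lies in $S_t = S_{t_k}$ for $t\in[t_k,t_{k+1}-1]$, since the adversary holds the set constant during the period. Also $S_{t_k}\subseteq H_k$. The identity from Lemma \ref{lem:a1} gives
\[
f(x_t)-v_k=\langle x_k^\star, x_t-x_k^\star\rangle+\tfrac12\|x_t-x_k^\star\|^2 .
\]
The inner product is nonnegative because $x_t\in H_k$. Therefore
\[
f(x_t)-v_k\ge \tfrac12\|x_t-x_k^\star\|^2>\tfrac12\epsilon^2=\eta .
\]
Since $\eta=\frac{1}{32K}>0$, this gives the claim with strict inequality, which is stronger than needed.

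The only delicate point is the bookkeeping of the period endpoints. I must check that every $t$ in the closed range $[t_k,t_{k+1}-1]$ satisfies both conditions: $x_t\in S_{t_k}$, which follows from $S_\tau=S_{t_k}$ on this range and the feasibility requirement $x_t\in S_t$; and the strict non-attainment $\|x_t-x_k^\star\|>\epsilon$, which follows from the minimality of $t_{k+1}$. With these two facts verified, the argument reduces to the two displayed lines above.
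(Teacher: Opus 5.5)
Your argument is correct and matches the paper's proof. Both use the identity $f(x)-v_k=\langle x_k^\star,x-x_k^\star\rangle+\tfrac12\|x-x_k^\star\|^2$ from Lemma~\ref{lem:a1}, the nonnegativity of the inner product on $S_{t_k}\subseteq H_k$, and $\|x_t-x_k^\star\|>\epsilon$ while the invariant is unmet. One caution: your parenthetical case (``invariant met at $t_k$, so $t_{k+1}=t_k+1$'') would put a time with $\|x_t-x_k^\star\|\le\epsilon$ inside $[t_k,t_{k+1}-1]$ and break the claim. The lemma therefore depends on reading $t_{k+1}$ literally as the first time the invariant holds, which is what the paper's ``while the invariant is unmet'' uses, so state that convention outright rather than hedging.
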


\begin{proof} From Lemma \ref{lem:a1}, while the invariant is unmet, $f(x_t) - v_k \ge \frac{1}{2}\|x_t-x_k^\star\|^2 > \frac{1}{2}\epsilon^2 = \eta$. 
\end{proof}
Fix any period $k$. During period $k$, the feasible set is frozen, so $v_t = v_k$. By feasibility, $f(x_t) \ge v_t$, and since $S_t\subseteq S_{t-1}$ for all $t$, the sequence $v_t=\min_{x\in S_t}f(x)$ is nondecreasing in $t$, so $v_t\ge v_1$. Hence $f(x_t) \ge v_t \ge v_1$. Since $v_k \le v_K$, defining $\Delta v_{\max} := v_K-v_1$, we have $f(x_\tau) - v_k \ge -\Delta v_{\max}$ for all $\tau < t_k$. Notice that $\Delta v_{\max} \le \frac{1}{2}R_{\max}^2 = \Theta(1)$. Hence, \begin{equation}\label{eq:a3}
\sum_{\tau=1}^{t_k-1}(f(x_\tau)-v_k) \ge -(t_k-1)\Delta v_{\max}.
\end{equation}

 Expanding the regret for 
\begin{align}
\mathrm{Regret}_\mathcal{A}(t) & = \sum_{\tau=1}^t (f(x_\tau)  - v_k), \\
& = \sum_{\tau=1}^{t_k-1} (f(x_\tau)  - v_k) + \sum_{\tau=t_k}^t (f(x_\tau)  - v_k),\\ \label{eq:a4}
& \ge -(t_k-1)\Delta v_{\max} + \eta(t-t_k+1),
\end{align}
where the last inequality is obtained by combining Lemma \ref{lem:a2} and \eqref{eq:a3}.

For any $t \in [t_k, t_{k+1}-1]$, the regret is bounded by $c_R t^\lambda$. Thus, \eqref{eq:a4} forces the inequality: $$\eta(t-t_k+1) \le (t_k-1)\Delta v_{\max} + c_R t^\lambda.$$ Since $\lambda < 1$, the linear left side eventually dominates the sublinear right side, guaranteeing each period terminates in finite time. Applying this bound at $t=t_{k+1}-1$, we obtain: 
\begin{equation}\label{eq:recurrence1}
\eta t_{k+1} \le t_k(\eta + \Delta v_{\max}) + c_R t_{k+1}^\lambda.
\end{equation}

To decouple $t_{k+1}$, we bound the sublinear term globally. For $\lambda \in (0, 1)$, by the concavity of $t^\lambda$ on $[0, \infty)$, its tangent line at any $\theta > 0$ strictly dominates the curve everywhere: 
\begin{equation}\label{eq:tangent}
t^\lambda \le \lambda \theta^{\lambda-1} t + (1-\lambda)\theta^\lambda \quad \forall t \ge 0.
\end{equation}

Multiplying by $c_R$, we choose $\theta$ such that the linear coefficient exactly matches $\frac{\eta}{2}$:
$$c_R \lambda \theta^{\lambda-1} = \frac{\eta}{2} \implies \theta = \left(\frac{2 c_R \lambda}{\eta}\right)^{\frac{1}{1-\lambda}}.$$ 

Substituting this $\theta$ back into \eqref{eq:tangent} gives:
$$c_R t^\lambda \le \frac{\eta}{2} t + C_\lambda, \qquad \text{where } C_\lambda := c_R(1-\lambda)\left(\frac{2 c_R \lambda}{\eta}\right)^{\frac{\lambda}{1-\lambda}}.$$
For the boundary case $\lambda = 0$, we trivially have $c_R t^0 = c_R \le \frac{\eta}{2} t + c_R$, so we simply define $C_0 := c_R$. In both cases, since $\eta = \Theta(1/K)$, we obtain $C_\lambda = \Theta(K^{\frac{\lambda}{1-\lambda}})$.
%
%
%

Substituting this  into our phase length inequality \eqref{eq:recurrence1}, $$\eta t_{k+1} \le t_k(\eta + \Delta v_{\max}) + \frac{\eta}{2} t_{k+1} + C_\lambda.$$Subtracting $\frac{\eta}{2} t_{k+1}$ and multiplying by $\frac{2}{\eta}$ yields the following affine recurrence valid for all $k \ge 1$: 
\begin{equation}\label{eq:recurrence2}
t_{k+1} \le \gamma t_k + \beta,
\end{equation}
where $\gamma := 2 + \frac{2\Delta v_{\max}}{\eta} = \Theta(K)$ and $\beta := \frac{2 C_\lambda}{\eta} = \Theta(K^{\frac{1}{1-\lambda}})$. 

We prove by induction that for any $k \ge 1$, for \eqref{eq:recurrence2}, $$t_k \le \gamma^{k-1} t_1 + \beta \frac{\gamma^{k-1}-1}{\gamma-1}.$$ 

The base case $k=1$ holds trivially as $t_1 \le t_1 + 0$. Assume the bound holds for some $k$. Then for $k+1$:$$t_{k+1} \le \gamma t_k + \beta \le \gamma \left( \gamma^{k-1} t_1 + \beta \frac{\gamma^{k-1}-1}{\gamma-1} \right) + \beta = \gamma^k t_1 + \beta \frac{\gamma^k - \gamma + \gamma - 1}{\gamma-1} = \gamma^k t_1 + \beta \frac{\gamma^k-1}{\gamma-1}.$$ This completes the induction. Since $t_1 = 1$ and $\gamma > 2$, we have $\frac{\gamma^{K-1}-1}{\gamma-1} \le \gamma^{K-1}$, giving the upper bound:$$t_K \le \gamma^{K-1} (1 + \beta).$$Taking the natural logarithm:$$\log t_K \le (K-1)\log \gamma + \log(1+\beta).$$ Substituting the asymptotic bounds $\gamma = \Theta(K)$ and $\beta = \Theta(K^{\frac{1}{1-\lambda}})$, we get $$\log t_K \le K \log \Theta(K) + \log \Theta(K^{\frac{1}{1-\lambda}}) = K \log K + O(K) + O(\log K).$$

By definition, $K = \Theta\left(\frac{\log T}{\log \log T}\right)$. 
Substituting $K$ into the dominant $K \log K$ term:$$K \log K \le c \frac{\log T}{\log \log T} \log\left( c \frac{\log T}{\log \log T} \right) = c \log T + c \frac{\log T}{\log \log T}(\log c - \log \log \log T) = c \log T + o(\log T).$$Combining these limits, we establish:$$\log t_K \le c \log T + o(\log T).$$Therefore, for any fixed $c < 1$, we have $\log t_K < \log T$ for all sufficiently large $T$. This rigorously proves $t_K \le T$. Hence, all $K$ periods are completed before the horizon. Finally, we compute the movement cost contributed by successfully tracking the minimizers. For periods $1 < k < K$, period $k-1$ ends with $x_{t_k-1} \in \mathcal{B}(x_{k-1}^\star,\epsilon)$ and period $k$ ends with $x_{t_{k+1}-1} \in \mathcal{B}(x_k^\star,\epsilon)$. The minimum distance traversed during period $k$ is:$$\|x_k^\star - x_{k-1}^\star\| - 2\epsilon \ge \frac{B}{\sqrt{2}} - 2\epsilon = \frac{1}{\sqrt{2K}} - \frac{1}{2\sqrt{K}} = \frac{\sqrt{2}-1}{2\sqrt{K}} = \Omega\left(\frac{1}{\sqrt{K}}\right).$$Summing this forced movement over the $K-2$ successfully completed intermediate periods:$$M_\mathcal{A}(T) \ge (K-2)\cdot \Omega\left(\frac{1}{\sqrt{K}}\right) = \Omega(\sqrt{K}) = \Omega\left(\sqrt{\frac{\log T}{\log \log T}}\right).$$

\section{Proof of Theorem \ref{thm:lbGconvex}}\label{app:thm:lbGconvex}
Since non-strongly convex costs can have non-unique minimizers, there could be multiple minimizers in any given constraint set. Therefore, for each set $S_t$, we consider the minimizer subset $\mathbf{X}_t^* := \arg\min_{x\in S_t}f(x)$ and extend our focus to a class of greedy algorithms that at timestep $t$ choose some $x_t \in \mathbf{X}_t^*$. Note that all these algorithms will still achieve non-positive regret.
\begin{definition}
    When the minimizer of $f$ in a constraint set is not unique, there can exist many instantiations of the greedy algorithm that achieve identical regret but vary in their movement costs. We define $\cG({\{y_t\}_{t=1}^T})$ as the instance of the greedy algorithm that plays actions according to the sequence $\{y_t\}_{t=1}^T$ ($y_t \in \mathbf{X}_t^*$ for all $ t\in[1,T]$).
\end{definition}

\begin{proof}
For this proof, we use the input with i) $d=2$, 
ii) \textbf{Admissible set} $\mathcal{X}\subseteq\mathbb{R}^2$ is a square that does not contain the origin, and iii)
\textbf{Cost function} $f(x,y) = \max(|x|,|y|)$.
Note that the cost function $f$ is convex but neither smooth nor strongly convex. 

The first few steps of the proposed nesting scheme are shown in Figure \ref{fig:lb_gen}. The construction is made such that $f$ has a unique minimize in each set $S_t$. 
Thus the greedy algorithm $\cG$ is uniquely defined.

\begin{figure}
    \centering
    \begin{tikzpicture}[scale=1]
    \foreach \r in {0.5, 1, 1.5, 2, 2.5, 3} {
        \draw[blue!40!black, dashed] (\r, \r) -- (-\r, \r) -- (-\r,-\r) -- (\r,-\r) -- cycle;
    }

    \filldraw[fill=orange!20, fill opacity=0.5, draw=orange!80!black, thick]
        (-1, -1) rectangle (1, -3);
    \fill (0,0) circle[radius=1pt];
    \node[above] at (0,0) {O};
    \filldraw[fill=blue!20, fill opacity=0.5, draw=blue!80!black, thick] (-1, -1) -- (1, -2) -- (1, -3) -- (-1, -3)-- cycle;
    \fill (-1, -1) circle[radius=1pt];
    \node[above left] at (-1, -1) {$x_1^*$};
    \filldraw[fill=green!20, fill opacity=0.5, draw=green!80!black, thick] (-1, -2.5) -- (1, -2)-- (1, -3) -- (-1, -3)-- cycle;
    \fill (1, -2) circle[radius=1pt];
    \node[above right] at (1, -2) {$x_2^*$};
    \filldraw[fill=red!20, fill opacity=0.7, draw=red!80!black, thick] (-1, -2.5) -- (1, -2.75)--(1, -3) --  (-1,-3)-- cycle;
    
    \node[above left] at (-1, -2.5) {$x_3^*$};
    \filldraw[fill=cyan!20, fill opacity=0.7, draw=cyan!80!black, thick] (-1, -2.87) -- (1, -2.75)--(1, -3) --  (-1,-3)-- cycle;
     \fill (1, -2.75) circle[radius=1pt];
    \node[right] at (1, -2.75) {$x_4^*$};
      \draw[black, thick] (-1, -1) -- (1, -2) -- (-1, -2.5) -- (1, -2.75);
      \begin{scope}[shift={(3.5,1)}] 
    \draw[fill=orange!20] (0,0) rectangle (0.2,0.2);
    \node[right=1pt] at (0.4,0.1) {Admissible Set $\mathcal{X}$};
    \draw[fill=blue!20] (0,-0.4) rectangle (0.2,-0.2);
    \node[right=1pt] at (0.4,-0.3) {Set $S_1$};
    \draw[fill=green!20] (0,-0.8) rectangle (0.2,-0.6);
    \node[right=1pt] at (0.4,-0.7) {Set $S$};
    \draw[fill=red!20] (0,-1.2) rectangle (0.2,-1);
    \node[right=1pt] at (0.4,-1.1) {Set $S_3$};
    \draw[fill=cyan!20] (0,-1.6) rectangle (0.2,-1.4);
    \node[right=1pt] at (0.4,-1.5) {Set $S_4$};
  \end{scope}
  \draw[<->, thick, red] (1.6, -1) -- (1.6, -2) node[midway, right] {$k$};
  \draw[<->, thick, red] (1.6, -2) -- (1.6, -2.5) node[midway, right] {$\frac{k}{2}$};
  \draw[<->, thick, red] (1.6, -2.5) -- (1.6, -2.75) node[below, right] {$\frac{k}{4}$};
  \draw[<->, thick, red] (-2, -1) -- (-2, -3) node[midway, left] {$\frac{D}{\sqrt{2}}$};
  \draw[<->, thick, red] (-1, -3.25)--(1, -3.25) node[midway, below] {$\frac{D}{\sqrt{2}}$};
  \draw[<->, thick, red] (0, 0)--(0, -1);
  \node[right, red] at (0, -0.5) {$\ge\frac{D}{2\sqrt{2}}$};
\end{tikzpicture}
    \caption{Nesting Scheme}
    \label{fig:lb_gen}
\end{figure}
Formally, let us consider $\cX$ to be the following square:
\[
\cX = \left\{ (p,q) : \frac{-D}{2\sqrt2}\le p\le \frac{D}{2\sqrt2}, -a-\frac{D}{\sqrt2}\le q\le -a\right\}
\]
for some $a\ge D/2\sqrt2$. Note that the contours of the function are squares centered at the origin with their sides parallel to the coordinate axes.
Fixing $a\ge D/2\sqrt2$ ensures that the sides of $\cX$ that are parallel to the x-axis have the same function value (The side on the top will have $f(p,q) = \max(|p|,a) = a$ and the one on the bottom will have $f(p,q) = \max(|p|,a+D/\sqrt2) = a+D/\sqrt2$).\\

The desired set of minimizers $\{x_t^*\}$ for this construction of $S_t$'s are defined as:
\begin{equation}\label{defn:minimizers}
x_t^* = (p_t^*, q_t^*)=\begin{cases}
    (-D/2\sqrt2, -a  -\sum_{i=1}^t k/2^{t-1}) \quad \text{if $t$ is odd,}\\
    (+D/2\sqrt2, -a  -\sum_{i=1}^t k/2^{t-1}) \quad \text{if $t$ is even.}
\end{cases}
\end{equation}
The idea is to make the greedy algorithm $\cG$ oscillate between the vertical sides of $\cX$, while incrementally moving down in the vertical direction. Note that at timestep $t$, for the minimizer $x_t^*$ to be in the feasible set, the value of $q_t^*$ must satisfy,
$-a-\frac{D}{\sqrt{2}}\le q_t^*\le -a$.

The value of $p_t^*$ is inconsequential, as its role is to keep alternating between the vertical edges of $\cX$ ($p_t^*= \pm D/2\sqrt2$ for all $t$). Therefore, if we set the value of $k\le D/2\sqrt2$,
\[
\lim_{t\to\infty}q_t^* = -a-\sum_{i=1}^\infty\frac{k}{2^{i-1}} \ge
-a-\frac{D}{\sqrt2},
\]
this implies that the nesting scheme can be run indefinitely before hitting the bottom side of the square (intersecting with the line $q=-a-D/\sqrt{2}$).

To construct a set that has a unique minimizer at $x_t^*$ \eqref{defn:minimizers}, we consider the line $L_t$ joining the minimizers $x_t^*$ and $x_{t+1}^*$ (as we know all the minimizers $\{x_t^*\}$ needed for our construction). Let $H_t$ be the halfspace with $L_t$ as the boundary that does not contain the origin. Then, the set $S_t$ can be constructed as $S_t := S_{t-1}\cap H_t.$

Since at each step $t$, $\Vert x^*_t -x^*_{t-1}\Vert = \sqrt{\left(\frac{D}{\sqrt{2}}\right)^2+\left(\frac{k}{2^{t-1}}\right)^2} \geq \frac{D}{\sqrt{2}}$, it follows that,
    $$M_\cG{([2:T])}\geq \frac{D(T-1)}{\sqrt{2}} = \Omega(T).$$
\end{proof}
\section{Proof of Theorem \ref{thm:steiner-convex}}\label{app:thm:steiner-convex}
\begin{proof}
We first bound the regret.
Fix any time step $t$ in phase $p$ of the \textsc{LSP} algorithm with start time $\tau_p$.
The algorithm plays $x_t = \Pi_{\Phi_t^{(p)}}(x_{t-1}) \in \Phi_t^{(p)}$. Thus,
 $f(x_t) \le L_p + \varepsilon$.
Since $S_T \subseteq S_{\tau_p}$, we have $L_p \le f(x^\star_T)$, giving
$f(x_t) - f(x^\star_T) \le \varepsilon$.
This holds at every step including phase-transition steps.
Summing: $$\mathrm{Regret}_{\textsc{LSP}}(T) = \sum_{t=1}^T f(x_t) - f(x^\star_T)\le T\varepsilon.$$

Next, we move on to upper bounding the movement cost.

\textbf{Step A: Phase count.}
Since $f$ is $G$-Lipschitz on $S_1$ (diameter $D$), all function values on $S_1$
lie in an interval of width $\le GD =: \Delta$.
Each phase transition raises the level by more than $\varepsilon$. Thus, if  $N$ is the total number of phases encountered by algorithm \textsc{LSP}, we have
$$N \le \left\lfloor\frac{GD}{\varepsilon}\right\rfloor + 1.$$

\textbf{Step B: Active set diameter.}
During phase $p$, every point played by the algorithm lies in
$\Phi_t^{(p)} \subseteq S_t \subseteq S_1$.
Since $S_1$ has diameter $D$:
\begin{equation}\label{eq:diam-convex}
\mathrm{diam}(\Phi_t^{(p)}) \le D \qquad \text{for all } t, p.
\end{equation}

\textbf{Step C: Within-phase movement.}
During phase $p$, the level $L_p$ is fixed, so $\Phi_{t+1}^{(p)} \subseteq \Phi_t^{(p)}$
(since $S_{t+1} \subseteq S_t$). The sequence $\{\Phi_t^{(p)}\}_{t=\tau_p}^{\tau_{p+1}-1}$
is therefore nested, with initial diameter $\le D$ by \eqref{eq:diam-convex}.
Using Lemma \ref{lem:projdistanceint}, the movement cost within each phase is 
$$
\le O (d^{d/2} \cdot D).$$
Summing over all $N$ phases:
\begin{equation}\label{eq:within-convex}
M_{\textsc{LSP}}^{\mathrm{within-phase}} \le O (d^{d/2} \cdot D N).
\end{equation}

\textbf{Step D: Between-phase movement.}
At each of the $N-1$ phase transitions, the algorithm moves from
$x_{\mathrm{end}} = \Pi_{(\Phi_{\tau_{p+1}-1}^{(p)})}(x_{\mathrm{end}}-1)$ to
$x_{\mathrm{start}} = \Pi_{(\Phi_{\tau_{p+1}}^{(p+1)})}(x_{\mathrm{end}})$.
Both points lie in $S_1$, so:
$$\|x_{\mathrm{end}} - x_{\mathrm{start}}\| \le D.$$
Summing over all $N-1$ transitions:
\begin{equation}\label{eq:between-convex}
M_{\textsc{LSP}}^{\mathrm{in-between \ phases}} \le (N-1)D \le ND.
\end{equation}

\textbf{Step E: Combining the bounds.}
Adding \eqref{eq:within-convex} and \eqref{eq:between-convex}:
$$M_{\textsc{LSP}}(T) = M_{\textsc{LSP}}^{\mathrm{within-phase}} + M_{\textsc{LSP}}^{\mathrm{in-between \ phases}}
\le O (d^{d/2} \cdot D N) + ND = O (d^{d/2} \cdot D N).$$
Substituting $N \le GD/\varepsilon + 1$:
Hence $M_{\textsc{LSP}}(T)= O\!\left(\dfrac{d^{d/2}GD^2}{\varepsilon}\right)$.
\end{proof}

\section{Use-Case for Directional Input}\label{app:proginput}
\paragraph{Margin-Constrained Classification under Tightening Margin Requirements.}
Margin-constrained binary classification Section~\ref{sec:rlhf}) provides a natural and 
geometrically clean instance of directional input (Definition~\ref{def:directional-interval}))  in $\mathbb{R}^2$,
in the spirit of large-margin learning \cite{cortes1995support, bartlett1998sample}.

\paragraph{Setup.}
Let $d=2$, admissible set $\cX = [0,D]^2$, and objective
\[
    f(x_1, x_2) \;=\; x_1 + x_2 + \varepsilon x_1^2,
    \qquad
    \varepsilon \;=\; \frac{4}{(D-2)^2},
\]
which is convex for all $\varepsilon \ge 0$ (the Hessian has eigenvalues $2\varepsilon \ge 0$ 
and $0$, so $f$ is convex but \emph{not} strongly convex). Here $x_1$ represents the 
average margin shortfall of the classifier and $x_2$ represents a regularization 
penalty, with the term $\varepsilon x_1^2$ encoding a mild convex penalty on margin 
violations consistent with margin-based generalization bounds \cite{bartlett1998sample}. 
The quadratic term breaks the symmetry between $x_1$ and $x_2$, ensuring the minimizer 
is unique and sits at $x_1=0$. At round $t$, the margin requirement tightens, restricting 
the classifier to
\[
    S_t \;=\; \bigl\{(x_1,x_2)\in[0,D]^2 : x_1 + x_2 \;\ge\; t\bigr\},
\]
for $t\le D$, and  $S_1 \supsetneq S_2 \supsetneq \cdots \supsetneq S_T$. The constraint $x_1+x_2 \ge t$ 
encodes a minimum total resource commitment at round $t$: both the margin shortfall and 
regularization cost must jointly exceed a growing floor, reflecting the fact that as 
margin requirements tighten, the classifier is forced into regions of higher combined cost 
\cite{cortes1995support}. The nested structure $S_t \subseteq S_{t-1}$ mirrors the CONES 
framework exactly.

\paragraph{Local minimizers.}
On the boundary $x_1+x_2=t$, substituting $x_2=t-x_1$ gives $f = t + \varepsilon x_1^2$,
minimized at $x_1=0$. Hence the unique minimizer of $f$ over $S_t$ is
\[
    x^\star_t \;=\; (0,\, t),
    \qquad
    v_t \;=\; t.
\]

\paragraph{Projection sequence.}
Fix any $s < t$. Initialize at $y_s = x^\star_s = (0,s) \in X^*_s$.
For $\tau = s+1,\ldots,t-1$, since $y_{\tau-1,1}+y_{\tau-1,2}=\tau-1 < \tau$, 
the point $y_{\tau-1}$ lies outside $S_\tau$ and the Euclidean projection onto 
$\{x_1+x_2 \ge \tau\}$ adds $\delta_\tau = 1/2$ to each coordinate:
\[
    y_\tau \;=\; \Bigl(\frac{\tau-s}{2},\;\; s+\frac{\tau-s}{2}\Bigr)
    \quad\forall\,\tau \in \{s,s+1,\ldots,t-1\}.
\]
One verifies $y_{\tau,1}+y_{\tau,2}=\tau$, so $y_\tau \in \partial S_\tau$. The unique 
minimizer is $x^\star_\tau=(0,\tau)$, while $y_{\tau,1}=(\tau-s)/2 > 0$ for all $\tau > s$, 
so
\[
    y_\tau \;\ne\; x^\star_\tau \qquad \forall\,\tau \in \{s+1,\ldots,t-1\}.
\]
Intermediate projections are \emph{strictly away} from local minimizers: 
the equal-split projection accumulates margin shortfall $(\tau-s)/2 > 0$, 
whereas the true optimum always sits at zero margin shortfall.

\paragraph{Checking Definition~\ref{def:directional-interval}.}
Computing $f$ along the projection sequence:
\[
    f(y_\tau) 
    \;=\; \frac{\tau-s}{2} + s + \frac{\tau-s}{2} + \varepsilon\Bigl(\frac{\tau-s}{2}\Bigr)^2
    \;=\; \tau + \frac{\varepsilon(\tau-s)^2}{4}.
\]
We require $f(y_\tau) \le v_t = t$ for all $\tau \in \{s+1,\ldots,t-1\}$, i.e.
\[
    \frac{\varepsilon(\tau-s)^2}{4} \;\le\; t - \tau. \tag{$*$}
\]
Setting $u=\tau-s \in \{1,\ldots,t-s-1\}$, condition~$(*)$ becomes 
$\varepsilon \le g(u) := 4(t-s-u)/u^2$. Since $g'(u) = -4(2(t-s)-u)/u^3 < 0$ 
for $u < 2(t-s)$ (which holds on $\{1,\ldots,t-s-1\}$), $g$ is strictly 
decreasing and its minimum is at $u = t-s-1$:
\[
    \min_u g(u) \;=\; \frac{4}{(t-s-1)^2}.
\]
The binding constraint over all $1 \le s < t \le D$ (with $t \ge s+2$ so that 
intermediate steps exist) is at $s=1, t=T$, which gives the smallest value 
$4/(D-2)^2$, since $(t-s-1)^2$ is maximized at $s=1, t=T$. With 
$\varepsilon = 4/(D-2)^2$, condition~$(*)$ holds for all $\tau \in \{s+1,\ldots,t-1\}$ 
and all $s < t$, confirming
\[
    f(y_\tau) \;\le\; v_t = t 
    \qquad \forall\,\tau \in \{s+1,\ldots,t-1\},\;\forall\,s < t,
\]
so $\{S_t\}$ is directional with respect to $f$ (Definition~\ref{def:directional-interval}). 

\paragraph{Why $\varepsilon = 4/(D-2)^2$ is tight.}
The worst-case pair is $(s,t) = (1,D)$ because $(t-s-1)^2$ grows with $t-s$, 
making $4/(t-s-1)^2$ smallest for the largest gap. Any $\varepsilon$ strictly 
larger than $4/(D-2)^2$ would violate~$(*)$ at $\tau = D-1$, $s=1$, $t=D$. 
Note that $\varepsilon \to 0$ as $D \to \infty$: for large horizons, an 
arbitrarily small quadratic penalty suffices to break symmetry and ensure a 
unique minimizer, while keeping $f$ essentially linear (convex but not 
strongly convex) throughout.

\section{Proof of Theorem \ref{thm:directional}}\label{app:thm:directional}
Definition~\ref{def:directional-interval} is stricter than the following mild condition (algorithm dependent) that we actually need to prove Theorem \ref{thm:directional}.
\begin{lemma}\label{lem:relaxed-lazy-bound}
If input is directional (Definition~\ref{def:directional-interval}), then for phase $k-1$, i.e., time steps $(t_{k-1}, t_k]$ where  algorithm \textsc{\textsc{Gap-Frugal}} is {\it lazy}, for all $\tau \in \{t_{k-1}+1, \dots, t_k-1\}$, the iterates satisfy 
\begin{equation}\label{eq:directional}
f(\hat{x}_\tau) \le v_{t_k}.
\end{equation}
\end{lemma}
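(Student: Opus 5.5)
The plan is to observe that, inside a phase, the iterates of \textsc{Gap-Frugal} are exactly the projection sequence in Definition~\ref{def:directional-interval}. The lemma then follows by instantiating that definition with $s = t_{k-1}$ and $t = t_k$. No analytic estimate should be needed; the work is in matching indices and starting points.

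First, I identify the starting point. At the head $t_{k-1}$ of phase $k-1$, \textsc{Gap-Frugal} jumps, so $x_{t_{k-1}} = x^\star_{\text{near},t_{k-1}} = \Pi_{\mathbf{X}^*_{t_{k-1}}}(x_{t_{k-1}-1}) \in \mathbf{X}^*_{t_{k-1}}$. Definition~\ref{def:directional-interval} allows any optimizer $y_s \in X_s^*$ as a starting point. Hence I set $s := t_{k-1}$ and $y_s := x_{t_{k-1}}$; this choice is legitimate even when the minimizer is not unique.

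Second, I show by induction on $\tau \in \{s+1,\dots,t_k-1\}$ that $\hat x_\tau = x_\tau = y_\tau$.
\begin{itemize}
\item Since $t_{k-1}$ and $t_k$ are consecutive jump times, every $\tau$ strictly between them is a lazy step, so $x_\tau = \hat x_\tau = \Pi_{S_\tau}(x_{\tau-1})$.
\item If $x_{\tau-1} = y_{\tau-1}$, then $\hat x_\tau = \Pi_{S_\tau}(y_{\tau-1}) = y_\tau$, because Euclidean projection onto a closed convex set is single-valued.
\item The base case $\tau-1 = s$ holds by the choice of $y_s$.
\end{itemize}

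Third, I apply Definition~\ref{def:directional-interval} with target index $t := t_k > s$. This gives $f(y_\tau) \le v_{t_k}$ for all $\tau \in \{s+1,\dots,t_k-1\}$, which is \eqref{eq:directional}.

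The main point needing care is the bookkeeping rather than any estimate. Two things must be checked. One is that the phase really is a pure projection run, which requires both lazy conditions (cumulative regret or instantaneous gap) to result in the same action $\hat x_\tau$; they do, since both branches set $x_\tau \gets \hat x_\tau$. The other is that the definition's quantifier ``for any $s$ and any $t>s$'' covers the random-looking pair $(t_{k-1},t_k)$ chosen by the algorithm; it does, since the definition is uniform over all pairs.

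Two edge cases remain. If $t_k = t_{k-1}+1$, the index range is empty and the claim is vacuous. If one also wants the initial stretch before the first jump $t_1$, it is not of this form, because $x_0$ need not be an optimizer. That stretch is not part of phase $k-1$ for $k \ge 2$, so I would note that the lemma is stated only for phases headed by a jump.
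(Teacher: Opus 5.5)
Your proposal is correct and follows the same argument as the paper. You set $s=t_{k-1}$ and $t=t_k$, observe that the jump at $t_{k-1}$ places $x_s$ in $\mathbf{X}^*_s$ and that the lazy steps reproduce exactly the projection sequence of Definition~\ref{def:directional-interval}, and then invoke directionality; your explicit induction (using that Euclidean projection is single-valued) and your remark on the empty-range case are small details the paper leaves implicit.
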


\begin{proof}
Set $s = t_{k-1}$ and $t = t_k$.

Since $t_{k-1}$ is a jump time of \textsc{Gap-Frugal}, the algorithm plays
\[
    x_s \;=\; x^{\star}_{\mathrm{near},\,s} \;=\; \Pi_{X^{\star}_s}(x_{s-1}).
\]

For each $\tau \in \{s+1, \dots, t-1\}$, since the entire phase $(s, t]$ is {\it lazy}, 
\textsc{Gap-Frugal} plays
\[
    x_\tau \;=\; \hat{x}_\tau \;=\; \Pi_{S_\tau}(x_{\tau-1}).
\]
Thus the sequence $\{x_\tau\}_{\tau=s}^{t-1}$ satisfies
\[
    x_s \in X^{*}_s,
    \qquad
    x_\tau = \Pi_{S_\tau}(x_{\tau-1})
    \quad \text{for } \tau = s+1, \dots, t-1.
\]
This is precisely the sequence $\{y_\tau\}$ appearing in Definition~\ref{def:directional-interval} 
(Directional Input), initialized at the optimizer $y_s = x_s \in X^{*}_s$ and 
advanced by successive Euclidean projections $y_\tau = \Pi_{S_\tau}(y_{\tau-1})$. 
Since the input is directional with respect to $f$, Definition~\ref{def:directional-interval} 
guarantees that
\[
    f(y_\tau) \;\le\; v_{t_k} \;=\; \min_{x \in S_{t_k}} f(x) 
    \qquad
    \text{for all } \tau \in \{s+1, \dots, t-1\}.
\]
Since $x_\tau = \hat{x}_\tau = y_\tau$ throughout the lazy phase, we conclude 
$f(\hat{x}_\tau) \le v_{t_k}$ for all $\tau \in \{t_{k-1}+1, \dots, t_k-1\}$.
\end{proof}

\begin{proof}(Theorem \ref{thm:directional})
We show in \eqref{eq:dummyx1}, that for each {\it jump} time $t_k$, $F_{t_k} \le t_k v_{t_k} + C$. Thus, if $T$ is a {\it jump} time, i.e. $T=t_k$ for some $k$, then we get that $F_T \le Tv_{T}+C$. Otherwise, by 
definition of the \textsc{\textsc{Gap-Frugal}} at time $t=T$, $F_{t-1} + f(\hat{x}_t) \le t\,v_t + C$, and again we have $F_T=F_{t-1} + f(\hat{x}_t) \le Tv_{T}+C$.
which implies that $F_T \le Tv_{T}+C$. Thus, 
$$\mathrm{Regret}_{\textsc{\textsc{Gap-Frugal}}}(T) = \sum_{t=1}^T f(x_t) - T f(x_T^\star) = F_T - T f(x_T^\star) \le Tv_{T}+C - Tv_{T}\le C.$$

For upper bounding the movement cost, in Lemma \ref{lem:jumpprog}, we show that with {\it directional} input the  number of {\it jumps} $N$ encountered by algorithm \textsc{\textsc{Gap-Frugal}} is $O(\log^2 T)$. Since in each phase (between two successive jumps) algorithm \textsc{\textsc{Gap-Frugal}} takes successive projections 
on to nested convex sets, from Lemma \ref{lem:projdistanceint} we have that the movement cost in each phase is at most $O(d^{d/2}D)$. Putting this together, we get that the total movement cost of algorithm \textsc{Gap-Frugal} with {\it directional} input is at most  $O(d^{d/2}D \log^2 T)$.
\end{proof}

\begin{lemma}\label{lem:jumpprog} For  {\it directional} input, the number of {\it jumps} $N$ encountered by algorithm \textsc{\textsc{Gap-Frugal}} is $O(\log^2 T)$.
\end{lemma}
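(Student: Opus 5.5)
The plan is to group the jump times $t_1<\dots<t_N$ into dyadic epochs $E_j=\{k: 2^j\le t_k<2^{j+1}\}$, for $j=0,\dots,\lceil\log_2 T\rceil$. There are $O(\log T)$ epochs, so it suffices to show that each epoch contains only $O(\log T)$ jumps.

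The first step is a per-jump inequality. At a jump time $t_k$, both lazy tests of \textsc{Gap-Frugal} fail:
\begin{equation*}
F_{t_k-1}+f(\hat x_{t_k})>t_kv_{t_k}+C_{t_k},
\qquad
f(\hat x_{t_k})-v_{t_k}>\epsilon_{t_k}=1/t_k^2 ,
\end{equation*}
where $C_t=\sum_{\tau\le t}1/\tau^2$. The proof of Theorem~\ref{thm:directional} already shows that $F_{t_{k-1}}\le t_{k-1}v_{t_{k-1}}+C_{t_{k-1}}$. By Lemma~\ref{lem:relaxed-lazy-bound}, every lazy iterate in phase $k-1$ satisfies $f(x_\tau)\le v_{t_k}$. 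Hence
\begin{equation*}
F_{t_k-1}\le t_{k-1}v_{t_{k-1}}+C_{t_{k-1}}+(t_k-t_{k-1}-1)v_{t_k}.
\end{equation*}
Substituting this into the failed cumulative test gives
\begin{equation*}
f(\hat x_{t_k})-v_{t_k}\;>\;t_{k-1}\bigl(v_{t_k}-v_{t_{k-1}}\bigr)+\textstyle\sum_{\tau=t_{k-1}+1}^{t_k}\tau^{-2}.
\end{equation*}
This plays the role of \eqref{eq:trig}, with the directional property replacing strong convexity. The right-hand side is a budget that accumulates, and the left-hand side is the one-step overshoot of the projection $\hat x_{t_k}=\Pi_{S_{t_k}}(x_{t_k-1})$. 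I will bound the overshoot by $G\,\|\hat x_{t_k}-x_{t_k-1}\|$, using Lipschitzness together with $f(x_{t_k-1})\le v_{t_k}$.

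The second step, inside a fixed epoch $E_j$, is a potential argument on the value $v$. All jump times in $E_j$ are at least $2^j$, so every jump with $k-1\in E_j$ has $t_{k-1}\ge 2^j$. The gap test fails with threshold at least $4^{-(j+1)}$, so each jump in $E_j$ has overshoot exceeding $4^{-(j+1)}$. The target is then to show that the overshoots inside one epoch decrease geometrically, namely that after $O(1)$ consecutive jumps the maximal admissible overshoot at least halves. Since overshoots lie in $[4^{-(j+1)},GD]$, geometric decay would allow at most $O(\log(GD\,4^{j}))=O(\log T)$ jumps per epoch, and summing over epochs would give $N=O(\log^2T)$.

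To obtain the geometric decay, I would combine two facts. The per-jump inequality forces the increments $v_{t_k}-v_{t_{k-1}}$ to be small relative to the overshoot. The directional property, applied with $s=t_{k-1}$ and the later target $t=t_{k+1}$, forces the projection chain started at $x^\star_{\mathrm{near},t_{k-1}}$ to stay below $v_{t_{k+1}}$. That chain is exactly the one \textsc{Gap-Frugal} follows up to $t_k$. Together these should bound the overshoot at $t_{k+1}$ by the value increase $v_{t_{k+1}}-v_{t_k}$, which the budget term in turn bounds by the previous overshoot divided by $t_k\ge 2^j$.

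The main obstacle is exactly this last link: converting the directional property, a statement about function values along projection chains, into a quantitative contraction of overshoots between consecutive jumps. Lemma~\ref{lem:relaxed-lazy-bound} only controls iterates strictly before $t_k$, not the jumping step itself. If the direct argument fails, I would fall back on a cruder count: bound the total value increase inside an epoch by $GD$, and show that each jump there raises $v$ by at least a quantity of order $2^{-j}\times$ the gap. This would at least yield a polylogarithmic count per epoch, which could then be sharpened.
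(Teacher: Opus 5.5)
Your Step 1 inequality is correct. So is the observation in your last paragraph: Definition~\ref{def:directional-interval} with $s=t_{k-1}$ and target $t_{k+1}$ covers the jumping step itself. This is because $\hat x_{t_k}=\Pi_{S_{t_k}}(x_{t_k-1})$ is the $t_k$-th point of the same projection chain, and $t_k\le t_{k+1}-1$. Writing $O_k:=f(\hat x_{t_k})-v_{t_k}$ and $a_k:=v_{t_k}-v_{t_{k-1}}$, you therefore have $O_k\le a_{k+1}$. The ``main obstacle'' you name is not one; the paper does exactly this, bounding $f(\hat x_\tau)$ by $v_{t_{k+2}}$ for every $\tau$ up to and including the jump time $t_{k+1}$.

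The gap is in how you combine the two facts. Your chain ``$O_{k+1}\le a_{k+1}\le O_k/t_k$'' has both indices shifted. What actually holds is $O_k\le a_{k+1}$ (directional) and $t_k a_{k+1}<O_{k+1}$ (your Step 1 applied at the jump $t_{k+1}$), hence $O_{k+1}>t_kO_k$. Overshoots grow, so the geometric decay you set as the target of Step 2 is false, and the per-epoch argument as designed cannot be carried out. The fallback does not rescue it. Charging each jump in epoch $j$ an increase of $v$ that is only polynomially small (of order $4^{-j}$ or smaller) against a total increase of at most $GD$ is an additive count. It yields $O(GD\,4^{j})$ or more jumps per epoch, which is polynomial in $T$, not polylogarithmic. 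A polylog count needs a multiplicative recursion.

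With the direction corrected, the argument closes at once, with no epochs, and gives more than claimed. For $2\le k\le N-1$ you get $O_{k+1}>t_kO_k\ge 2O_k$. The failed gap test gives $O_2>1/t_2^2\ge 1/T^2$, and Lipschitzness gives $O_N\le GD$. Hence $N<2+\log_2(GD\,T^2)=O(\log T)$.

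Compare this with the paper's route. The paper bounds every iterate of a phase, including the jump step, by $v_{t_{k+2}}$, which only yields \eqref{eq:dummyx3}, $(t_{k+1}-t_k)a_{k+2}>t_ka_{k+1}$. It then needs the potential $Z_k=t_ka_{k+1}$ and the split into time-doubling and potential-doubling indices, which costs the extra $\log T$ factor. Splitting off the jump step as you do removes the factor $t_{k+1}-t_k$ and gives $a_{k+2}>t_ka_{k+1}$ directly. Here the lazy steps are bounded by $v_{t_{k+1}}$ via Lemma~\ref{lem:relaxed-lazy-bound}, and only the jump step by $v_{t_{k+2}}$.

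Either argument also needs a lower end for the quantity being doubled. It comes from the failed gap test, e.g.\ $Z_k\ge t_kO_k>1/t_k$. The paper leaves this implicit when it counts $\log_2(TGD)$ doublings.
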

\begin{proof} 
Let $C_t = \sum_{\tau=1}^t \epsilon_\tau$ be the running cumulative threshold at time $t$. We establish by induction on the time steps that at any {\it jump} time $t_k$, the cumulative loss satisfies $$F_{t_k} \le t_k v_{t_k} + C_{t_k}.$$ Base Case ($t_1$): At the first {\it jump}, $x_{t_1} = x_{t_1}^\star$, so $F_{t_1} = v_{t_1}$. Since $C_{t_1} \ge 0$, it trivially holds that $v_{t_1} \le t_1 v_{t_1} + C_{t_1}$. Inductive Step: Assume the bound holds for all {\it jump} times prior to $t_k$. Consider the time step $t_k - 1$. There are two possibilities for \textsc{Gap-Frugal}'s behavior at $t_k - 1$: 

i) It played a {\it lazy} action: This means the Regret Condition passed, giving $F_{t_k-2} + f(\hat{x}_{t_k-1}) \le (t_k - 1)v_{t_k-1} + C_{t_k-1}$. By definition, the LHS is exactly $F_{t_k-1}$. 

ii) It made a {\it jump}: By our inductive hypothesis, we directly have $F_{t_k-1} \le (t_k-1)v_{t_k-1} + C_{t_k-1}$. In either case, prior to the {\it jump} at $t_k$, we have the bound:$$F_{t_k-1} \le (t_k-1)v_{t_k-1} + C_{t_k-1}.$$ 

At time $t_k$, a {\it jump} occurs, meaning the algorithm plays $x_{t_k} = x_{t_k}^\star$, incurring cost $f(x_{t_k}) = v_{t_k}$. Updating the cumulative loss yields:$$F_{t_k} = F_{t_k-1} + v_{t_k} \le (t_k-1)v_{t_k-1} + C_{t_k-1} + v_{t_k}.$$ 
Because the feasible sets are nested ($S_{t_k} \subseteq S_{t_k-1}$), the sequence of optimal values is monotonically non-decreasing, implying $v_{t_k-1} \le v_{t_k}$. Substituting this inequality gives:$$F_{t_k} \le (t_k-1)v_{t_k} + C_{t_k-1} + v_{t_k} = t_k v_{t_k} + C_{t_k-1}.$$ Since $\epsilon_{t_k} > 0$, we have $C_{t_k-1} < C_{t_k}$, yielding the final required bound: 
\begin{equation}\label{eq:dummyx1}
F_{t_k} \le t_k v_{t_k} + C_{t_k}.
\end{equation}

Since a {\it jump} happens at time $t_{k+1}$, we must have 
$$ F_{t_k} +  \sum_{\tau=t_k+1}^{t_{k+1}} f(\hat{x}_\tau) > t_{k+1} v_{t_{k+1}} +  \sum_{\tau=1}^{t_{k+1}} \epsilon_\tau,$$
which using \eqref{eq:dummyx1} gives 
\begin{equation}\label{eq:dummyx2}
\sum_{\tau=t_k+1}^{t_{k+1}} f(\hat{x}_\tau) > t_{k+1} v_{t_{k+1}} - t_k v_{t_k} + \sum_{\tau=1}^{t_{k+1}} \epsilon_\tau - C_{t_k}.
\end{equation}

Let $$a_{k+1} = v_{t_{k+1}} - v_{t_k}.$$
Invoking the {\it directional} input condition ($f(\hat{x}_\tau) \le v_{t_{k+2}}$ for all $\tau \in t_{k}, \dots, t_{k+1}$ ) and dropping the positive term $\sum_{\tau=1}^{t_{k+1}} \epsilon_\tau - C_{t_k}$  in \eqref{eq:dummyx2}, we get
 \begin{equation}\label{eq:dummyx3} 
 (t_{k+1} - t_k) a_{k+2} > t_k a_{k+1}.
\end{equation}

Define the potential $Z_k = t_k a_{k+1}$ and the ratio $\rho_k = \frac{t_{k+1}}{t_{k+1}-t_k}$. Multiplying \eqref{eq:dummyx3}  by $\rho_k$ yields: 
\begin{equation}\label{defn:PotZ}
Z_{k+1} > \rho_k Z_k.
\end{equation}

Let $\Delta t_k = t_{k+1}-t_k$.
 We partition the set of {\it jump} indices $\{1, \dots, N\}$ into two sets: $$\mathcal{N}_{time} = \{k : \Delta t_k > t_k\}$$ and 
$$\mathcal{N}_{pot} = \{k : \Delta t_k \le t_k\}.$$ For $k\in \mathcal{N}_{time}$, $t_{k+1} > 2t_k$. Since $t_k \le T$, $|\mathcal{N}_{time}| \le \log T$. 

For $k\in \mathcal{N}_{pot}$, $\Delta t_k \le t_k$ which implies that $\rho_k \ge 2$, so $Z_{k+1} > 2Z_k$ from \eqref{defn:PotZ}.

To upper bound $\mathcal{N}_{pot}$, let the partition of the set of {\it jump} indices be $\{1, \dots, N\} = \{Q_1, P_1, \dots, Q_L, P_L\}$ where for each $j$, all $Q_j \subset \mathcal{N}_{time}$ and  $ P_j\subset \mathcal{N}_{pot}$.
Since $|\mathcal{N}_{time}|\le O(\log T)$, $L\le O(\log T)$. We next show that any $P_j$ can have at most $O(\log T)$ {\it jumps} within it.

For any $k,j$, if $k\in P_j$ then $Z_{k+1} > 2Z_k$. 
Given that $Z_k = t_k (v_{t_{k+1}} - v_{t_k}) \le T \cdot GD$,  thus, the number of {\it jumps} in any $P_j$ is  at most: 
$$\log_2(T GD) = O(\log T).$$ 

Hence $|\mathcal{N}_{pot}|\le O(\log T) |\mathcal{N}_{time}| \le O(\log^2 T)$.
Counting all {\it jump} indices, the total number of jumps $N = |\mathcal{N}_{time}|  + |\mathcal{N}_{pot}| \le O(\log T) + L \ O(\log T) = O(\log^2 T)$.

\end{proof}
\section{Proof of Lemma \ref{lem:projdistanceint} and Lemma \ref{lem:projdistance}}\label{app:projdistance}
%
%
%
%
%


\begin{definition}\label{defn:se-curve} A curve $\gamma: I \rightarrow \bbR^d$  is called self-expanded, if for every $t$ where 
$\gamma'(t)$ exists, we have 
$$< \gamma'(t), \gamma(t)-\gamma(u)> \ \ge 0$$ for all $u\in I$ with $u \le t$, where $<.,.>$ represents the inner product. 
In words, what this means is that $\gamma$ starting in a point $x_0$ is self expanded, if for every $x\in \gamma$ for which there exists the tangent line $\sfT$, the arc (sub-curve) $(x_0, x)$ is
contained in one of the two half-spaces, bounded by the hyperplane through
$x$ and orthogonal to $\sfT$. 
\end{definition}
For self-expanded curves the following classical result is known.
\begin{theorem}\label{thm:manselli}\cite{Manselli}
For any self-expanded curve $\gamma$ belonging to a closed bounded convex set of $\bbR^d$ with diameter $D$, its total length is at most $O(d^{d/2} D)$.
\end{theorem}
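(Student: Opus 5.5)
The plan is to follow the classical Manselli--Pucci strategy. I would turn the self-expandedness condition into a monotonicity statement, and then bound the length by a potential that cannot grow by more than $O(D)$ for each ``direction class'' of tangents. The number of direction classes is what produces the $d^{d/2}$ factor.

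\emph{Step 1: monotone distances.} Parametrize $\gamma$ by arc length on $I=[0,\ell]$. For fixed $u$ and almost every $t\ge u$,
$$\frac{d}{dt}\tfrac12\|\gamma(t)-\gamma(u)\|^2=\langle\gamma'(t),\gamma(t)-\gamma(u)\rangle\ge 0$$
by Definition \ref{defn:se-curve}. So $t\mapsto\|\gamma(t)-\gamma(u)\|$ is nondecreasing on $[u,\ell]$ for every $u$. A second consequence of the definition is that at each $t$ the whole past arc $\gamma([0,t])$, and hence its convex hull $K_t$, lies in the half-space $\{x:\langle\gamma'(t),x-\gamma(t)\rangle\le 0\}$. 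In other words, $\gamma(t)$ is an exposed point of $K_t$ in the direction $\gamma'(t)$.

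\emph{Step 2: split by tangent direction.} Cover the unit sphere $\mathbb{S}^{d-1}$ by $N_d$ caps $C_1,\dots,C_{N_d}$ of small fixed angular radius $\theta$ (say $\theta=\pi/8$), centred at unit vectors $e_j$. A standard volume/covering argument gives $N_d\le (c\sqrt d)^d=O(d^{d/2})$ up to $c^d$ factors, if one works with the cap size implicit in Manselli's constant. Let $I_j=\{t:\gamma'(t)\in C_j\}$. It then suffices to show $|I_j|=O(D)$ for each $j$.

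\emph{Step 3: length in one class.} I would use the potential $\phi_j(t)=\max_{s\le t}\langle e_j,\gamma(s)\rangle$, the support function of $K_t$ in direction $e_j$. It is nondecreasing and takes values in an interval of length at most $D$, since $\gamma$ lies in a set of diameter $D$. The goal is to show that on $I_j$ it increases at rate at least $c|\gamma'|=c$. At $t\in I_j$ we know $K_t\subset\{\langle\gamma'(t),x-\gamma(t)\rangle\le 0\}$ with $\gamma'(t)$ within angle $\theta$ of $e_j$. If $\gamma(t)$ nearly attains $\phi_j(t)$, then moving along $\gamma'(t)$ raises $\phi_j$ at rate $\cos\theta$. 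Integrating would give $|I_j|\cos\theta\le D$, and summing over $j$ gives $\ell\le N_dD/\cos\theta=O(d^{d/2}D)$, which is Theorem \ref{thm:manselli}.

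\emph{Main obstacle.} The tangent-half-space condition only controls the exact direction $\gamma'(t)$, not $e_j$. Lateral parts of $K_t$ may attain $\phi_j(t)$ far from $\gamma(t)$, so that moving along $\gamma'(t)$ does not increase $\phi_j$ at all. I would handle this with the monotone-distance property from Step 1, via a ``restart'' argument. Split $I_j$ into maximal stretches during which $\gamma(t)$ lies within $O(\theta D)$ of the $e_j$-supporting face, where $\phi_j$ does grow at rate $\ge\cos\theta$. Then show that each time the curve re-enters class $j$ after leaving this face, the distances from a fixed earlier anchor have increased by a definite amount. Since these distances are bounded by $D$, the number of restarts, weighted by the lengths involved, stays $O(D)$.

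If this bookkeeping fails, my fallback is the mean-width potential $\int_{\mathbb{S}^{d-1}}h_{K_t}(v)\,d\sigma(v)$. Here one would show, again using the exposed-point property together with monotone distances, that it grows at rate at least $c_d=\sigma(\text{cap of radius }\theta)\gtrsim d^{-d/2}$. That yields the same $O(d^{d/2}D)$ bound, because the mean width is at most $D$.
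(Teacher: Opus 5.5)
The weak point is Step 3, and that step carries the whole theorem. Steps 1 and 2 are correct routine reductions; for fixed $\theta$ the cap count is in fact $e^{O(d)}$. The paper does not prove this statement at all, it quotes it from Manselli--Pucci.

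\textbf{The potential in Step 3 does not grow.} The running maximum $\phi_j(t)=\max_{s\le t}\langle e_j,\gamma(s)\rangle$ increases only at times when $\langle e_j,\gamma(t)\rangle=\phi_j(t)$. Self-expansion makes $\gamma(t)$ extreme in the direction $\gamma'(t)$, not in $e_j$. So $\gamma'(t)\in C_j$ by itself gives no growth of $\phi_j$, as you note.

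\textbf{The restart repair does not close this.} It is false that $\phi_j$ grows at rate $\cos\theta$ merely because $\gamma(t)$ is within $O(\theta D)$ of the $e_j$-supporting face. At best, $\langle e_j,\gamma\rangle$ catches up with $\phi_j$ after climbing $O(\theta D)$. Each re-entry into class $j$ can therefore waste length of order $\theta D$ with no increase of the potential, and you must bound the number of re-entries (or the total wasted length). Your claim that each re-entry raises the distance from a fixed anchor ``by a definite amount'' is neither quantified nor linked to the length of the following stretch. Distances from one anchor increase by at most $D$ in total, so infinitely many re-entries with tiny increments are not excluded. What you need is an estimate of the form ``length of a stretch $\lesssim$ increase of a bounded monotone quantity'', and that is essentially the theorem itself.

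\textbf{The mean-width fallback has the same hole.} Up to normalization, the growth rate of $W(K_t)$ at time $t$ is $\int_{N_{K_t}(\gamma(t))}\langle v,\gamma'(t)\rangle_+\,d\sigma(v)$. Here $N_{K_t}(\gamma(t))$ is the normal cone of the past hull at the current point. The exposed-point property only puts $\gamma'(t)$ in this cone. It is compatible with $\gamma(t)$ lying in the relative interior of a facet of $K_t$, at distance $r$ from the facet's relative boundary, where the cone is a single ray. In that case:
\begin{itemize}
\item a straight piece of length $L$ in direction $\gamma'(t)$ raises $h_{K_t}(v)$ only when $\tan\angle(v,\gamma'(t))<L/r$;
\item hence $W$ increases by only $O(L^d/r^{d-1})=o(L)$.
\end{itemize}

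A lower bound $c_d$ on the rate, pointwise or amortized, therefore needs a genuinely quantitative use of the full monotone-distance structure. One example is the bisector inclusions $\gamma([0,s])\subseteq\{x:\|x-\gamma(s)\|\le\|x-\gamma(t)\|\}$ for $s<t$, used to show that such flat configurations carry little length. That geometric lemma is the heart of the Manselli--Pucci argument and of its nonsmooth version by Daniilidis, David, Durand-Cartagena and Lemenant. Your proposal does not supply it. As written, it is a correct reduction plus an accurate description of the obstacle, not a proof.
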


\begin{proof}(Proof of Lemma \ref{lem:projdistanceint})
From Definition \ref{defn:projectioncurve}, the projection curve is 
$${\underline \sigma}=\{(\sigma_1,\sigma_2), (\sigma_2,\sigma_3), \dots, (\sigma_{T-1},\sigma_T)\}.$$ Let the reverse curve be ${\underline r} = \{r_t\}_{t=0, \dots, T-2}$, where $r_t = (\sigma_{T-t}, \sigma_{T-t-1})$. Thus we are reading ${\underline \sigma}$ backwards and calling it ${\underline r}$. Note that since $\sigma_{t}$ is the projection of $\sigma_{t-1}$ on $K_t$, each piece-wise linear segment $(\sigma_t, \sigma_{t+1})$ is a straight line and hence differentiable except at the end points. Moreover, since each $\sigma_t$ is obtained by projecting $\sigma_{t-1}$ onto $K_t$ and $K_{t+1}\subseteq K_t$, we have that the projection hyperplane 
$F_t$ that passes through $\sigma_t=\Pi_{K_t}(\sigma_{t-1})$ and is perpendicular to $\sigma_t - \sigma_{t-1}$ separates the two sub curves $\{(\sigma_1,\sigma_2), (\sigma_2,\sigma_3), \dots, (\sigma_{t-1},\sigma_t)\}$ and $\{(\sigma_t,\sigma_{t+1}), (\sigma_{t+1},\sigma_{t+2}), \dots, (\sigma_{T-1},\sigma_T)\}$.

Thus, we have that 
for each segment $r_\tau$, at each point where it is differentiable, the curve $r_1, \dots r_{\tau-1}$ lies on one side of the hyperplane that passes through the point and is perpendicular to $ r_{\tau+1}$. Thus, we conclude that curve ${\underline r}$ is self-expanded.


As a result, Theorem \ref{thm:manselli} implies that the length of ${\underline r}$ or  ${\underline \sigma}$ is at most $$O(d^{d/2} \text{diameter}(K_1)).$$ 
\end{proof}

Next, we complete the proof of  Lemma \ref{lem:projdistance}.
Let $\cB(x,r)$ be a ball of radius $r$ centered at $x$. Then, for proving Lemma \ref{lem:projdistance}, 
we define $K_t = S_{t_{k}+t}\cap \cB(x_k^\star, 2 \delta_k)$ for $t=1, \dots, t_{k+1} - t_{k} -1$. 
From Lemma \ref{lem:phase-confinement}, we know that each $x_{t_{k}+t}$ belongs to $K_1=S_{t_{k}+1}\cap \cB(x_k^\star, 2 \delta)$ and  $x_{t_{k}+t} =  \Pi_{K_{t}}(x_{t_{k}+t-1})$ for $t=1, \dots, t_{k+1} - t_{k} -1$. Thus, applying Lemma \ref{lem:projdistanceint}, we get that the total movement cost $\sum_{\tau=t_k+1}^{t_{k+1}-1}||x_{\tau}-x_{\tau-1}||$ just before the jump to $x_{t_{k+1}*}$ 
is at most $O(2d^{d/2} \delta_k)$

%

\section{Generic Solution for Problem \eqref{eq:lincost} under Dynamic Constraints}\label{sec:ncbclincost} 
\subsection{$\sfp=1$}
As shown in \cite{Sellke2023}, problem \eqref{eq:lincost} when $\sfp=1$ is equivalent to a NCBC problem in $d+1$ dimensions as follows.
For each time step $t \ge 1$, the {\bf work function} $w_t: \mathbb{R}^d \to \mathbb{R}$ is defined recursively:
\begin{equation*}
    w_t(x) = \inf_{y \in \mathbb{R}^d} \left( w_{t-1}(y) + \|x - y\| \right) + f_t(x)
\end{equation*}
with the base case $w_0(x) = 0$ (or the indicator of the initial set).
The problem is lifted to the space $ \mathbb{R}^{d+1}$, where a point is denoted by the pair $(x, h)$ with $x \in \mathbb{R}^d$ and $h \in \mathbb{R}$.
The sequence of convex sets $\{K_t\}_{t \ge 1}$ is defined as the sequence of \textbf{epigraphs} of the work functions:
\begin{equation*}
    K_t = \text{epi}(w_t) = \left\{ (x, h) \in \mathbb{R}^{d+1} \mid w_t(x) \le h \right\}.
\end{equation*}

Due to the non-negativity of the costs $f_t$, the work function is strictly non-decreasing over time ($w_t(x) \ge w_{t-1}(x)$ for all $x$). Consequently, the epigraphs form a nested sequence of convex bodies:
\begin{equation*}
    K_t \subseteq K_{t-1} \subset \mathbb{R}^{d+1}
\end{equation*}
A NCBC chasing algorithm is applied to the sequence $\{K_t\}$ in $\mathbb{R}^{d+1}$ to generate a sequence of points $z_t = (x_t, h_t) \in K_t$. The sequence $x_t \in \mathbb{R}^d$ constitutes the solution to  problem \eqref{eq:lincost} with $\sfp=1$. The challenge, however, is that finding work functions $w(.)$  is challenging, and thus, in prior work, simple algorithms with bounded competitive ratios have been derived for solving \eqref{eq:lincost} when $d=1$ for convex $f_t$'s in \cite{bansal20152}, and for general $d$ when $f_t$'s are strongly convex and smooth \cite{argue2020dimension} when $S_t$'s do not change over time.

The same correspondence can be made even under dynamic constraints by suitably changing the work function definition as follows. 
The work function $w_t: \mathbb{R}^d \to \mathbb{R} \cup \{\infty\}$ is defined recursively as:
\begin{equation*}
    w_t(x) = 
    \begin{cases} 
        \displaystyle \inf_{y \in \text{dom}(w_{t-1})} \left( w_{t-1}(y) + \|x - y\| \right) + f_t(x) & \text{if } x \in S_t \\
        \infty & \text{if } x \notin S_t 
    \end{cases}
\end{equation*}

The body $K_t \subset \mathbb{R}^{d+1}$ is defined as the epigraph of $w_t$ restricted to the feasible cylinder:
\begin{equation*}
    K_t = \text{epi}(w_t) = \left\{ (x, h) \in S_t \times \mathbb{R} \mid w_t(x) \le h \right\}
\end{equation*}
Thus, principally one can map the problem \eqref{eq:lincost} with dynamic constraints  with $\sfp=1$ to a NCBC problem, and use the best known algorithms for solving NCBC to get competitive ratio bound of $O(d+1)$, however, the underlying computational challenges remain.

Thus, in Section \ref{sec:lincostresults}, we consider simple algorithms for solving \eqref{eq:lincost} under dynamic constraints and derive bounds on their competitive ratio results for special cases. In particular, for $d=1$, in Subsection \ref{sec:5comp} we show that a simple algorithm has a  competitive ratio of $5$ for solving problem \eqref{eq:lincost} with dynamic constraints when $f_t$'s are convex. However,  there appears no simple algorithm with bounded competitive ratio for solving \eqref{eq:lincost}  under dynamic constraints with $d\ge 2$  even when $f_t$ are strongly convex and smooth, and remains a challenging open question. Next, in Subsection \ref{sec:alphasharpcr}, we show that if all $f_t$'s are convex and $\alpha$-sharp, then the greedy algorithm $\cG$ that chooses $x_t=x_t^\star=\arg \min_{x\in S_t} f_t(x)$ has a competitive ratio of $\frac{12}{\alpha}$ for solving \eqref{eq:lincost} with dynamic constraints independent of $d$.

\subsection{$\sfp=2$}\label{sec:p2}
With $\sfp=2$, it is known that if $f_t$'s are convex, the competitive ratio for any online algorithm to solve \eqref{eq:lincost} is unbounded \cite{chen2018smoothed}. Thus, the non-trivial regime is when $f_t$'s are $m$-strongly convex.
When there is no constraint i.e. $x_t \in \bbR^d$, greedy online balanced descent (OBD), regularized OBD  \cite{goel2019beyond} and proximal algorithm \cite{zhang2021revisiting} are known to achieve the optimal 
competitive ratio of $O\left(\frac{1}{\sqrt{m}}\right)$. While enforcing constraints $x_t\in S_t$ where $S_t$'s are changing over time, the analysis for only the proximal algorithm (where at time $t$ proximal step is solved while enforcing the constraint $x_t\in S_t$) \cite{zhang2021revisiting} goes through as is and delivers the optimal competitive ratio guarantee.
\section{Proof of Theorem \ref{thm:cr5}}\label{app:proofcr5}
For algorithm $\cA_B$, we start by  noting the following two important properties with respect to $\cC_t(z_t)$.


\begin{lemma}[Topology of Reachable Component]\label{lem:topology_1d}
    Let $a=z_t$. The reachable component $\mathcal{C}_t(a)$ is a closed interval $[L, R]$ such that $L \le a \le R$.
    Consequently, if the algorithm chooses $x_t = b$ with $b \ge a$, then the entire interval $[a, b]$ is $\cF_t$-feasible (i.e., $[a, b] \subseteq \mathcal{F}_t$).
\end{lemma}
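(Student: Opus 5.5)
The plan is to show that $\mathcal{F}_t$ is closed, that its connected component through $a=z_t$ is an interval containing $a$, and that this interval is bounded. The second claim of the statement then follows because intervals are convex.

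\emph{Step 1: closedness.} Write $\mathcal{F}_t = S_t \cap \{x : |x-a| - f_t(x) \le 0\}$. Since $f_t$ is convex on $\mathbb{R}$, it is continuous. Hence $g(x) := |x-a| - f_t(x)$ is continuous, and its sublevel set $\{g\le 0\}$ is closed. In $d=1$, $S_t$ is a closed interval (closed and convex), so $\mathcal{F}_t$ is closed. Note that $a\in\mathcal{F}_t$, because $a=\Pi_{S_t}(x_{t-1})\in S_t$ and $g(a) = -f_t(a)\le 0$.

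\emph{Step 2: the component is an interval containing $a$.} By definition (D3), $\mathcal{C}_t(a)$ consists of all $x$ such that the segment between $x$ and $a$ lies in $\mathcal{F}_t$. Take $x\in\mathcal{C}_t(a)$ and $y$ between $a$ and $x$. Then the segment from $a$ to $y$ is contained in the segment from $a$ to $x$, which lies in $\mathcal{F}_t$, so $y\in\mathcal{C}_t(a)$. Hence $\mathcal{C}_t(a)$ is star-shaped about $a$, and in one dimension a star-shaped set is an interval containing $a$. Define $L=\inf \mathcal{C}_t(a)$ and $R=\sup\mathcal{C}_t(a)$.

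\emph{Step 3: the interval is closed and bounded.} Boundedness holds because $\mathcal{C}_t(a)\subseteq S_t\subseteq\mathcal{X}$, which has finite diameter. For closedness, suppose $x_n\in \mathcal{C}_t(a)$ with $x_n\uparrow R$. Then $[a,R)\subseteq \mathcal{F}_t$, and since $\mathcal{F}_t$ is closed, $R\in\mathcal{F}_t$. So $[a,R]\subseteq\mathcal{F}_t$ and $R\in\mathcal{C}_t(a)$. The symmetric argument gives $L\in\mathcal{C}_t(a)$. Thus $\mathcal{C}_t(a)=[L,R]$ with $L\le a\le R$.

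\emph{Step 4: the consequence.} If $x_t=b\ge a$ with $b\in\mathcal{C}_t(a)$, then $a\le b\le R$. Hence $[a,b]\subseteq[L,R]=\mathcal{C}_t(a)\subseteq\mathcal{F}_t$. Here we use that $x_t$ is chosen as a minimizer over $\mathcal{C}_t(a)$, and that this minimizer exists because $f_t$ is continuous on the compact interval $[L,R]$.

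The only point needing care is Step 3, namely checking that closedness of $\mathcal{F}_t$ passes to the component, which rests on the continuity of convex $f_t$. Everything else is elementary.
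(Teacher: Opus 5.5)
Your proof is correct and follows the same route as the paper. The paper simply reads off from the definition of $\mathcal{C}_t(a)$ that it is an interval containing $a$, and then uses $[a,b]\subseteq[L,R]=\mathcal{C}_t(a)\subseteq\mathcal{F}_t$; your Steps 1--3 additionally justify that this interval is closed and bounded, via continuity of the convex $f_t$, closedness of $S_t$, and boundedness of $\mathcal{X}$, which the paper takes for granted.
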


\begin{proof}
    With $a=z_t$, $\cC_t(a)= \{x\in \cF_t : [x,a]\subseteq \cF_t \ \text{or} \ [a,x]\subseteq \cF_t\}$ and hence an interval $[L, R]$ containing $a$.
    

    If $b \in \mathcal{C}_t(a)$  then $b \in [L, R]$. 
    Finally, since $\mathcal{C}_t(a) \subseteq \mathcal{F}_t$, and if $b \ge a$, we have $[a, b] \subseteq \mathcal{F}_t$.
\end{proof}

Recall that $\cA_B$ chooses action $x_t = \arg\min_{x\in \cC_t(z_t)}f_t(x)$, and if the minimizer is not unique, it chooses the one that is closest to $x_{t-1}$.
\begin{lemma}[Global Monotonicity from Local Descent]\label{lem:global_monotonicity}
    Let $a = z_t$ and $b = x_t$.
    \begin{itemize}
        \item If $b > a$, then $f_t$ is monotonically non-increasing on the feasible half-line $(-\infty, b] \cap S_t$.
        \item If $b < a$, then $f_t$ is monotonically non-decreasing on the feasible half-line $[b, \infty) \cap S_t$.
    \end{itemize}
\end{lemma}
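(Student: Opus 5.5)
The plan is to argue by contradiction using convexity of $f_t$ in one dimension, together with the fact that $b=x_t$ minimizes $f_t$ over the interval $\mathcal{C}_t(a)=[L,R]$ from Lemma \ref{lem:topology_1d}. We treat the case $b>a$; the case $b<a$ is symmetric under the reflection $x\mapsto -x$.

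\emph{Step 1: strict descent from $a$ to $b$.} Since $b\in[L,R]$ minimizes $f_t$ over $[L,R]$ and $a\in[L,R]$, we have $f_t(b)\le f_t(a)$. If $f_t(b)=f_t(a)$, then $a$ would also be a minimizer. Because $a=z_t=\Pi_{S_t}(x_{t-1})$ is the closest point of $S_t$ to $x_{t-1}$, and in $d=1$ every point $b\ne a$ of $S_t$ is strictly farther from $x_{t-1}$ than $a$, the tie-breaking rule would select $a$ instead of $b$. This contradicts $b>a$. Hence $f_t(b)<f_t(a)$.

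\emph{Step 2: monotonicity on $(-\infty,b]\cap S_t$.} The set $(-\infty,b]\cap S_t$ is an interval, since $S_t$ is an interval. Suppose $f_t$ fails to be non-increasing there. Then there are $y<y'\le b$ in this set with $f_t(y)<f_t(y')$.
\begin{itemize}
\item If $y'\le a$, then $a\le b$ lies to the right of $y'$. A convex function on a line that increases from $y$ to $y'$ must be non-decreasing on $[y',\infty)$, which gives $f_t(b)\ge f_t(a)$ and contradicts Step 1.
\item If $a<y'\le b$, compare the three points $y$, $y'$, $b$. Convexity with $f_t(y)<f_t(y')$ forces $f_t$ to be non-decreasing to the right of $y'$, so $f_t(b)\ge f_t(y')$. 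Combining this with Step 1, $f_t(y')\le f_t(b)<f_t(a)$. Since $y'\in[a,b]\subseteq\mathcal{C}_t(a)$, minimality of $b$ also gives $f_t(b)\le f_t(y')$, so $f_t(y')=f_t(b)$.
\end{itemize}

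\emph{The main obstacle} is the second sub-case. There the convexity argument only shows that $f_t$ is non-decreasing to the right of $y'$ and non-increasing on $[a,y']$. That leaves the possibility that $f_t$ is flat on $[y',b]$ while rising strictly from $y$ to $y'$. This in turn requires $y<a$, because on $[a,y']$ the function is non-increasing.

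To close this gap, note that $f_t(y)<f_t(y')=\min_{[L,R]}f_t$ forces $y\notin[L,R]$, so $y<L\le a$. So a strictly smaller value of $f_t$ exists outside the reachable component, to the left of $a$, while the value at $a$ exceeds it. The function therefore decreases from $y$ to some point and then increases up to $a$, and then decreases again toward $b$ by Step 1. That shape contradicts convexity: $f_t(y)<f_t(y')$ together with $y<a<y'$ would make $f_t(a)\le\max(f_t(y),f_t(y'))=f_t(y')<f_t(a)$.

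Thus every case yields a contradiction, and $f_t$ is non-increasing on $(-\infty,b]\cap S_t$. I expect that writing the flat-region case cleanly, with these chained convexity inequalities, will be the main technical care.
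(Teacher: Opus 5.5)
Your proof is correct, but it is organized differently from the paper's and relies on an ingredient the paper does not need.

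The paper argues directly. Minimality of $b$ on $\mathcal{C}_t(a)\supseteq[a,b]$ gives only the weak inequality $f_t(b)\le f_t(a)$. The chord-slope inequality $\frac{f_t(a)-f_t(x)}{a-x}\le\frac{f_t(b)-f_t(a)}{b-a}\le 0$ then gives $f_t(x)\ge f_t(a)$ for $x<a$, hence monotonicity on $(-\infty,a]\cap S_t$. A convex function minimized at the right endpoint of $[a,b]$ is non-increasing there. The two pieces are glued at $a$.

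You argue by contradiction, and you first use the tie-breaking rule of $\mathcal{A}_B$ to upgrade to the strict inequality $f_t(b)<f_t(a)$. Your Step 1 is valid, since $a=\Pi_{S_t}(x_{t-1})$ is the unique closest point of $S_t$ to $x_{t-1}$.

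You need that strictness only because you use the weak consequences of convexity: ``non-decreasing to the right of $y'$'' and $f_t(a)\le\max(f_t(y),f_t(y'))$. The strict versions make both the tie-breaking appeal and your ``main obstacle'' disappear:
\begin{itemize}
\item If $f_t(y)<f_t(y')$ with $y<y'<w$, then $f_t(w)>f_t(y')$. So in your second sub-case $y'<b$ immediately contradicts minimality of $b$ on $[a,b]$. This forces $y'=b$, and the flat region never arises.
\item With $y'=b$ and $y<a<b$, the strict convex combination gives $f_t(a)<f_t(b)$, contradicting $f_t(b)\le f_t(a)$.
\item Your first sub-case closes the same way, via slope$(a,b)\ge$ slope$(y,y')>0$.
\end{itemize}

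What each approach buys: the paper's route is shorter and proves the lemma for \emph{every} minimizer $b>a$ of $f_t$ over $\mathcal{C}_t(a)$. Yours is tied to the specific tie-breaking rule of $\mathcal{A}_B$. It is still a legitimate proof for the algorithm as defined.

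One small point: the narrative sentence about the function decreasing from $y$, increasing up to $a$, then decreasing again is not justified as stated. It is also unnecessary, because the quasi-convexity inequality that follows already closes that case rigorously.
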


\begin{proof}
    Assume $b > a$. By Lemma \ref{lem:topology_1d}, $[a, b] \subseteq \mathcal{C}_t(a)$.
    Since $b$ minimizes $f_t$ on $\mathcal{C}_t(a)$, we must have $f_t(b) \le f_t(a)$.
    
    Since $f_t$ is convex, for any $x \in S_t$ with $x < a$:
    \[ \frac{f_t(a) - f_t(x)}{a - x} \le \frac{f_t(b) - f_t(a)}{b - a} \]
    The RHS is $\le 0$ because $f_t(b) \le f_t(a)$ and $b > a$.
    Thus $f_t(a) - f_t(x) \le 0 \implies f_t(x) \ge f_t(a) \ge f_t(b)$.
    
    This establishes monotonicity on $(-\infty, a]$. For the interval $[a, b]$, since $b$ is the minimizer of a convex function on $[a, b]$, $f_t$ must be non-increasing on $[a, b]$ as well.
    Combining these implies $f_t$ is non-increasing on $(-\infty, b] \cap S_t$. The other case follows symmetrically.
\end{proof}

\subsection{Competitive Analysis}

Let us denote $\cA_B$'s and $\opt$'s actions at time step $t$ by $x_t$ and $y_t$, respectively. To show competitiveness, we will be using a potential function $\Phi_t = |x_t-y_t|$ that captures the distance between $\cA_B$'s and $\opt$'s actions. We also denote the change in potential at timestep $t$ as $\Delta\Phi_t = \Phi_t- \Phi_{t-1}$. Note that $\Phi_t\ge0$ for all $t$ and $\Phi_0 = 0$ as $x_0 = y_0$.

\begin{proof}[Proof of Theorem \ref{thm:cr5}]
    We analyze algorithm $\cA_B$ by breaking down each time step $t$ into two virtual steps with the following definitions to efficiently count the movement cost $\cA_B$ by first moving to $z_t$ from $x_{t-1}$ and then from $z_t$ to $x_t$.
        \begin{center}
    \begin{tabular}{|p{0.45\textwidth}|p{0.45\textwidth}|}
        \hline
        \textbf{Virtual Step 1:} & \textbf{Virtual Step 2:} \\
        \hline
        \vspace{-0.5em}
        \begin{itemize}
            \item Cost function: $f_t^1 \equiv 0$
            \item Constraint set: $S_t^1 = S_t$
            \item $\mathcal{A}$'s action: $x_t^1 = z_t \in S_t$
            \item $\opt$'s action: $y_t^1 \in S_t$
            \item $\Phi_t^1 = |x_t^1-y_t^1|$
            \item $\Delta\Phi_t^1 = \Phi_t^1 - \Phi_{t-1}^2$
        \end{itemize}
        &
        \vspace{-0.5em}
        \begin{itemize}
            \item Cost function: $f_t^2 = f_t$
            \item Constraint set: $S_t^2 = S_t$
            \item $\mathcal{A}$'s action: $x_t^2 = x_t \in S_t$
            \item $\opt$'s action: $y_t^2 = y_t\in S_t$
             \item $\Phi_t^2 = |x_t^2-y_t^2|$
            \item $\Delta\Phi_t^2 = \Phi_t^2 - \Phi_{t}^1$
        \end{itemize} \\
        \hline
    \end{tabular}
\end{center}   
Let $\textsf{Lin}_{\cA_B}^i(t)$ and $\textsf{Lin}_{\opt}^i(t)$ be the cost of $\cA_B$ and $\opt$ in the $i^{th}, i=1,2$ virtual step at time $t$. 
Then, $$\textsf{Lin}_{\cA_B}^1(t) = f_t^1(z_t)+ |z_t-x_{t-1}| = 0 + |z_t-x_{t-1}|,$$ and 
$$\textsf{Lin}_{\cA_B}^2(t) = f_t^2(x_t)+ |x_t-z_t| = f_t(x_t) + |x_t-z_{t}|,$$ which results in the total cost incurred by $\cA_B$ across the two virtual steps as
     \begin{equation}\label{eq:costABub}
\textsf{Lin}_{\cA_B}^1(t)+\textsf{Lin}_{\cA_B}^2(t)=|z_t-x_{t-1}|+(f_t(x_t)+|x_t-z_t|) \ge (f_t(x_t)+|x_t-x_{t-1}|) = \textsf{Lin}_{\cA_B}(t).
\end{equation}
    Similarly, 
    $$\textsf{Lin}_\opt^1(t) = f_t^1(y_t^1)+ |y_t^1-y_{t-1}| = 0 +|y_t^1-y_{t-1}|,$$ and 
$$\textsf{Lin}_\opt^2(t) = f_t^2(y_t^2)+ |y_t^2-y_t^1| = f_t(y_t)+ |y_t-y_t^1|.$$

    If $y_t^1 = y_t$, it is clear that $$\textsf{Lin}_\opt^1(t) + \textsf{Lin}_\opt^2(t) = \textsf{Lin}_\opt(t) = f_t(y_t)+ |y_t-y_{t-1}|.$$ 
    Thus, if $y_t^1 \in S_t$ is defined to be the choice made to minimize $\textsf{Lin}_\opt^1(t) + \textsf{Lin}_\opt^2(t)$, then 
    \begin{equation}\label{eq:costoptlb}
    \textsf{Lin}_\opt^1(t) + \textsf{Lin}_\opt^2(t) \le \textsf{Lin}_\opt(t).
    \end{equation}
    \begin{lemma}\label{lem:1d_per_step}
        At each timestep $t$, 
        \[
        \sum_{i=1,2} \textsf{Lin}_{\cA_B}^i(t) + 3 \sum_{i=1,2} \Delta\Phi_t^i \le 5\sum_{i=1,2}\textsf{Lin}_{\opt}^i(t).
        \]
    \end{lemma}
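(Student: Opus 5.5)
We prove the two virtual-step inequalities separately and add them:
\[
\textsf{Lin}_{\cA_B}^1(t) + 3\Delta\Phi_t^1 \le 3\,\textsf{Lin}_\opt^1(t),
\qquad
\textsf{Lin}_{\cA_B}^2(t) + 3\Delta\Phi_t^2 \le 5\,\textsf{Lin}_\opt^2(t).
\]
Since all costs are nonnegative, the first bound also gives the weaker constant $5$, so adding them yields Lemma \ref{lem:1d_per_step}.

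\textbf{Virtual step 1.} Only movement is charged: $\cA_B$ goes from $x_{t-1}$ to $z_t=\Pi_{S_t}(x_{t-1})$, and $\opt$ goes from $y_{t-1}$ to $y_t^1\in S_t$. In one dimension $S_t$ is an interval, so projection is non-expansive and moves toward every point of $S_t$. Because $y_t^1\in S_t$,
\[
|z_t-y_t^1| \le |x_{t-1}-y_t^1| - |x_{t-1}-z_t|.
\]
Indeed, $z_t$ lies on the segment between $x_{t-1}$ and $y_t^1$ whenever $x_{t-1}\notin S_t$, and $z_t=x_{t-1}$ otherwise. The triangle inequality gives $|x_{t-1}-y_t^1|\le \Phi_{t-1}^2+|y_t^1-y_{t-1}|$. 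Combining,
\[
|z_t-x_{t-1}|+\Delta\Phi_t^1 \le |y_t^1-y_{t-1}| = \textsf{Lin}_\opt^1(t).
\]
Multiplying the potential term by $3$ costs at most an extra $2|y_t^1-y_{t-1}|-2|z_t-x_{t-1}|$, so the first inequality holds with constant $3$.

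\textbf{Virtual step 2.} Put $a=z_t$, $b=x_t$, $y=y_t$, $y^1=y_t^1$. Then $\textsf{Lin}_{\cA_B}^2=f_t(b)+|b-a|$ and $\textsf{Lin}_\opt^2=f_t(y)+|y-y^1|$. Moving $\opt$ from $y^1$ to $y$ changes the potential by at most $|y-y^1|$, so it suffices to prove
\[
f_t(b)+|b-a| + 3\bigl(|b-y|-|a-y|\bigr) \le 5 f_t(y) + 2|y-y^1|,
\]
and in fact it suffices with $2|y-y^1|$ dropped. By symmetry take $b\ge a$; the case $b=a$ is trivial. Because $[a,b]\subseteq\cF_t$ (Lemma \ref{lem:topology_1d}), we have $b-a\le f_t(b)$, and by Lemma \ref{lem:global_monotonicity} $f_t$ is non-increasing on $(-\infty,b]\cap S_t$. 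We split on where $y$ sits.

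\emph{Case $y\le a$.} Here $|b-y|-|a-y| = b-a$. The left side is at most $f_t(b)+4(b-a)\le 5f_t(b)\le 5f_t(y)$, using monotonicity and $y\le a\le b$.

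\emph{Case $y\ge b$.} Here $|b-y|-|a-y| = -(b-a)$, so the left side is $f_t(b)-2(b-a)$. We still need $f_t(b)\le 5f_t(y)+2(b-a)$. This is the main obstacle, since $f_t$ may increase beyond $b$. The plan is to use maximality of the component $\cC_t(a)=[L,R]$. If $y\le R$, then $y\in\cC_t(a)$ and minimality gives $f_t(b)\le f_t(y)$. If $y>R$, then either $R$ is the right endpoint of $S_t$, which would contradict $y\in S_t$, or $R$ is a boundary point of $\cF_t$, where $|R-a|=f_t(R)$. In the latter case convexity together with $f_t(b)\le f_t(R)$ should give $f_t(b)\le f_t(R)=R-a\le y-a$. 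The remaining step is to bound $y-a$ by $f_t(y)$ plus terms already paid for; we would try to use that points beyond $R$ satisfy $|x-a|>f_t(x)$ only locally, and convexity of $f_t(x)-|x-a|$ on $[a,\infty)$, to show $f_t(y)\ge y-a$ fails to be violated by more than a factor absorbed in the constant $5$. If this does not close, we would fall back to charging the deficit to $3|y-y^1|$ in the combined inequality.

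\emph{Case $a<y<b$.} Here $y\in\cC_t(a)$. The left side is at most $f_t(b)+(b-a)+3(b-y)-3(y-a)$. We have $f_t(b)\le f_t(y)$, and $b-a\le f_t(b)\le f_t(y)$, so the left side is at most $2f_t(y)+3(b-a)\le 5f_t(y)$.
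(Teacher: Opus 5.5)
Your skeleton matches the paper's. You split into the two virtual steps, absorb $\opt$'s movement by the triangle inequality, and case on where $y=y_t$ lies relative to $a=z_t$ and $b=x_t$. Virtual step 1 and your cases $y\le a$ and $a<y<b$ are correct; they are the paper's Cases 3 and 2.

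The gap is the case $y\ge b$ with $y>R$, i.e.\ $\opt$ lies beyond the reachable component $\cC_t(a)=[L,R]$. You do not prove it, and neither route you sketch can work. Bounding $y-a$ by a multiple of $f_t(y)$ is false in general. Take $a=0$, $S_t=[0,M]$ and $f_t(x)=\max\{1-x,\tfrac12\}$. Then $\cC_t(a)=[0,\tfrac12]$ and $b=R=\tfrac12$, while $y=M$ has $f_t(y)=\tfrac12$ but $y-a=M$. The fallback of charging the deficit to $2|y-y^1|$ also fails. The choice $y_t^1=y_t$ is legitimate (indeed the natural one) for $\textsf{Lin}_\opt^1(t)+\textsf{Lin}_\opt^2(t)\le\textsf{Lin}_\opt(t)$, and for it that slack is zero.

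The missing idea is to split on $b$ rather than on $y$.
\begin{itemize}
\item If $b<R$, then $b$ minimizes the convex $f_t$ on an interval extending to the right of $b$. So $f_t(b+h)\ge f_t(b)$ for small $h>0$, and by convexity $f_t$ is non-decreasing on $[b,\infty)$. Hence $f_t(b)\le f_t(y)$, and your left side $f_t(b)-2(b-a)$ is at most $f_t(y)\le 5f_t(y)$, whether or not $y\le R$.
\item If $b=R$ and $R=\max S_t$, then $y\ge b$ forces $y=b$, which is immediate.
\item If $b=R$ and $R<\max S_t$, points just right of $R$ violate $|x-a|\le f_t(x)$. Continuity then gives the active constraint $f_t(b)=b-a$, so the left side equals $-(b-a)\le 0$.
\end{itemize}
This interior-versus-boundary dichotomy for $x_t$ in $\cC_t(z_t)$ is exactly the paper's Case 1. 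Your remark $f_t(b)\le f_t(R)=R-a$ already covers the boundary subcase. What you lacked for the interior subcase is monotonicity of $f_t$ beyond an interior minimizer, not any bound on $y-a$.

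The same dichotomy is needed for the case $b=a$ that you call trivial. There the left side is $f_t(a)$. For $y\notin\cC_t(a)$ you need either monotonicity of $f_t$ from $a$ toward $y$, or an active constraint at $a$, which forces $f_t(a)=0$.
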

\begin{proof}
    Using the triangle inequality, the change in potential at a virtual step $t^i$ ($i =1$ or $2$) can be expressed as,
    \[
    \Delta\Phi_t^1 = |x_t^1-y_t^1| - |x_{t-1}^2-y_{t-1}^2| \le |x_t^1-y_t^1| - ( |x_{t-1}^2-y_{t}^1| - |y_{t}^1-y_{t-1}^2|),
    \]
    \[
    \Delta\Phi_t^2 = |x_t^2-y_t^2| - |x_{t}^1-y_{t}^1| \le |x_t^2-y_t^2| - ( |x_{t}^1-y_{t}^2| - |y_{t}^2-y_{t}^1|).
    \]
    where the last term in both expressions can be counted towards the movement cost of $\opt$ in the virtual step $t^i$ using \eqref{eq:costoptlb} and noting that $y_{t}^2=y_t$ for all $t$. Let $\Tilde{\Delta}\Phi_t^1 = |x_t^1-y_t^1| - |x_{t-1}^2-y_{t}^1|$, and $\Tilde{\Delta}\Phi_t^2 = |x_t^2-y_t^2| - |x_{t}^1-y_{t}^2|$. 
    Therefore, in light of \eqref{eq:costABub} and \eqref{eq:costoptlb}, to prove Lemma \ref{lem:1d_per_step} it is enough to show that
    \[
    \sum_{i=1,2}\textsf{Lin}_{\cA_B}^i(t) + 3\sum_{i=1,2}\Tilde{\Delta}\Phi_t^i \le 5f_t(y_t).
    \]

    \noindent\textbf{Virtual Step 1 Analysis:}\\
    If $x_{t-1} \notin S_t$, the move to $z_t$ costs $|x_{t-1} - z_t|$.
    Since $y^1_t \in S_t$:
    \[ \Tilde{\Delta}\Phi_t^1 = |z_{t} - y^1_t| - |x_{t-1} - y^1_t| = - |x_{t-1} - z_t| \]
    Therefore, $\textsf{Lin}^1_{\cA_B}(t) + 3\Tilde{\Delta}\Phi_t^1  = f_t^1(z_t)+|x_{t-1} - z_t| -3|x_{t-1} - z_t| = 0 -2|x_{t-1} - z_t|\le 0$.\\
    If $x_{t-1}\in S_t$, $z_t = x_{t-1}$ and hence, $\textsf{Lin}_{\cA_B}^1(t) = f_t^1(z_t)+ \Tilde{\Delta}\Phi_t^1 = 0$.\\
    
    \noindent\textbf{Virtual Step 2 Analysis:}
    We assume $x_t \ge z_t$, but similar arguments work for the case when $x_t<z_t$. Recall that $y_t^2=y_t, x_t^2=x_t$ and $x_t^1=z_t$ for all $t$.
    \paragraph{Case 1: $\opt$ is ``ahead'' ($z_t \le x_t \le y_t$, i.e, $\cA_B$ moves towards $\opt$)}
    \begin{align}\nn
        \Tilde{\Delta} \Phi_t^2 &= |x_t-y_t| - |z_t-y_t| = -|z_t-x_t|\\ \label{eq:dummyAB1}
        \implies \textsf{Lin}_{\cA_B}^2(t) + 3\Tilde{\Delta} \Phi_t^2&= f_t(x_t) + |x_t - z_t| - 3|x_t - z_t| = f_t(x_t) - 2|x_t - z_t|
    \end{align} 
    \begin{itemize}
        \item If $x_t$ is on the boundary of $\cC_t(z_t)$, the constraint is active, i.e, $ f_t(x_t) = |x_t-z_t|$. Therefore,
        $\textsf{Lin}_{\cA_B}^2(t) + 3\Tilde{\Delta} \Phi_t^2 = -|x_t-z_t| \le 0$.
        \item If $x_t$ is in the interior of $\cC_t(z_t)$, the function $f_t$ must be non-decreasing in $[L, x_t]$ and $[x_t,R]$ (where $L$ and $R$ are the boundary points of the interval $\cC_t(z_t)$). By convexity, this implies that $x_t$ is the unconstrained local minimizer of $f_t$. Therefore, $f_t(x_t) \le f_t(y_t)$ which using \eqref{eq:dummyAB1} implies 
        $$\textsf{Lin}_{\cA_B}^2(t) + 3\Tilde{\Delta} \Phi_t^2 = f_t(x_t)- 2|x_t-z_t| \le f_t(x_t) \le 5 f_t(y_t).$$
    \end{itemize}

    \paragraph{Case 2: $\opt$ is ``in-between'' ($z_t \le y_t < x_t$, i.e,
    $\cA_B$ overshoots $\opt$)}
    In this case, we use the following trivial upper bound 
        \[ \Tilde{\Delta} \Phi_t^2 = |x_t - y_t| - |z_t - y_t| \le |x_t - z_t|. \]
Using $\cF_t$-feasibility of $x_t$ ($|x_t - z_t| \le f_t(x_t)$), we get
    \[ \textsf{Lin}_{\cA_B}^2(t) + 3\Tilde{\Delta}\Phi_t^2 \le f_t(x_t)+|x_t - z_t| + 3|x_t - z_t| \le 5 f_t(x_t).\]
    Invoking the fact that  $y_t \in [z_t, x_t]$, Lemma \ref{lem:topology_1d} implies $y_t \in \cC_t(z_t)$.
    Moreover, $x_t$ minimizes $f_t$ on $\cC_t(z_t)$. Thus, $f_t(x_t) \le f_t(y_t)$, using which we get 
        \[ \textsf{Lin}_{\cA_B}^2(t) + 3\Tilde{\Delta}\Phi_t^2  \le 5 f_t(y_t).\]

    \paragraph{Case 3: $\opt$ is ``behind'' ($y_t < z_t \le x_t$, i.e, $\cA_B$ moves away from $\opt$.)} This can occur when constraint sets are revealed such that $x_{t-1}$ is in the interior of $S_t$, which would imply $z_t=x_{t-1}$. In this case,
    \[ \Tilde{\Delta} \Phi_t^2 = |x_t - z_t| \]
    From Lemma \ref{lem:global_monotonicity}, $f_t(y_t) \ge f_t(x_t)$. From feasibility, $f_t(x_t) \ge |x_t - z_t|$.
    \[ \textsf{Lin}_{\cA_B}^2(t) + 3\Tilde{\Delta}\Phi_t^2  = f_t(x_t)+|x_t - z_t| + 3 |x_t - z_t| \le 5 f_t(x_t) \le 5 f_t(y_t)\]
    This concludes the proof of the lemma.
\end{proof}
    Summing Lemma \ref{lem:1d_per_step}'s expression across all timesteps $T$ while using telescoping series and noting that $\Phi_T^2\ge 0$, we get:
    \begin{align*}
        \sum_{t=1}^T\sum_{i=1,2}\textsf{Lin}_{\cA_B}^i(t) + 3\Delta\Phi_t^i&\le 5\sum_{t=1}^T\sum_{i=1,2}\textsf{Lin}_\opt^i(t),\\
        \stackrel{\eqref{eq:costABub} \ and \ \eqref{eq:costoptlb}}\implies \sum_{t=1}^T\textsf{Lin}_{\cA_B}(t) &\le 5\sum_{t=1}^T\textsf{Lin}_\opt(t),
    \end{align*}
    which proves the $5$-competitiveness of the proposed algorithm.
\end{proof}

\section{Proof of Theorem \ref{thm:sharp_comp}}\label{app:sharp_comp}

For any algorithm $\cA$, compared to \eqref{eq:lincombcost}, let the surrogate cost $\widehat{\textsf{Lin}}_\cG(T)$ be defined as 
\begin{equation}\label{eq:lincombcostsurrogate}
\widehat{\textsf{Lin}}_\cG(T) = \textsf{Lin}_\cA(T) - \sum_{t=1}^T v_t,
\end{equation}
where $v_t= \min_{x\in S_t} f_t(x)$. Recall that $\bX^* = \arg \min_{x\in S_t} f_t(x)$. Let $\cG$ be the greedy algorithm that always chooses $x_t\in \bX^*$ that is closest to $x_{t-1}$, starting from some point $x_0$. For simplicity of exposition, we let the actions of $\cG$ be represented by $x_t = x_t^\star \in \bX^*$. Then it follows that 
 $$\textsf{Lin}_\cG(T) \ge \textsf{Lin}_\opt(T)>\sum_{t=1}^T v_t\ge 0,$$ and we have
\begin{align*}
    \left(\textsf{Lin}_\cG(T) - \sum_{t=1}^T v_t\right)\textsf{Lin}_\opt(T) &\ge \left(\textsf{Lin}_\opt(T) - \sum_{t=1}^T v_t\right)\textsf{Lin}_\cG(T),\\
    \implies \frac{\widehat{\textsf{Lin}}_\cG(T)}{\widehat{\textsf{Lin}}_\opt(T)} &\ge \frac{\textsf{Lin}_\cG(T)}{\textsf{Lin}_\opt(T)}.
\end{align*}
Therefore, for deriving an upper bound on the competitive ratio for solving \eqref{eq:lincombcost} it is sufficient to derive an upper bound on the surrogate costs, which will accomplish in the following.

We next prove a structural lemma that is a modified version of a similar result from  \cite{pmlr-v125-argue20a}.
\begin{lemma}\label{lem:structure}
    For any three points $x,y,z \in \R^d$, at least one of the following is true,
    \begin{enumerate}
        \item $\Vert y-z\Vert - \Vert y-x\Vert \leq -\frac{1}{2}\Vert x-z\Vert.$
        \vspace{1mm}
        \item $\Vert y-z\Vert \geq \frac{1}{4}\Vert x-z\Vert.$
    \end{enumerate}
\end{lemma}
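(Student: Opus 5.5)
We prove Lemma \ref{lem:structure} by contradiction. Suppose the second alternative fails, so that $\Vert y-z\Vert < \frac{1}{4}\Vert x-z\Vert$. We will show that the first alternative then holds. This is the natural split, since the first alternative is exactly a statement that $y$ is noticeably closer to $z$ than to $x$, and that should follow whenever $y$ sits very near $z$.

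The key step is a single application of the triangle inequality on the triangle $x,y,z$:
\[
\Vert y-x\Vert \ \ge\ \Vert x-z\Vert - \Vert y-z\Vert .
\]
This gives
\[
\Vert y-z\Vert - \Vert y-x\Vert \ \le\ 2\Vert y-z\Vert - \Vert x-z\Vert .
\]
Under the standing assumption $\Vert y-z\Vert < \frac{1}{4}\Vert x-z\Vert$, the right-hand side is strictly less than
\[
\tfrac{1}{2}\Vert x-z\Vert - \Vert x-z\Vert = -\tfrac{1}{2}\Vert x-z\Vert ,
\]
which is exactly alternative 1. So at least one alternative always holds.

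The degenerate case $x=z$ needs no separate treatment. There alternative 2 reads $\Vert y-z\Vert\ge 0$, which is trivially true, so it is covered automatically.

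There is no real obstacle here. The only care needed is in choosing the constants $\frac{1}{4}$ and $\frac{1}{2}$ consistently: we need $2\cdot\frac14 - 1 \le -\frac12$, and this holds with equality. This slack is what the later potential-function argument for Theorem \ref{thm:sharp_comp} will consume. In that argument the lemma is applied with $x$ and $z$ being consecutive greedy minimizers and $y$ the $\opt$ point. Alternative 1 gives a potential drop of half of $\cG$'s movement. Alternative 2 charges $\cG$'s movement to $\alpha$-sharpness via $f_t(y)-v_t\ge \alpha\Vert y-z\Vert$, and the constants $4$, $3$, and $\frac{1}{\alpha}$ combine into the $\frac{12}{\alpha}$ bound.
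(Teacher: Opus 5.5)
Your proof is correct and follows essentially the same route as the paper: assume alternative 2 fails, apply the triangle inequality $\Vert y-x\Vert \ge \Vert x-z\Vert - \Vert y-z\Vert$, and use $\Vert y-z\Vert < \frac14\Vert x-z\Vert$ to obtain alternative 1. You rearrange to $2\Vert y-z\Vert - \Vert x-z\Vert$ rather than splitting $\frac34\Vert x-z\Vert$ into $\frac12 + \frac14$ as the paper does, which is purely cosmetic; your check of the degenerate case $x=z$ is also correct.
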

\begin{proof}
    Let us assume for some $x,y,z\in\mathbb{R}^n$ the second condition is not true, i.e.,
    \begin{equation}\label{eqn:contradiction}
        \Vert y-z\Vert < \frac{1}{4}\Vert x-z\Vert.
    \end{equation}
    From the triangle inequality, we also have,
    $$\Vert y-x\Vert \ge \Vert x-z\Vert -\Vert y-z\Vert.$$
    Using  \eqref{eqn:contradiction}, we get
    \begin{align*}
        \Vert y-x\Vert &\ge \Vert x-z\Vert - \frac{1}{4}\Vert x-z\Vert,\\
        &= \frac{1}{2}\Vert x-z\Vert + \frac{1}{4}\Vert x-z\Vert,\\
        &\ge \frac{1}{2}\Vert x-z\Vert + \Vert y-z\Vert,\\
        \implies \Vert y-z\Vert - &\Vert y-x\Vert\leq-\frac{1}{2}\Vert x-z\Vert.
    \end{align*}
    Therefore, at least one of the two conditions holds for any choice of $x,y,$ and $z$, concluding the proof.
\end{proof}
\begin{proof}[Proof of Theorem \ref{thm:sharp_comp}]
We consider a potential function $\Phi_t$ that at timestep $t$ captures the distance between $\cG$'s action $x_t$ and $\opt$'s action $y_t$, defined by
$$\Phi_t = \Vert x_t-y_t\Vert.$$
Note that $\Phi_0 = 0$ and $\Phi_t\ge 0$. We also define $\Delta\Phi_t :=  \Phi_t - \Phi_{t-1}$ as the change in potential at time $t$.

    We first use the triangle inequality to bound the change in potential as:
    \begin{align*}
    \Delta \Phi_t = \Vert x_t-y_t\Vert-\Vert x_{t-1}-y_{t-1}\Vert \le \Vert x_t-y_t\Vert- (\Vert x_{t-1}-y_t\Vert -\Vert y_t-y_{t-1}\Vert).
    \end{align*}
    Since $\Vert y_t - y_{t-1}\Vert$ accounts for the movement cost part of $\widehat{\textsf{Lin}}_\opt(t)$, to prove Theorem \ref{thm:sharp_comp} it is enough to show,
    $$\widehat{\textsf{Lin}}_\cG(t) +2\cdot\Tilde{\Delta}\Phi_t \le \frac{12}{\alpha}\cdot (f_t(y_t) - v_t) = \frac{12}{\alpha}\widehat{\textsf{Lin}}_\opt(t),$$
    where $\Tilde{\Delta}\Phi_t := \Vert x_t-y_t\Vert-\Vert x_{t-1}-y_t\Vert$, which captures the change in potential due to $\cG$'s actions. Using Lemma \ref{lem:structure} with $x = x_{t-1}, y=y_t,$ and $z=x_t$, we know that either of the following holds:
    \begin{enumerate}
        \item $\Tilde{\Delta}\Phi_t = \Vert x_t-y_t\Vert - \Vert x_{t-1}-y_t\Vert \le -\frac{1}{2}\Vert x_{t-1}-x_{t}\Vert,$ \ \text{or}
        \item $\Vert y_t-x_t\Vert\ge\frac{1}{4}\Vert x_{t-1}-x_{t}\Vert.$
    \end{enumerate}
    \paragraph{When the first condition is true, i.e.,} $\Tilde{\Delta}\Phi_t \le -\frac{1}{2}\Vert x_{t-1}-x_{t}\Vert$. Note that $\widehat{\textsf{Lin}}_\cG(t) = \Vert x_t-x_{t-1}\Vert$ and hence
    \begin{align*}
        \widehat{\textsf{Lin}}_\cG(t) + 2\cdot \Tilde{\Delta}\Phi_t&\le \Vert x_t-x_{t-1}\Vert-\Vert x_t-x_{t-1}\Vert,\\
        &=0\le \frac{12}{\alpha}\cdot(f_t(y_t)-v_t).
    \end{align*}
    \paragraph{When the second condition is true i.e., } $\Vert y_t-x_t\Vert\ge\frac{1}{4}\Vert x_{t-1}-x_{t}\Vert$. Since $x_t=x_t^\star$ for $\cG$ and $y_t\in S_t$, we use $f_t$'s $\alpha$-sharpness with unique global minimizer $x_t^\star$ for each $S_t$, $\cG$ is $\frac{12}{\alpha}$-competitive for \eqref{eq:lincombcost} with dynamic constraints.
as follows:
    \begin{align}\label{eq:greedydummy1}
        f_t(y_t) - v_t \ge \alpha \Vert y_t-x_t^\star\Vert \ge\frac{\alpha}{4}\Vert x^\star_t-x^\star_{t-1}\Vert=\frac{\alpha}{4}\widehat{\textsf{Lin}}_\cG(t).
    \end{align}
    By the triangle inequality,
    \begin{equation}\label{eq:greedydummy2}
\Tilde{\Delta}\Phi_t = \Vert x_t^\star-y_t\Vert-\Vert x_{t-1}^\star-y_t\Vert\le \Vert x_{t}^\star-x_{t-1}^\star\Vert=\widehat{\textsf{Lin}}_\cG(t).
\end{equation}

    Combining \eqref{eq:greedydummy1} and \eqref{eq:greedydummy2}, we have,
    \begin{align*}
        \widehat{\textsf{Lin}}_\cG(t) +2\cdot\Tilde{\Delta}\Phi_t \le 3\Vert x_{t}^\star-x_{t-1}^\star\Vert\le \frac{12}{\alpha}\cdot(f_t(y_t) - v_t) = \frac{12}{\alpha}\widehat{\textsf{Lin}}_\opt(t).
    \end{align*}
    This completes the proof for the bound on surrogate cost. Summing from $t=1$ to $T$ and using $\Phi_T\ge 0=\Phi_0$ proves the competitiveness. 
\end{proof}

\section{Simulations}\label{sec:sim}
In this section, we present some numerical results to compare the regret and movement cost of the three algorithms, i.e. \textsc{Frugal}, Greedy, and LSP ($\epsilon$) through simulations. To clearly show the difference between these algorithms, we consider the input instance that we used to prove Theorem~\ref{thm:lbGreedy}, i.e. the lower bound on the movement cost of the Greedy algorithm when $f$ is strongly convex. Recall that the cost function used in proof of Theorem~\ref{thm:lbGreedy} is $f(x, y) = x^2 + y^2$, which is strongly convex and smooth, and that the set $\mathcal{X}$ is a square of side $\frac{D}{\sqrt{2}}$ which is at a distance $r_0$ from the origin. The sets $S_1, S_2, \dots, S_T$ are constructed such that each minimizer $x_t^\star$ is on either side of a rectangle of width $k$ and length $\frac{D}{\sqrt{2}}$. We fix $r_0 = 1$, $D = 4$ and $\displaystyle k = \sqrt{\frac{\frac{D}{\sqrt{2}} r_0}{\frac T2 + 1}}$ throughout, as in \eqref{eq:k_value_lb}. The value of $T$ will be chosen later for the different simulations.

\subsection{Trajectory visualization}

\begin{figure}[!ht]
    \centering
    \includegraphics[width=\linewidth]{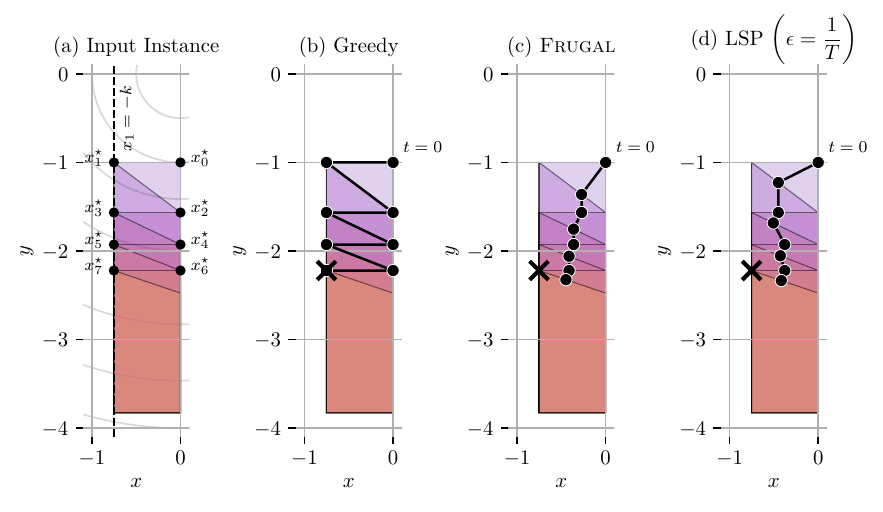}
    \caption{$(a)$ specifies the input via describing sets $S_1, \dots, S_T$, along with the respective minimizers $x_i^\star$ and the contours of the cost function $f(x) = x^2 + y^2$. $(b)$, $(c)$, and $(d)$ are respectively the trajectories of the Greedy, \textsc{Frugal}, and LSP ($\epsilon = 1/T$) algorithms. The point marked `X' is the static minimizer $x^\star_T$.}
    \label{fig:trajectories}
\end{figure}

First, we visualize the trajectories of the three algorithms for $T = 7$ in Figure~\ref{fig:trajectories}. For this particular input instance, \textsc{Frugal} always projects and never {\it jumps}, while Greedy by definition moves along optimizers. The LSP algorithm ($\epsilon = 1/T$) follows a trajectory that is intermediate between these two trajectories.

\subsection{Total Regret and Movement Cost vs T}

Next, we plot the the regret  and  movement cost for  Greedy, \textsc{Frugal}, and LSP against $T$ in Figure~\ref{fig:regrets_mc}. Each value of $T$ requires a distinct input instance, since the sets $S_1, \dots, S_T$ are picked according to the value of $T$ as described in the proof of Theorem~\ref{thm:lbGreedy}. 

\begin{figure}[!ht]
    \centering
    \includegraphics[width=\linewidth]{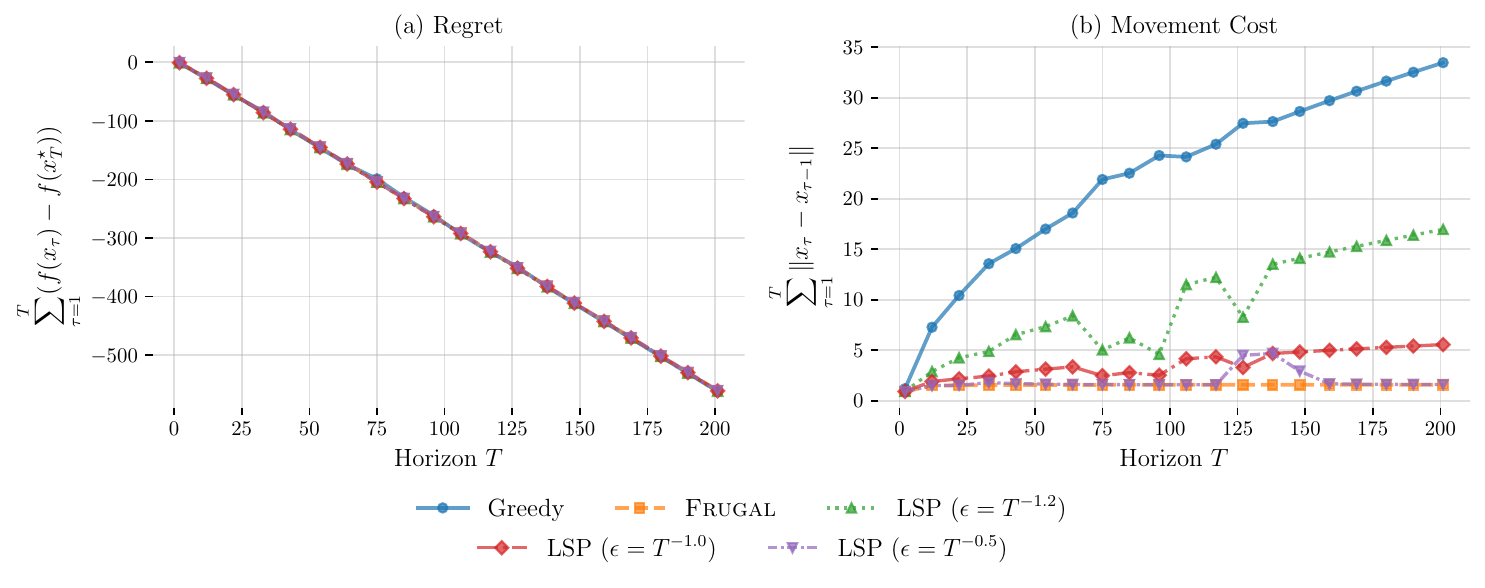}
    \caption{Regret and movement cost for the Greedy, \textsc{Frugal}, and LSP algorithms for $T$ varying linearly from $1$ to $200$.}
    \label{fig:regrets_mc}
\end{figure}

We observe that the regret for all three of the algorithms remains highly negative and that there is no significant difference between the regret of the three algorithms. The movement cost of the \textsc{Frugal} algorithm is the smallest and similar to LSP with the largest $\epsilon$ value of $\epsilon = T^{-0.5}$, while LSP algorithm with decreasing $\epsilon$ has greater movement cost. The Greedy algorithm has a movement cost of $\Omega(\sqrt{T})$ for this family of lower-bound instances in the simulations, in line with Theorem~\ref{thm:lbGreedy}.

\subsection{Cumulative Regret and Movement Cost vs $t$}

Finally, we plot the cumulative regret $\sum_{\tau = 1}^t (f(x_\tau) - f(x_t^\star))$ and cumulative movement cost $\sum_{\tau = 1}^t ||x_{\tau} - x_{\tau - 1}||$ for a fixed value of $T$ and for varying values of $t$ in Figure~\ref{fig:regret_movement_cost_small_T}. 
Thus, there is only one input instance throughout and the regret and the movement cost incurred are just due to the first $t$ actions. 

The input instance is constructed as follows. First, we generate the nested sets \(S_1,\dots,S_{T_{\mathrm{freeze}}}\) exactly as in the lower-bound instance of Theorem~\ref{thm:lbGreedy}, with the time horizon parameter set to \(T_{\mathrm{freeze}}\). We then freeze the sequence by repeatedly presenting the set $S_{T_{\mathrm{freeze}}}$, i.e. 
$S_t=S_{T_{\mathrm{freeze}}}$ for $T_{\mathrm{freeze}}\le t \le T$.
Such an input instance illustrates the nature of the {\it jump} action played by the \textsc{Frugal} algorithm and the long-term regret suffered by the three algorithms.

\begin{figure}[!ht]
    \centering
    \includegraphics[width=\linewidth]{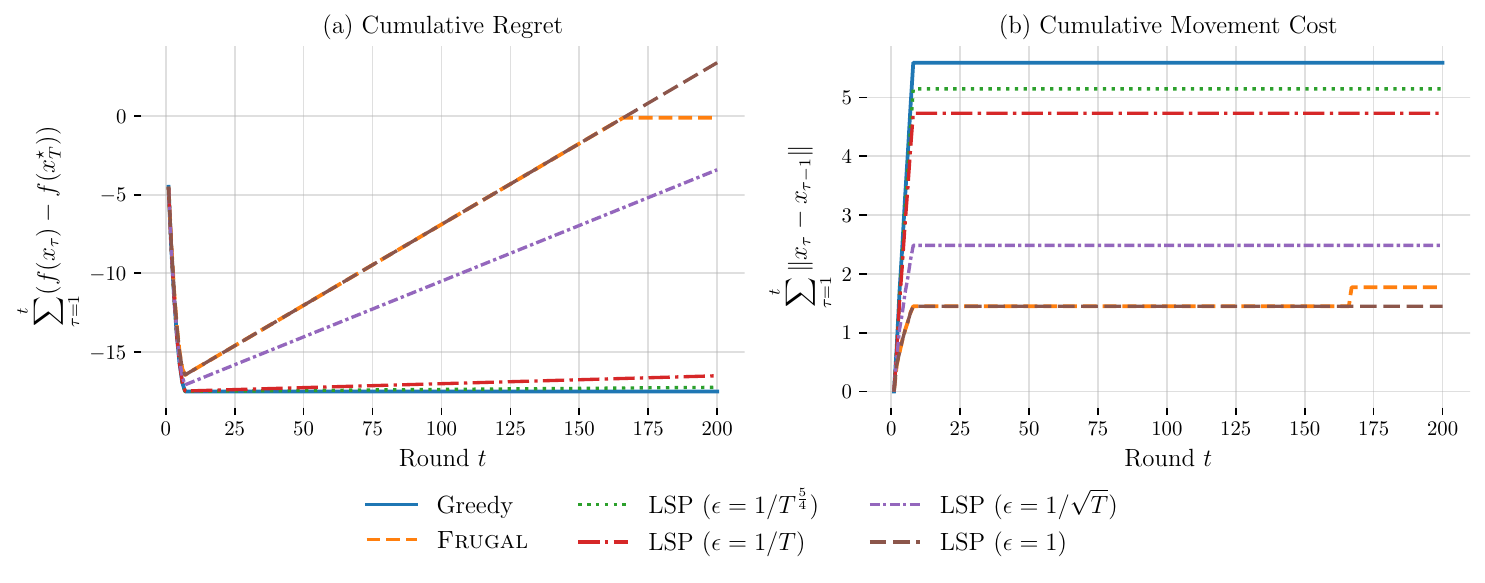}
    \caption{$(a)$ Cumulative regret and $(b)$ cumulative movement cost for various algorithms. }
    \label{fig:regret_movement_cost_small_T}
\end{figure}

We take $T_{\text{freeze}}=7$ and $T = 200$. From $t = 1$ to $t = 7$, all the algorithms incur similar negative regret and a movement cost which is the smallest for the \textsc{Frugal} algorithm and the largest for the Greedy algorithm. After the set is frozen at $t=7$, the Greedy algorithm incurs no further regret or movement cost. The action $x_7$ played at time $7$  and the optimal action $x_T^\star$ are different for both \textsc{Frugal} and LSP algorithms, and thus they incur positive regret from $t = 8$ to $t = 165$. At $t = 166$, the \textsc{Frugal} algorithm {\it jumps} to the optimizer of $f$ over set $S_7$, ensuring $0$ cumulative regret for $t > 166$. In contrast, the LSP algorithm continues to incur positive regret since it accepts positive regret.





\end{document}